\let\cite\citep
\definecolor{darkpastelgreen}{rgb}{0.01, 0.75, 0.24}%
\newcommand\Tstrut{\rule{0pt}{2.6ex}}         %
\newcommand\Bstrut{\rule[-0.9ex]{0pt}{0pt}}   %
\theoremstyle{plain}
\newtheorem{theorem}{Theorem}[section]
\newtheorem{proposition}[theorem]{Proposition}
\newtheorem{lemma}[theorem]{Lemma}
\theoremstyle{definition}
\newtheorem{assumption}[theorem]{Assumption}
\theoremstyle{remark}
\newcommand{\s}[2]{#1{\tiny$\pm$#2}}
\newcommand\blfootnote[1]{
    \begingroup
    \renewcommand\thefootnote{}\footnote{#1}
    \addtocounter{footnote}{-1}
    \endgroup
}
\newcommand{\addedtext}[1]{#1}
\title{Selective Mixup Fine-Tuning for Optimizing Non-Decomposable Objectives}
\author{Shrinivas Ramasubramanian$^{*1}$,\ Harsh Rangwani$^{*2}$,\ Sho Takemori$^{*3}$,\ Kunal Samanta$^2$, \\\textbf{Yuhei Umeda$^3$,\ R. Venkatesh Babu$^2$}\\
$^1$Fujitsu Research of India, %
$^2$Indian Institute of Science, %
$^3$Fujitsu Limited
}
\begin{document}

\maketitle

\begin{abstract}
   The rise in internet usage has led to the generation of massive amounts of data, resulting in the adoption of various supervised and semi-supervised machine learning algorithms, which can effectively utilize the colossal amount of data to train models. However, before deploying these models in the real world, these must be strictly evaluated on performance measures like worst-case recall and satisfy constraints such as fairness. We find that current state-of-the-art empirical techniques offer sub-optimal performance on these practical, non-decomposable performance objectives. On the other hand, the theoretical techniques necessitate training a new model from scratch for each performance objective. To bridge the gap, we propose \textbf{SelMix}, a selective mixup-based inexpensive fine-tuning technique for pre-trained models, to optimize for the desired objective. The core idea of our framework is to determine a sampling distribution to perform a mixup of features between samples from particular classes such that it optimizes the given objective.  We comprehensively evaluate our technique against the existing empirical and theoretically principled methods on standard benchmark datasets for imbalanced classification. We find that proposed SelMix fine-tuning significantly improves the performance for various practical non-decomposable objectives across benchmarks.
   \blfootnote{$^{*}$ denotes Equal Contribution. Code will be available here: \href{https://github.com/val-iisc/SelMix/}{github.com/val-iisc/SelMix/}.}
   \blfootnote{
     $\;$ Correspondence to \texttt{shrinivas.ramasubramanian@gmail.com, harshr@iisc.ac.in}.} 
\end{abstract}

\section{Introduction}
\vspace{-3mm}The rise of deep networks has shown great promise by reaching near-perfect performance across computer vision tasks~\cite{he2022masked, kolesnikov2020big, kirillov2023segment, girdhar2023imagebind}. It has led to their widespread deployment for practical applications, some of which have critical consequences~\cite{castelvecchi2020facial}. Hence, these deployed models must perform robustly across the entire data distribution and not just the majority part. These failure cases are often overlooked when considering only accuracy as our primary performance metric. 
Therefore, more practical metrics like Recall H-Mean~\cite{sun2006boosting}, Worst-Case (Min) Recall~\cite{narasimhan2021training, mohri2019agnostic}, etc., should be used for evaluation. However, optimizing these practical metrics directly for deep networks is challenging as they cannot be expressed as a simple average of a function of label and prediction pairs calculated for each sample~\cite{narasimhan2021training}. Optimizing such metrics with constraints is termed formally as \textbf{Non-Decomposable Objective} Optimization.

In prior works, techniques exist to optimize such non-decomposable objectives, but their scope has mainly been restricted to linear models~\cite{narasimhan2014statistical, narasimhan2015optimizing}. \citet{narasimhan2021training}, recently developed consistent logit-adjusted loss functions for optimizing non-decomposable objectives for deep neural networks. After this work in supervised setup, Cost-Sensitive Self-Training (CSST) ~\citep{rangwani2022costsensitive} extends it to practical semi-supervised learning (SSL) setup, where both unlabeled and labeled data are present. As these techniques optimize non-decomposable objectives like Min-Recall, the \emph{ long-tailed (LT) imbalanced datasets} serve as perfect benchmarks for these techniques. However, CSST pre-training on long-tailed data leads to sub-optimal representations and hurts the mean recall of the models (Fig.~\ref{fig:radar-results}). Further, these methods require re-training the model from scratch to optimize for each non-decomposable objective, which decreases applicability.
\begin{figure*}[t]
\begin{minipage}[c]{0.55\linewidth}
    \includegraphics[width=\textwidth]{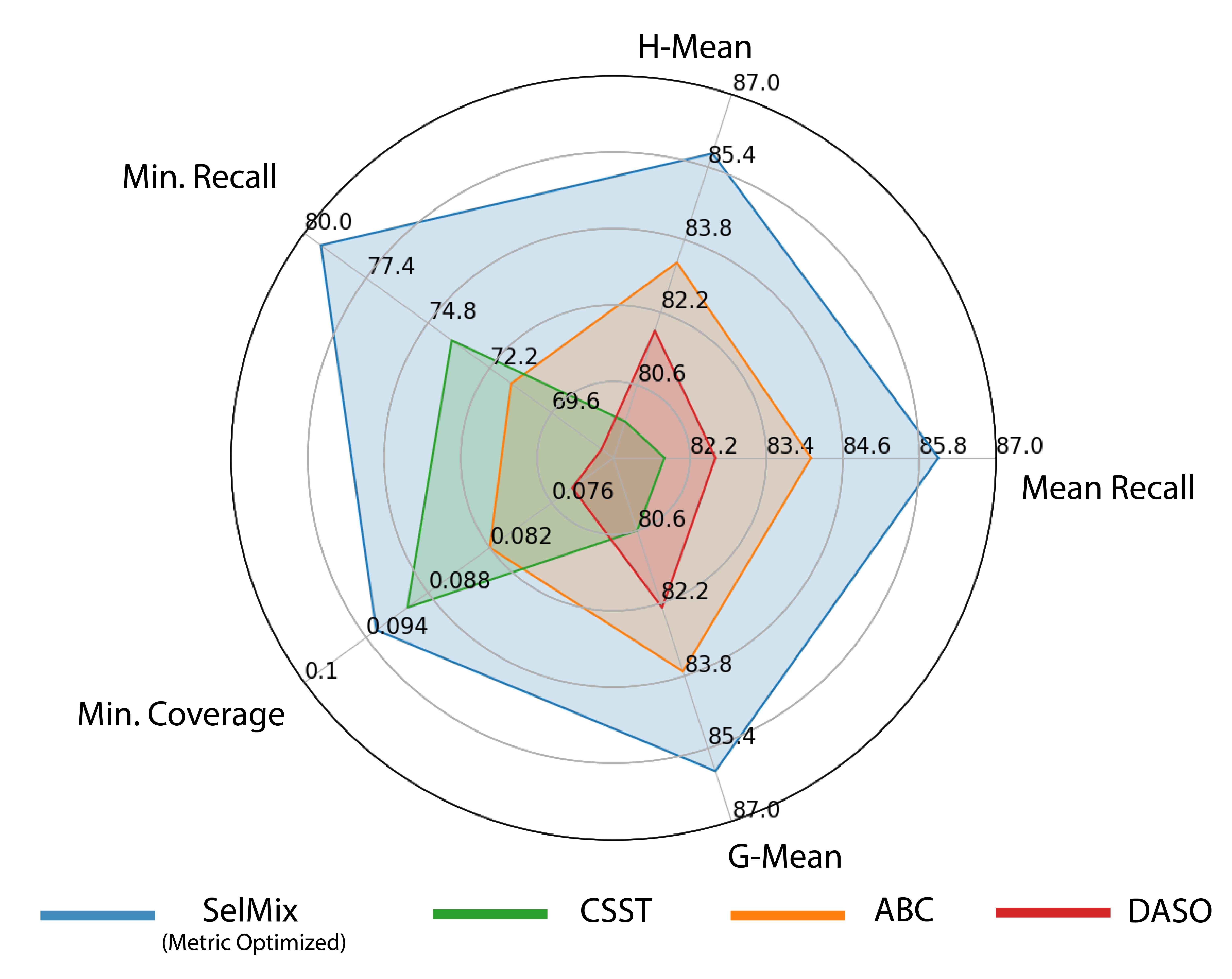}
    \vspace{-10mm}
\end{minipage}\hfill
\begin{minipage}[c]{0.43\linewidth}
  \caption{Overview of Results on CIFAR-10 LT (Semi-supervised). We evaluate the models from SotA Semi-supervised techniques of DASO~\cite{oh2022daso}, ABC~\cite{lee2021abc}, CSST~\cite{rangwani2022costsensitive} and proposed SelMix on different non-decomposable objectives. We find that SelMix produces the best performance for the non-decomposable metric and constraints it is optimized for (\textcolor{RoyalBlue}{\textbf{blue}}). Further, SelMix is an inexpensive fine-tuning technique compared to other expensive full pre-training-based baselines. }
    \label{fig:radar-results}    
\end{minipage}

\end{figure*}
Practical methods based on empirical insights have been developed to improve the mean performance of methods on long-tailed class-imbalanced datasets  (DASO ~\cite{oh2022daso}, ABC~\cite{lee2021abc}, CoSSL~\cite{fan2021cossl} etc.). These methods mainly generate debiased pseudo-labels for consistency regularization, leading to better semi-supervised classifiers. Despite their impressive performance, these classifiers perform suboptimally for the non-decomposable objectives (Fig. \ref{fig:radar-results}). 

 In this paper, we develop SelMix, a technique \emph{that utilizes a pre-trained model for representations and optimizes it for improving the desired non-decomposable objective through fine-tuning}. We fine-tune a pre-trained model that provides good representations with Selective Mixups (SelMix) between data across different classes. The core contribution of our work is to develop a selective sampling distribution on class samples to selectively mixup, such that it optimizes the given non-decomposable objective (or metric) (Fig. \ref{fig:mixup_variants}). This SelMix distribution of mixup is updated periodically based on feedback from the validation set so that it steers the model in the direction of optimization of the desired metric.
 SelMix improves the decision boundaries of particular classes to optimize the objective, unlike standard Mixup ~\cite{zhang2018mixup} that applies mixups uniformly across all class samples. Further, the SelMix framework can also \textbf{optimize for non-linear objectives}, addressing a shortcoming of existing works \citep{rangwani2022costsensitive, narasimhan2021training}.

To evaluate the performance of SelMix, we perform experiments to optimize several different non-decomposable objectives. These objectives span diverse categories of linear objectives (Min Recall, Mean Recall), non-linear objectives (Recall G-mean, Recall H-mean), and constrained objectives (Recall under Coverage Constraints). 
We find that the proposed SelMix fine-tuning strategy significantly improves the performance on the desired objective, outperforming both the empirical and theoretical state-of-the-art (SotA) methods in most cases (Fig. \ref{fig:radar-results}).  In practical scenarios where the distribution of unlabeled data differs from the labeled data, we find that the adaptive design of SelMix with proposed logit-adjusted FixMatch (LA) leads to a significant $5\%$ improvement over the state-of-the-art methods, demonstrating its robustness to data distribution. Further, our SSL framework extends easily to supervised learning and leads to improvement in desired metrics over the existing methods. We summarize our contributions below: 
\setlist{nolistsep}
\begin{itemize}[noitemsep, leftmargin=*]
    \item We evaluate existing theoretical frameworks ~\cite{rangwani2022costsensitive} and empirical methods 
    ~\cite{oh2022daso, wei2021crest, lee2021abc, fan2021cossl}  on multiple practical non-decomposable metrics. We find that empirical methods perform well on mean recall but poorly on other practical metrics (Fig. \ref{fig:radar-results}) and vice-versa for the theoretical method.
    
    \item We propose \textbf{SelMix}, a mixup-based fine-tuning technique that uses selective mixup over classes to mix up samples to optimize the desired non-decomposable metric objective. (Fig. \ref{fig:mixup_variants}).

    \item We evaluate SelMix in various supervised and semi-supervised settings, including ones where the unlabeled label distribution differs from that of labeled data. We observe that SelMix with the proposed FixMatch (LA) pre-training significantly outperforms existing SotA methods (Sec.~\ref{sec:empirical_results}).
\end{itemize}\vspace{-3mm}

\section{Related Works}

\vspace{-3mm} \noindent \textbf{Semi-Supervised Learning in Class Imbalanced Setting.} Semi-Supervised Learning are algorithms that effectively utilizes unlabeled data and the limited labeled data present. A line of work has focused on using consistency regularization and pseudo-label-based self-training on unlabeled data to improve performance (e.g., FixMatch~\cite{sohn2020fixmatch}, MixMatch~\cite{berthelot2019mixmatch}, ReMixMatch~\cite{DBLP:journals/corr/abs-1911-09785}, etc.). However, when naively applied, these methods lead to biased pseudo-labels in class-imbalanced (and long-tailed) settings. Various recent~\cite{fan2021cossl, oh2022daso} methods have been developed that mitigate pseudo-label bias towards the majority classes. These include an auxiliary classifier trained on balanced data (ABC~\cite{lee2021abc}), a semantic similarity-based classifier to de-bias pseudo-label (DASO~\cite{oh2022daso}),  oversampling minority pseudo-label samples (CReST~\cite{Wei_2021_CVPR}). Despite their impressive accuracy, these algorithms compromise performance measures focusing on the minority classes.

\textbf{Non-Decomposible Metric Optimization.} 
Despite their impressive accuracy, there still seems to be a wide gap between the performance of majority and minority classes, especially for semi-supervised algorithms. For such cases, suitable metrics like worst-case recall across classes~\cite{mohri2019agnostic} and F-measure~\cite{eban2017scalable} provide a much better view of the model performance. However, these metrics cannot be expressed as a sum of performance on each sample; hence, they are non-decomposable. Several approaches have been developed to optimize the non-decomposable metrics of interest~\cite{kar2016online, narasimhan2014statistical, narasimhan2015optimizing, sanyal2018optimizing, narasimhan2021training}. In the recent work of CSST~\cite{rangwani2022costsensitive}, cost-sensitive learning with logit adjustment has been generalized to a semi-supervised learning setting for deep neural networks. However, these approaches excessively focus on optimizing desired non-decomposable metrics, leading to a drop in the model's average performance (mean recall). 

\textbf{Variants of MixUp.} After MixUp, several variants of mixup have been proposed in literature like CutMix~\cite{yun2019cutmix}, PuzzleMix~\cite{kim2020puzzle}, TransMix~\cite{chen2022transmix}, SaliencyMix~\cite{uddin2020saliencymix}, AutoMix~\cite{zhu2020automix} etc. However, all these methods have focused on creating mixed-up samples, whereas in our work SelMix, we concentrate samples from which classes $(y,y')$ are mixed up to improve the desired metric. Hence, it is complementary to others and can be combined with them. \addedtext{We also provide further discussion in App. ~\ref{app:rel_work_extra}.} \vspace{-3mm}

\section{Problem Setup} \label{sec:problem_setup}

\begin{figure*}[!t]
        \centering
    \includegraphics[width=\textwidth]{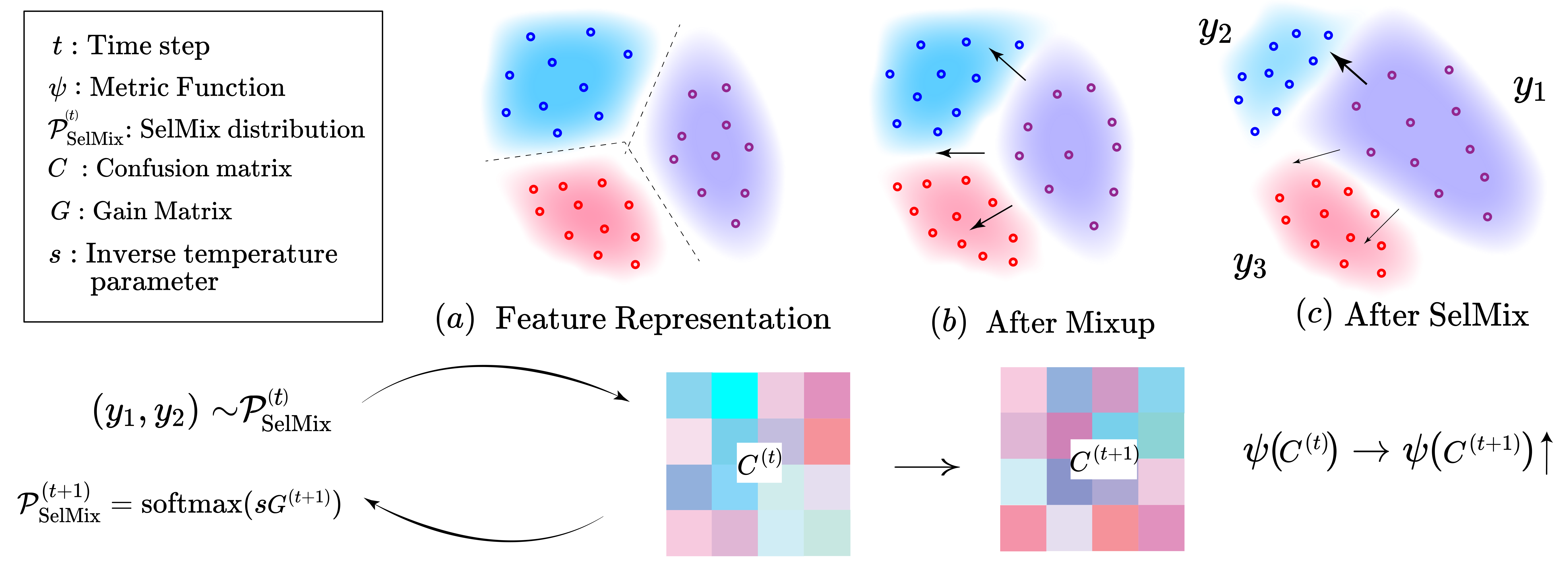}
    \caption{We demonstrate the effect of the variants of mixup on feature representations (a). With Mixup, the feature representation gets equal contribution in all directions of other classes (b).  Unlike this, in SelMix (c), certain class mixups are selected at a timestep $t$ such that they optimize the desired metric.  Below is an overview of how the SelMix distribution is obtained at timestep $t$.}
    \vspace{-1.25em}
    \label{fig:mixup_variants}
\end{figure*}

\vspace{-3mm}\textbf{Notation:} For two matrices ${A}, {B} \in \RR^{m \times n}$ with the same size, we define an inner product 
$\left\langle {A}, {B} \right\rangle$ as $\left\langle {A}, {B} \right\rangle = \Tr {A} {B}^\trn = \sum_{i=1}^m \sum_{j=1}^n A_{ij} B_{ij}$. For a general function $f: \RR^{m \times n} \rightarrow \addedtext{\RR}$ for a matrix variable $X \in \RR^{m \times n}$, the directional derivative w.r.t $V \in \RR^{m \times n}$ is defined as:
    \begin{equation}
    \label{defn:dir-derivative}
        \nabla_{V} f(X) = \lim_{\eta \rightarrow 0} \frac{f(X + \eta V) - f(X)}{\eta} 
    \end{equation}
which implies $ f(X + \eta V) \approx f(X) + \eta \nabla_{V} f(X)$ for small $\eta$. In case the function $f$ is differentiable, we depict the gradient (derivative) matrix w.r.t $X \in \RR^{m \times n}$ as $\frac{\partial f}{\partial X} \in \RR^{m\times n}$ with each entry given as:  $(\frac{\partial f}{\partial X})_{ij} = \frac{\partial f}{\partial X_{ij}}$. For a comprehensive list of variable definitions used, please refer to Table. \ref{tab:notations}.

Let's examine the classification problem involving $K$ classes, where the data points denoted as $x$ are drawn from the instance space $\mX$, and the labels belong to the set $\mY$ defined as $[K]$. In this classification task, we define a classifier $F$ as a function that maps data points to labels. This function is constructed using a neural network-based scoring function $h$, which consists of two parts: a feature extractor $g$ that maps instances $x$ to a feature space $\RR^{d}$ and a linear layer parameterized by weights $W \in \RR^{d \times K }$. 
\begin{wraptable}{r}{8cm}
\caption{Objectives defined by confusion matrix entries.}\label{wrap-tab:objectives}
 \vspace{-5mm}
\scriptsize
\begin{tabular}{lc}\\\midrule  
\qquad \qquad Objective & Definition  \\ [0.5ex] 
 \hline
 $(\psi^{\text{AM}})$ Mean Recall  &$\frac{1}{K}\sum_{i \in [K]}\frac{C_{ii}[h]}{\sum_{j\in[K]}{C_{ij}[h]}}$  \\ 
 $(\psi^{\text{MR}})$ Min.  Recall & $  \min_{\bm{\lambda} \in\Delta_{K-1}} \sum_{i \in [K]}\lambda_i \frac{C_{ii}[h]}{\sum_{j\in[K]}{C_{ij}[h]}}  $  \\
  $(\psi^{\text{HM}})$ H-mean &  $K\left( \sum_{i \in [K]} \frac{\sum_{j\in [K]} C_{ij}[h]}{C_{ii}[h]}\right)^{-1}$ \\
\multirow{2}{*}{ $(\psi^{\text{AM}}_{\text{cons.}})$ Mean Recall s.t.   } & 
$\min_{\bm{\lambda} \in \mathbb{R}^K_+} \sum_{i \in [K]} \frac{C_{ii}[h]}{\sum_{j\in[K]}{C_{ij}[h]}} + $\\ per class coverage $\geq \tau $  &  $  \sum_{j \in [K]}\lambda_j \left(\sum_{i \in [K]} C_{ij}[h] - \tau \right)$\\
\bottomrule
\end{tabular}
\end{wraptable}
The classification decision is made by selecting the label that corresponds to the highest score in the output vector: $F(x) = \argmax_{i\in[K]} h(x)_i$. The output of this scoring function $h(x)$ in $\mathbb{R}^{K}$ is expressed as $h(x) = W^{\trn}g(x)$, in terms of network parameters. We assume access to samples from the data distribution $\mD$ for training and evaluation. We denote the prior distribution over labels as $\pi$, where $\pi_i = \bP(y=i)$ for $i=1,\dots, K$. Now, let's introduce the concept of the confusion matrix, denoted as $C[h]$, which is a key tool for assessing the performance of a classifier, defined as $C_{ij}[h] = \bE_{x,y \sim \mD}[\indc(y=i, \argmax_l h(x)_l=j)]$. For brevity, we introduce the confusion matrix in terms of scoring function $h$.  The confusion matrix characterizes how well the classifier assigns instances to their correct classes. An objective function $\psi$ is termed ``decomposable'' if it can be expressed as a function $\Phi: \mY \times \mY \rightarrow \RR$. Specifically, 
$\psi$ is decomposable if it can be written as $\ex[x, y]{\Phi(F(x), y)}$. If such a function $\Phi$ doesn't exist, the objective is termed ``non-decomposable''.
In this context, we introduce the non-decomposable objective $\psi$, represented as $\psi: \Delta_{K \times K - 1} \xrightarrow{} \mathbb{R}$, which is a function on the set of confusion matrices $C[h]$, and expressed as $\psi(C[h])$. Our primary aim is to maximize this objective $\psi(C[h])$ which can be used to express various practical objectives found in prior research works~\cite{cotter2019optimization, narasimhan2022consistent}, examples of which are provided in Table~\ref{wrap-tab:objectives} and their real-world usage is described below.

In real-world datasets, a common challenge arises from the inherent long-tailed and imbalanced nature of the distribution of data. In such scenarios, relying solely on accuracy can lead to a deceptive assessment of the classifier. This is because a model may excel in accurately classifying majority classes but fall short when dealing with minority ones. To address this issue effectively, holistic evaluation metrics like H-mean~\cite{kennedy2010learning}, G-mean~\cite{wang2012multiclass, lee2021abc}, and Minimum (worst-case) recall~\cite{narasimhan2021training} prove to be more suitable. These metrics offer a comprehensive perspective, highlighting performance disparities between majority and minority classes that accuracy might overlook. Specifically, the G-mean of recall can be expressed in terms of the confusion matrix ($C[h]$) as:
\begin{math}
\psi^{\text{GM}}({C[h]}) = \left ( \prod_{i \in [K]} \frac{C_{ii}[h]}{\sum_{j\in [K]} C_{ij}[h]} \right )^{\frac{1}{K}}.
\end{math}
For the minimum recall ($\psi^{\text{MR}}$), we use the continuous relaxation as used by \citep{narasimhan2021training}. By writing the overall  objective as min-max optimization over $\blambda \in \Delta_{K-1}$, we have
\begin{math}
    \max_{h} \psi^{\mathrm{MR}} (C[h]) = \max_{h} \min_{\blambda \in \Delta_{K-1}} \sum_{i \in [K]} \lambda_i \frac{C_{ii}[h]}{\sum_{j\in[K]}{C_{ij}[h]}} .
\end{math}
Fairness is another area where such complex objectives are beneficial. For example, prior works~\cite{cotter2019optimization, goh2016satisfying} on fairness consider optimizing the mean recall while constraining the predictive coverage ($\text{cov}_i [C[h]] = \sum_{j} C_{ji}$) that is the proportion of class $i$ predictions on test data given as
\begin{math}
    \max_{h} \frac{1}{K} \sum_{i=1}^K \rec_{i}[h] \; \; \text{s.t.} \; \; \cov_{i}[h]   \geq \frac{\alpha}{K} \; \; \forall i \in [K]. 
\end{math}
Optimization of the above-constrained objectives is possible by using the Lagrange Multipliers ($\blambda \in \RR^{K}_{\geq 0}$) as done in Sec. 2 of~\citet{narasimhan2021training}. By expressing this above expression in terms of $C[h]$ and through linear approximation, the constrained objective $\psi_{\text{cons.}}(C[h])$ can be considered as: 
\begin{math}
    \max_{h} \psi^{\text{AM}}_{\text{cons.}}(C[h]) = \max_{h} \min_{\blambda \in \RR^{K}_{\geq 0}} \frac{1}{K} \sum_{i \in [K]}  \frac{C_{ii}[h]}{\sum_{j\in[K]}{C_{ij}[h]}}
    + \sum_{j \in [K]}\lambda_j \left(\sum_{i \in [K]} C_{ij}[h] - \frac{\alpha}{K} \right).
\end{math}
The $\blambda$ for calculating the value of $\psi_{\text{cons.}}(C[h])$ and $\psi^{\text{MR}}(C[h])$, is periodically updated using exponentiated or projected gradient descent as done in \citep{narasimhan2021training}.
We summarize $\psi(C[h])$ for all non-decomposable objectives we consider in this paper in Table \ref{wrap-tab:objectives}.  Unlike existing frameworks~\cite{narasimhan2021training, rangwani2022costsensitive} in addition to objectives that are linear functions of $C[h]$, we can also optimize for non-linear functions like (G-mean and H-mean) for neural networks. \vspace{-3mm}

\section{Selective Mixup for Optimizing Non-Decomposable Objectives}

\vspace{3mm}

\vspace{-3mm}
This section introduces the proposed SelMix (Selective Mixup) procedure for optimizing desired non-decomposable objective $\psi$. We \textbf{a)} first introduce a notion of selective $\mathbf{(i,j)}$ mixup~\cite{zhang2018mixup} between samples of class $i$ and $j$, \textbf{b)} we then define the change (i.e., Gain) in desired objective $\psi$ produced due to $\mathbf{(i,j)}$ mixup, \textbf{c)} to prefer $\mathbf{(i,j)}$ mixups which lead to large Gain in objective $\psi$, we introduce a distribution $\mP_{\selmix}$ from which we sample $\mathbf{(i,j)}$ to form training batches \textbf{d)} we then conclude by introducing a tractable approximation of change (i.e., gain) in metric for the network $W^{\mathrm{T}}g$, \textbf{e)} we summarize the SelMix procedure by providing the Algorithm for training. We provide theoretical results for the optimality of \addedtext{the} SelMix procedure in Sec.~\ref{sec:theoretical_analysis}. We elucidate each part of SelMix in detail below and provide an overview of the proposed algorithm in Fig. \ref{fig:mixup_variants}.

\textbf{Feature Mixup.} In this work, we aim to optimize non-decomposable objectives with a framework utilizing mixup~\cite{zhang2018mixup}. Mixup minimizes the risk for a linear combination of samples and labels (i.e. ($x,y$) and ($x', y'$)) to achieve better generalization. Manifold Mixup~\cite{verma2019manifold} extends this idea to have \textbf{mixups in feature space, which we use in our work}. However, in vanilla mixup, the samples for mixing up are chosen randomly over the classes. This may be useful in the case of accuracy but can be sub-optimal when we aim to optimize for specific non-decomposable objectives (Table \ref{tab:policy-comparison})  that may require a specific selection of classes in the mixup. Hence, this work focuses on selective mixups between classes and uses them to optimize non-decomposable objectives. 
Consider a $K$ class classification dataset $D$ containing sample pairs ($x,y$). For convenience of notation, we denote the subset of instances with a particular label $y=i$ as $D_{i} = \{x \mid (x, y) \in D, y = i \}$. For the unlabelled part of dataset $\Tilde{D}$ containing $x$, we generate these subsets $\Tilde{D}_{i} = \{x' \in \Tilde{D} \mid i = \argmax_{l} h(x)_{l} \}$ based on the pseudo label $y'=\argmax_{l} h(x)_{l}$ from the model $h$. In addition to these training data, we also assume access to $D^{\mathrm{val}}$ containing ($x,y$).  Following semi-supervised mixup framework \citep{fan2021cossl}  in our work, the mixup between a labeled and pseudo labeled pair of samples (i.e. ($x,y$) from $D$ and $x'$ from $\Tilde{D}$), the features $g(x)$ and $g(x')$ are mixed up, while the label is kept as $y$. The mixup loss for our model with feature extractor $g$ followed by a linear layer with weights $W$ defined as:
\begin{align*}
    \label{eq:selmix-loss-simple}
    \mathcal{L}_{\text{mixup}}(g(x), g(x'), y; W) &=  \mathcal{L}_{\text{SCE}} (W^{\mathrm{T}}(\beta g(x) + (1 - \beta) g(x')), y).
\end{align*}
Here $\beta \sim U[\beta_{min}, 1], \beta_{min} \in [0,1]$ and $\mathcal{L}_{\text{SCE}}$ is the softmax cross entropy loss.  We define \emph{\textbf{(i, j) mixups}} for classes $i$ and $j$ to be the mixup of samples $x \sim D_i$ and $x' \sim \Tilde{D}_j$ and minimization of the corresponding $\mathcal{L}_{\text{mixup}}$ via SGD .
For analyzing the effect of ($i, j$) mixups on the model, we use the loss incurred by mixing the centroids of class samples ($z_i$ and $z_j$), which are defined as $ z_k = \mathbb{E}_{x \sim D_{k}^{\mathrm{val}}}[g(x)]$ for each class $k$. This representative of the expected loss due to $(i, j)$ mixup can be expressed as: \begin{equation}
        \label{eq:mixup-loss-centroid}
        \mathcal{L}_{(i, j)}^{\text{mix}} =  \mathcal{L}_{\text{mixup}}(z_i, z_j, i; W) \; \; \; \forall i,j \in [K]\times [K].
\end{equation}
\textbf{Directional vectors using mixup loss.}  We define $K^2$ directions as the derivative of the expected mixup loss for each of the $(i, j)$ mixup respectively w.r.t the weights $W$ as $V_{i,j} = -\partial{\mathcal{L}_{(i, j)}^{\text{mix}}}/\partial{W}$.
These directions correspond to the small change in weights $W$ upon the minimization of $\mathcal{L}_{(i, j)}^{\text{mix}}$ by stochastic gradient descent. Now we want to calculate the change in the non-decomposable objective $\psi$ along these directions $V_{i,j}$. Assuming the existence of directional derivatives in \addedtext{the} span of $K^2$ directions and fixed feature extractor $g$, we can write the following using Taylor Expansion (Eq. \eqref{defn:dir-derivative}): 
\begin{equation}
    \label{eq:psi-taylor-exps}
    \psi(C[W^\trn g + \eta V_{i,j}^\trn g]) = 
    \psi(C[W^\trn g]) + \eta
        \nabla_{V_{ij}} \psi(C[W^\trn g])
    + O(\eta^2\| V_{i,j}\| ^2).
\end{equation}
In Eq. \eqref{eq:psi-taylor-exps}, since $\eta$ is a small scalar, $ O(\eta^2\| V_{i,j}\| ^2)$ is negligible. Hence the second term in Eq. \eqref{eq:psi-taylor-exps} denotes the major change in objective $\psi$ due to minimization of the loss due to $(i,j)$ mixup $\mathcal{L}_{(i, j)}^{\text{mix}}$ via SGD. We define this as Gain ($G_{ij}$) or increase in desired objective $\psi$ for the $(i,j)$ mixup:
\begin{equation}
        \label{eq:exact-gains}
        G_{ij} =      
        \nabla_{V_{ij}} \psi(C[W^\trn g])
    ,  \quad \text{where} \; 
    {V}_{ij}  = - \frac{\partial \mathcal{L}_{(i, j)}^{\text{mix}}}{\partial W}\quad \forall(i, j)\in [K] \times [K]
\end{equation}
Using this, we define the gain matrix as  $G = [G_{i,j}]_{1 < i,j< K}$ corresponding to each of the $(i, j)$ mixups. We now define a general direction $V$, which is induced by mixing up samples from classes $(i, j)$ respectively, according to $\mathcal{P}_{Mix}(i, j)$ distribution defined over $[K] \times [K]$.  The expected weight change induced by this distribution can be given as (due to linearity of derivatives): $V = \sum_{i,j} \mathcal{P}_{Mix}(i, j) V_{i,j}$. The change in objective induced by the $\mathcal{P}_{Mix}(i, j)$ distribution can be similarly approximated using the Taylor Expansion for the direction $\addedtext{V} = \sum_{i,j} \mathcal{P}_{Mix}(i, j) V_{i,j}$: 
\begin{equation}
    \label{eq:psi-taylor-exps-v}
    \psi(C[W^\trn g + \eta \addedtext{V}^\trn g]) = 
    \psi(C[W^\trn g]) + \eta 
    \sum_{i,j} \mathcal{P}_{Mix}(i, j)  
     \nabla_{V_{ij}} \psi(C[W^\trn g])
    +O(\eta^2\| \addedtext{V}\| ^2).
\end{equation}
To maximize change in objective (LHS), we maximize the second term (RHS), as the first term is constant and the third is negligible for a small step $\eta$. On substituting 
     $   \nabla_{V_{ij}} \psi(C[W^\trn g])
    = G_{ij}$, the second term corresponds to $\mathbb{E}[G] = \sum_{i,j} G_{i,j} \mathcal{P}_{Mix}(i, j)$, which is expectation of gain over $\mathcal{P}_{Mix}$. Thus, maximization of the objective is equivalent to finding optimal $\mathcal{P}_{Mix}$ to maximize expected gain $\mathbb{E}[G]$. In practice to maximize objective $\psi$, we will first sample labels to mixup $(y_1, y_2)$ from optimal $\mathcal{P}_{Mix}$ (described below), then will pick $x_1, x_2 $ from $D_{y_1}, D_{y_2}$ uniformly to form a batch for training. 

\textbf{Selective Mixup Sampling Distribution.}
In our work, we introduce a novel sampling distribution $\mP_{\text{SelMix}}$ for practically optimizing the gain in objective defined as follows: 
\begin{equation}
    \label{eq:selmix-softmax}
    \mP_{\text{SelMix}}(i, j) = \mathrm{softmax} (sG)_{ij}.
\end{equation}
We aim to maximize $\mathbb{E}[G]$. One strategy could be to only mixup samples from classes $(i, j)$ respectively, corresponding to $\max_{i,j}{G_{ij}}$ or equivalently $s \rightarrow \infty$. However, this doesn't work in practice as we do $n$ steps of SGD based on $\mP_{\text{SelMix}}$  the linear approximation in Eq. \eqref{eq:psi-taylor-exps} becomes invalid\addedtext{,} and later the approximation in Thm. \ref{thm:gain-approx} (See Table~\ref{tab:policy-comparison} for empirical evidence). Hence, we select the $\mP_{\text{Mix}}$ to be the scaled softmax of the gain matrix as our strategy, with $s > 0$ given as $\mP_{\text{SelMix}}$. The proposed $\mP_{\text{SelMix}}$ is the distribution which is an intermediate strategy between the random exploratory uniform $(s = 0)$ and greedy $(s \rightarrow \infty)$ strategies which serves as a good sampling function for maximizing gain. 
We provide theoretical results regarding the optimality of the proposed $\mP_{\text{SelMix}}$ in Sec. \ref{sec:theoretical_analysis}.

\textbf{Estimation of Gain Matrix.} This notion of gain, albeit accurate, is not practically tractable since $\psi$ is not differentiable w.r.t $W$ in general, as the definition of $ C_{ij}[h] = \bE_{x,y \sim \mD}[\indc(y=i, \argmax_l h(x)_l=j)]$  uses a non-smooth indicator function (Sec. \ref{app:sec-proof-of-approx}). To proceed further with this limitation, we introduce a reformulation of $C$ by defining the $i^{\text{th}}$ row $C_i[h]$ of the confusion matrix in terms of the $i^{\text{th}}$ row $\tilde{C}_i[h]$ of the unconstrained matrix $\Tilde{C}[h] \in \RR^{K \times K}$ as $C_{i}[h] = \pi_i \cdot \text{softmax}(\tilde{C}_i[h])$. This reformulation of the confusion matrix $C$ by design satisfies the necessary constraints, given as $\sum_{j}C_{i,j}[h] = \pi_i \; \text{and} \; \sum_{i,j}C_{i,j}[h] = 1 \quad \text{where} \quad 0\leq C_{i,j}[h] \leq 1$. We can now calculate the same objective $\psi(\tilde{C}[W^{\mathrm{T}}g])$ in terms of $\tilde{C}$. 
The entries of gain matrix ($G$) with reformulation $\tilde{C}$ can be analogously defined (Eq. \eqref{eq:exact-gains}) in terms of $\Tilde{C}$:
\begin{equation}
        \label{eq:reform-gains}
          G_{ij} =      
         \nabla_{V_{ij}} \psi(\Tilde{C}[W^\trn g])
    ,  \quad \text{where} \; 
    {V}_{ij}  = - \frac{\partial \mathcal{L}_{(i, j)}^{\text{mix}}}{\partial W}\quad \forall (i, j)\in [K] \times [K].
\end{equation}
The exact computation of $G_{ij}$ would be given as $ \langle \frac{\partial{\widetilde{C}}}{\partial{W}}, V_{ij} \rangle $ in case  $\frac{\partial{\widetilde{C}}}{\partial{W}}$ was defined. However, this is not defined despite introducing the re-formulation due to the non-differentiability of $\tilde{C}$ w.r.t W. However, with re-formulation under some mild assumptions, given in Theorem \eqref{thm:gain-approx}, we can approximate $G_{ij}$ (first RHS term). We refer readers to Theorem \ref{thm:approx-formula} for a more mathematically precise statement. Further, we want to convey one advantage despite the proposed reformulation: we do not require the actual computation of  $\Tilde{C}$ for gain calculation in Eq. \eqref{eq:gain-approx-exact} (first term). 
As all the terms of $\frac{\partial \psi(\Tilde{C})}{\partial \tilde{C}_{lk}}$ which we require, can be computed analytically in terms of $C$, which makes this operation inexpensive. We provide the $\frac{\partial \psi(\Tilde{C})}{\partial \tilde{C}_{lk}}$ for all $\psi$ in Appendix Sec. \ref{app:unconstrained-derivative}. 
\begin{theorem}
\label{thm:gain-approx}
Assume that $\| V_{ij}\|$ is sufficiently small.
Then, the gain for the $(i,j)$ mixup ($G_{ij}$) can be approximated using the following expression:
\begin{equation}
        \label{eq:gain-approx-exact}
            G_{ij} =  \sum_{k,l}\frac{\partial \psi(\Tilde{C})}{\partial \tilde{C}_{kl}}\left((V_{ij})^{\top}_l \cdot z_k \right) + O\left(\varepsilon(\tC, W) + \|V_{ij}\|^2\right).
\end{equation}
where $z_k = \mathbb{E}_{x \sim D^{\mathrm{val}}_{k}}[g(x)] $ is the mean of the features of the validation samples belonging to class $k$, \addedtext{used to characterize (i,j) mixups (Eq.~\eqref{eq:mixup-loss-centroid})}. 
The error term $\varepsilon(\tC, W)$ does not depend on $V_{ij}$, and 
under reasonable assumptions we can regard this term small (we refer readers to Sec.~\ref{thm:approx-formula}). 
\end{theorem}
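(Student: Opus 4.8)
The plan is to compute $G_{ij} = \nabla_{V_{ij}} \psi(\tC[W^\trn g])$ by the chain rule, reduce it to the directional derivatives of the individual entries $\tC_{kl}$, and then replace each (non-differentiable) entry by a smooth mean-logit surrogate whose directional derivative can be read off exactly. First I would write, using that $\psi$ is differentiable in its matrix argument (which the softmax reformulation guarantees) and that the directional derivatives exist in the span of the $K^2$ directions,
\[
G_{ij} = \sum_{k,l} \frac{\partial \psi(\tC)}{\partial \tC_{kl}}\, \nabla_{V_{ij}} \tC_{kl}[W^\trn g].
\]
All the factors $\partial \psi / \partial \tC_{kl}$ are available in closed form (App.~\ref{app:unconstrained-derivative}), so the entire problem collapses to estimating the scalar directional derivatives $\nabla_{V_{ij}} \tC_{kl}[W^\trn g]$, which is exactly where the non-differentiability of $\tC$ in $W$ must be confronted.

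The key step is to introduce the smooth surrogate $\hat C_{kl}(W) := W_l^\trn z_k = \bE_{x \sim D_k^{\mathrm{val}}}[(W^\trn g(x))_l]$, the mean logit assigned to class $l$ over class-$k$ validation features (here $W_l$ is the $l$-th column of $W$). I would justify $\tC_{kl}[W^\trn g] \approx \hat C_{kl}(W)$ directly from the reformulation $C_k = \pi_k\,\mathrm{softmax}(\tC_k)$: the true row satisfies $\mathrm{softmax}(\tC_k)_l = \bP(\argmax_m h(x)_m = l \mid y=k)$, and this hard prediction probability is approximated first by the expected soft probability $\bE_{x\sim D_k^{\mathrm{val}}}[\mathrm{softmax}(h(x))_l]$ and then, by a Jensen/linearization swap, by $\mathrm{softmax}(\bE_{x\sim D_k^{\mathrm{val}}}[h(x)])_l = \mathrm{softmax}(\hat C_k(W))_l$. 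The two gaps introduced here — argmax-vs-softmax and $\bE\,\mathrm{softmax}$-vs-$\mathrm{softmax}\,\bE$ — depend only on $(\tC, W)$ and the feature statistics, not on the perturbation, and are precisely what is collected into the $V_{ij}$-independent error term $\varepsilon(\tC, W)$. The payoff is that $\hat C_{kl}$ is linear in $W$: since $h(x)_l = W_l^\trn g(x)$, its directional derivative along $V_{ij}$ is $\nabla_{V_{ij}} \hat C_{kl}(W) = \bE_{x\sim D_k^{\mathrm{val}}}[(V_{ij})_l^\trn g(x)] = (V_{ij})_l^\trn z_k$, with no remainder.

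Assembling the two pieces, I would substitute the surrogate derivative into the chain rule to obtain the leading term $\sum_{k,l} (\partial \psi/\partial \tC_{kl})\, (V_{ij})_l^\trn z_k$. The discrepancy has two sources: replacing $\tC$ by $\hat C$ contributes $O(\varepsilon(\tC, W))$, while linearizing $\psi$ about $\tC$ — equivalently, the second-order Taylor remainder of $\psi \circ \hat C$ under the finite SGD step $W \mapsto W + V_{ij}$, whose induced change in $\hat C$ scales linearly with $V_{ij}$ (cf. the $O(\eta^2\|V_{ij}\|^2)$ term in Eq.~\eqref{eq:psi-taylor-exps}) — contributes $O(\|V_{ij}\|^2)$. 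Collecting both yields the claimed estimate, and the hypothesis that $\|V_{ij}\|$ is small is exactly what renders the quadratic remainder negligible relative to the linear gain.

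The main obstacle is the rigorous control of $\varepsilon(\tC, W)$: quantifying the argmax-to-softmax relaxation and the interchange of expectation and softmax, and arguing the resulting bound is uniform and genuinely independent of $V_{ij}$. This is where mild regularity assumptions enter — e.g.\ concentration of the class-conditional features $g(x)$ about their means $z_k$ (so the Jensen swap is tight), bounded logits, and a margin/low-temperature condition so the soft predictions track the hard argmax — and where I would defer to the sharper statement and hypotheses of Theorem~\ref{thm:approx-formula}. The remaining steps, namely the chain rule and the exact derivative of the linear surrogate, are routine once the surrogate is in place.
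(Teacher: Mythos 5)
Your overall route is the paper's own: a chain rule through $\tC$, a mean-feature surrogate whose directional derivative is exactly $(V_{ij})_l^\trn z_k$, a variance-based bound for the expectation/softmax interchange, and a quadratic Taylor remainder in $\|V_{ij}\|$. The error decomposition you give (argmax-to-softmax gap plus Jensen swap collected into $\varepsilon(\tC,W)$; second-order remainder giving $O(\|V_{ij}\|^2)$) matches the structure of the formal statement (Theorem \ref{thm:approx-formula}) and of Lemmas \ref{lem:approx-jac} and \ref{lem:approx-exchange}.

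The gap is in your key middle step: the claim $\tC_{kl}[W^\trn g]\approx \hat C_{kl}(W)=W_l^\trn z_k$, followed by ``differentiate the surrogate instead.'' First, this pointwise approximation is not well-posed: $\tC$ is only determined up to adding an arbitrary constant to each row, because softmax is not injective (the paper flags exactly this in its remark on independence of the choice of $\widetilde{C}$). So no bound of the form $|\tC_{kl}-W_l^\trn z_k|\le\varepsilon$ can be asserted; what the hypotheses actually deliver is closeness of the softmax images, $|\softmax_l(\tC_k)-\softmax_l(W^\trn z_k)|\le\varepsilon_1$. Second, even granting uniform value-closeness on a neighborhood, closeness of values does not by itself transfer to closeness of directional derivatives; that transfer requires its own argument. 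The paper repairs both problems at once with Lemma \ref{lem:approx-lem3}: it never compares $\tC_k$ with $W^\trn z_k$ directly, but compares the increments $\Delta\tC_k$ and $(\Delta W)^\trn z_k$ only after multiplication by the softmax Jacobian $\partial C/\partial \tC_k$ --- which annihilates the row-shift ambiguity --- using Taylor expansions of $\sigma_l$ at the two base points together with the assumption that the closeness holds uniformly over a neighborhood of $W^{(0)}$. Since the final expression factors as $\partial\psi/\partial\tC = (\partial\psi/\partial C)\,(\partial C/\partial\tC)$, this Jacobian-projected bound is exactly what is needed, and it is why your formula comes out correct even though your intermediate claim, as stated, does not hold. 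To close the gap you would either need to restate your surrogate step as a statement about increments pushed through $\partial C/\partial\tC_k$ (the content of Lemma \ref{lem:approx-lem3}), or fix a particular representative of $\tC$ and prove derivative-level, not value-level, closeness for it.
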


In the above theorem formulation\addedtext{,} we approximate the change in the entries of the unconstrained confusion matrix $\tilde{C}_{i,j}[h]$ with the change in logits for the classifier $W^{\mathrm{T}}g$ with weight $W$ along the direction $V_{i,j}$ as $({V}_{ij})^{\top}_l \cdot z_k$.
The most non-trivial assumption of the theorem is that 
for each $k$,
the random vector ${V}_{ij}^\trn g(x)$ has a small variance,
where $x \sim D_k^{\mathrm{val}}$.
Intuitively, if $g$ is a sufficiently good feature extractor, 
then feature vectors $g(x)$ ($x \sim D_k^{\mathrm{val}}$) are distributed near its mean,
hence its linear projections ${V}_{ij}^\trn g(x)$ has a small variance.
Therefore, it is a natural assumption if $g$ is sufficiently good. 
Moreover, this approximation works well in practice, as demonstrated empirically in Sec. \ref{sec:empirical_results}.

\textbf{Algorithm for training through $\bm{\mathcal{P}_{\text{SelMix}}}$.}
We provide an algorithm for training a neural network through SelMix.
Our algorithm shares \addedtext{a} high-level framework as used by ~\cite{narasimhan2015consistent, narasimhan2022consistent}. The idea is to perform training cycles, in which \addedtext{one} estimate\addedtext{s} the gain matrix $G$ through a validation set ($D^{\text{(val}}$) and use\addedtext{s} it to train the neural network for a few Stochastic Gradient Descent (SGD) steps.
\begin{wrapfigure}[9]{r}{0.5\textwidth}
    \vspace{-4.5mm} %
    \begin{minipage}{0.5\textwidth}
        \begin{algorithm}[H]
            \centering
            \caption{Training through SelMix}
            \label{alg:ours}
            \begin{algorithmic}
                \scriptsize
                \FOR{$t=1$ {\bfseries to} $T$}
                \STATE Compute $\mP^{(t)}_{\text{SelMix}} = \text{softmax}(sG^{(t)})$ using Thm. \ref{thm:gain-approx}
                \FOR{$n$ SGD steps}
                \STATE $Y_1, Y_2 \sim \mP^{(t)}_{\text{SelMix}}$, $X_1 \sim \mathcal{U}(D_{Y_1})$, $X_2 \sim \mathcal{U}(\tilde{D}_{Y_2})$
                \STATE $h^{(t+1)} := \text{SGD-Update}(h^{(t)},\mathcal{L}_{\text{mixup}},(X_1, Y_1, X_2))$
                \ENDFOR
                \ENDFOR
                \RETURN $h^{(T)}$
            \end{algorithmic}
        \end{algorithm}
    \end{minipage}
\end{wrapfigure}
As our expressions of gain are based on a linear classifier, we primarily fine-tune  the linear classifier. The backbone is fine-tuned at a lower learning rate $\eta$ for slightly better empirical results (Sec. \ref{sec:empirical_results}).
Formally, we introduce the time-dependent notations for gain ($G^{(t)}$), the classifier $h^{(t)}$, the SelMix distribution $\mP_{\text{SelMix}}^{(t)}$, weight-direction change $V_{i,j}^{(t)}$ due to the minimization of $\mathcal{L}_{ij}^{\text{mix}}$. As SelMix is a fine-tuning procedure, we assume a feature extractor $g$ and its linear classification layer, pre-trained with an algorithm like FixMatch\addedtext{,} to be provided as input.
Another important step in our algorithm is the update of the pseudo-labels of the unlabeled set, $\Tilde{D}^{(t)}$. The pseudo-labels are updated with predictions from the $h^{(t)}$. The algorithm is summarized in Alg.~\ref{alg:ours} and detailed in App. Alg. ~\ref{alg:ours-sup}, and an overview is provided in Fig. \ref{fig:mixup_variants}. In our practical implementation, we mask out entries of \addedtext{the} gain matrix with negative gain before performing the softmax operation (Eq. \eqref{eq:selmix-softmax}). We further compare the SelMix framework to others~\cite{rangwani2022costsensitive, narasimhan2021training}, in the App. ~\ref{app:comparison}. 
\vspace{-3.5mm}

\subsection{Theoretical Analysis of SelMix} 
\vspace{-3mm}\textbf{Convergence Analysis.}
For each iteration $t=1, \dots, T$, Algorithm \ref{alg:ours} updates the parameter $W$ of our network $h = W^{T}g$ as follows: 
(a) It selects a mixup pair $(i, j)$ from the distribution $\mP_{\mathrm{SelMix}}^{(t)}$,
(b) and updates the parameter $W$ by $W^{(t + 1)} = W^{(t)} + \eta_t \vtt$, where $\vtt = V_{ij}^{(t)}/\|V_{ij}^{(t)}\|$.
Here, we consider the normalized directional vector $\vtt$ instead of $V_{ij}^{(t)}$.
We denote by $\ex[t-1]{\cdot}$ the conditional expectation conditioned on randomness with respect to 
mixup pairs up to time step $t-1$.
In the convergence analysis, we make the following assumptions for the analysis.
We assume that the objective $\psi$ as a function of $W$ is concave, differentiable and the gradient is $\gamma$-Lipschitz, where $\gamma > 0$ ~\cite{wright2015coordinate,narasimhan2022consistent}.
    We assume that there exists a constant $c > 0$ independent of $t$ satisfying the following condition
    \begin{math}
        \ex[t-1]{\vtt} \cdot 
        \frac{\partial \psi(W^{(t)}) }{\partial W}
         > c\|\frac{\partial \psi(W^{(t)}) }{\partial W}\|,
    \end{math}
    that is, $\vtt$  \addedtext{vector has sufficient alignment with the gradient vector} to maximize the objective $\psi$ in expectation.
    Here, we regard $\psi$ as a function of $W$ in $ \frac{\partial \psi(W^{(t)}) }{\partial W}$.
    Moreover, we assume that in the optimization process, 
    $\|W^{(t)}\|$ does not diverge, i.e., we assume that for any $t \ge 1$, we have
    \begin{math}
        \| W^{(t)} \| < R
    \end{math}
    with a constant $R> 0$.
    In practice, this can be satisfied by adding $\ell^2$-regularization of $W$ to the optimization.
    We define a constant $R_0>0$ as $R_0 = \| W^*\| + R$,
    where $W^* = \argmax_W \psi(W)$. Using the above mild assumptions, we have the following  result (we provide a proof in Sec. \ref{sec:app-convergence-analysis-proof}):
    \vspace{-2mm}
\begin{theorem}
    \label{thm:convergence-analysis}
    For any $t > 1$, we have
\begin{math}
  \psi(W^*) -  \ex{\psi(W^{(t)})} \le \frac{4\gamma R_0^2}{c^2 (t-1)},
\end{math}
with an appropriate choice of the learning rate $\eta_t$.
\end{theorem}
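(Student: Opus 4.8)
The plan is to run a standard smooth–concave maximization argument in which the role of the gradient step is played by the normalized random direction $\vtt$, with the stochasticity controlled through the alignment assumption together with Jensen's inequality. Write $g_t := \frac{\partial \psi(W^{(t)})}{\partial W}$ for the gradient and $\delta_t := \psi(W^*) - \psi(W^{(t)}) \ge 0$ for the suboptimality gap. The target is a recursion of the form $\ex{\delta_{t+1}} \le \ex{\delta_t} - \alpha(\ex{\delta_t})^2$ for a constant $\alpha > 0$, from which the $O(1/t)$ rate follows by a reciprocal-telescoping lemma. The first step is to convert the gap into a gradient-norm lower bound: concavity of $\psi$ gives $\psi(W^*) \le \psi(W^{(t)}) + \langle g_t, W^* - W^{(t)}\rangle$, so by Cauchy--Schwarz and the boundedness assumption $\|W^{(t)}\| < R$,
\[
\delta_t \le \langle g_t, W^* - W^{(t)}\rangle \le \|g_t\|\,\|W^* - W^{(t)}\| \le R_0 \|g_t\|,
\]
using $\|W^* - W^{(t)}\| \le \|W^*\| + R = R_0$. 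Hence $\|g_t\| \ge \delta_t/R_0$, which is the inequality that ultimately produces the $R_0^2$ factor in the rate.

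Next I would quantify the per-step progress. Since $\psi$ has $\gamma$-Lipschitz gradient, the descent lemma (applied in the concave direction) together with the normalized update $W^{(t+1)} = W^{(t)} + \eta_t \vtt$ and $\|\vtt\| = 1$ gives
\[
\psi(W^{(t+1)}) \ge \psi(W^{(t)}) + \eta_t \langle g_t, \vtt\rangle - \tfrac{\gamma}{2}\eta_t^2.
\]
Taking the conditional expectation $\ex[t-1]{\cdot}$, noting that $g_t$ is determined by the history up to $t-1$, and invoking the alignment assumption $\langle \ex[t-1]{\vtt}, g_t\rangle > c\|g_t\|$ yields
\[
\ex[t-1]{\psi(W^{(t+1)})} \ge \psi(W^{(t)}) + \eta_t c \|g_t\| - \tfrac{\gamma}{2}\eta_t^2.
\]
Choosing the learning rate to maximize the right-hand side, $\eta_t = c\|g_t\|/\gamma$ (the ``appropriate choice''), gives progress at least $\tfrac{c^2\|g_t\|^2}{2\gamma} \ge \tfrac{c^2 \delta_t^2}{2\gamma R_0^2}$, and therefore the conditional decrease $\ex[t-1]{\delta_{t+1}} \le \delta_t - \tfrac{c^2}{2\gamma R_0^2}\,\delta_t^2$.

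Finally I would close the recursion. Taking full expectations and applying Jensen's inequality $\ex{\delta_t^2} \ge (\ex{\delta_t})^2$ turns the pointwise bound into $a_{t+1} \le a_t - \alpha a_t^2$ with $a_t := \ex{\delta_t}$ and $\alpha := \tfrac{c^2}{2\gamma R_0^2}$. Dividing through by $a_t a_{t+1}$ and using $1/(1-x) \ge 1+x$ gives $\tfrac{1}{a_{t+1}} - \tfrac{1}{a_t} \ge \alpha$; telescoping from $1$ to $t-1$ and discarding the nonnegative term $1/a_1$ yields $a_t \le \tfrac{1}{\alpha(t-1)} = \tfrac{2\gamma R_0^2}{c^2(t-1)}$, which matches the claimed rate up to the constant.

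The main obstacle is the interplay between the stochasticity and the nonlinearity of the recursion: the alignment assumption only controls $\vtt$ in conditional expectation, so the per-step decrease must be established conditionally and only then averaged, and the quadratic term $\delta_t^2$ must be lower-bounded by $(\ex{\delta_t})^2$ via Jensen before the deterministic reciprocal-telescoping lemma can be applied. A secondary technical point is verifying that the progress-maximizing learning rate is compatible with the boundedness constraint $\|W^{(t)}\| < R$; enforcing it by projection or by taking a slightly smaller step costs only a constant factor and is the most likely source of the gap between the clean bound $2\gamma R_0^2/(c^2(t-1))$ and the stated $4\gamma R_0^2/(c^2(t-1))$.
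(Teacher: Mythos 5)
Your proof is correct and takes essentially the same route as the paper's: per-step progress from smoothness combined with the alignment assumption under conditional expectation, the gap-to-gradient bound $\psi(W^*)-\psi(W^{(t)}) \le R_0\|\nabla\psi(W^{(t)})\|$ from concavity and the boundedness of $\|W^{(t)}\|$, an optimized learning rate, and reciprocal telescoping of the resulting quadratic recursion. The only differences are technical and benign: you invoke the descent lemma (remainder $\tfrac{\gamma}{2}\eta_t^2$) with a history-dependent step $\eta_t = c\|\nabla\psi(W^{(t)})\|/\gamma$, which forces the use of Jensen's inequality after taking full expectations and in fact yields the sharper constant $2\gamma R_0^2/(c^2(t-1))$, whereas the paper bounds the Taylor remainder more crudely by $\gamma\eta_t^2$ and chooses the deterministic step $\eta_t = \tfrac{c}{2\gamma}\,\mathbb{E}\|\nabla\psi(W^{(t)})\|$, which keeps the expected gradient norm linear (no Jensen needed) and lands on the stated factor of $4$.
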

\vspace{-2mm}

\label{sec:theoretical_analysis}
Theorem \ref{thm:convergence-analysis} states that the proposed Algorithm \ref{alg:ours} leads to convergence to the optimal metric value $\psi(W^*)$
if $\ex{V_{ij}^{(t)}}$ is a reasonable directional vector for optimization of $\psi$.

\textbf{Validity of the mixup sampling distribution.}
 By formalizing the optimization process as an online learning problem (a similar setting to that of Hedge \citep{freund1997decision}),
we state that our sampling method is valid.
For conciseness, we only provide an informal statement here and refer to Sec. \ref{sec:app-analysis-of-var-of-selmix-policy} 
for a more precise formulation.
We suppose that 
a mixup pair $(i, j)$ is sampled by a distribution $\mathcal{P}^{(t)}$ on $[K] \times [K]$
for $t=1,\dots, T$.
We call a sequence of sampling distributions $\boldsymbol{\mathcal{P}} = (\mathcal{P}^{(t)})_{t=1}^{T}$ a policy,
and call a policy $\boldsymbol{\mathcal{P}}$ non-adaptive if $\mathcal{P}^{(t)}$ is the same for all $1 \le t \le T$.
For example, if $\mathcal{P}^{(t)}$ is the uniform distribution for any $t$, then $\boldsymbol{\mathcal{P}}$ is non-adaptive.
Then, in Sec. \ref{sec:app-analysis-of-var-of-selmix-policy}, we shall prove the following statement regarding the optimality of $\boldsymbol{\mathcal{P}}_\text{SelMix}$:
\begin{theorem}[Informal]
The SelMix policy $\boldsymbol{\mathcal{P}}_\text{SelMix}$ is approximately better than any non-adaptive policy $\boldsymbol{\mathcal{P}_{\mathrm{na}}} = (\mathcal{P})_{t=1}^{T}$ in terms of the average gain if $T$ is sufficiently large.
\end{theorem}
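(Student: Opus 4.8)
The plan is to recast Algorithm~\ref{alg:ours} as an online linear optimization (``experts'') problem over the $K^2$ mixup pairs and to argue that $\boldsymbol{\mathcal{P}}_{\text{SelMix}}$ is a no-regret strategy relative to the non-adaptive policies. I identify each pair $(i,j)\in[K]\times[K]$ with an action; at round $t$ the learner commits to a distribution $\mathcal{P}^{(t)}$ over actions, after which the gain vector $G^{(t)}=[G^{(t)}_{ij}]$ induced by the current network $W^{(t)}$ is revealed, and the expected per-round gain is $\langle \mathcal{P}^{(t)}, G^{(t)}\rangle$. I take the average gain of a policy $\boldsymbol{\mathcal{P}}=(\mathcal{P}^{(t)})_{t=1}^{T}$ to be $\frac{1}{T}\sum_{t=1}^{T}\langle \mathcal{P}^{(t)},G^{(t)}\rangle$. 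Boundedness of the gains, $|G^{(t)}_{ij}|\le B$, follows from Theorem~\ref{thm:gain-approx} together with uniform bounds on $\|V_{ij}\|$ and $\|z_k\|$ (enforced by the $\ell^2$ regularization and $\|W^{(t)}\|<R$ used in Theorem~\ref{thm:convergence-analysis}). The first step is to note that $\mathcal{P}^{(t)}_{\text{SelMix}}=\mathrm{softmax}(sG^{(t)})$ is exactly the maximizer over the simplex of the entropy-regularized objective
\[
\mathcal{P}\mapsto \langle \mathcal{P}, G^{(t)}\rangle + \tfrac{1}{s}\,H(\mathcal{P}),\qquad H(\mathcal{P})=-\sum_{i,j}\mathcal{P}(i,j)\log\mathcal{P}(i,j),
\]
which is the exponentiated-weights / Hedge update with inverse temperature $s$.

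\textbf{Optimality on a fixed gain sequence.} Next I bound the regret of SelMix against the best fixed (hence non-adaptive) distribution evaluated on the \emph{same} realized sequence $(G^{(t)})_{t=1}^T$. Using the optimality defining $\mathcal{P}^{(t)}_{\text{SelMix}}$ above, for any fixed $\mathcal{P}$ one has $\langle \mathcal{P}^{(t)}_{\text{SelMix}},G^{(t)}\rangle \ge \langle \mathcal{P},G^{(t)}\rangle + \tfrac1s\big(H(\mathcal{P})-H(\mathcal{P}^{(t)}_{\text{SelMix}})\big)$. Summing over $t$ and using $0\le H(\cdot)\le 2\log K$ on $\Delta_{K^2-1}$ gives
\[
\frac{1}{T}\sum_{t=1}^{T}\langle \mathcal{P}^{(t)}_{\text{SelMix}},G^{(t)}\rangle \;\ge\; \frac{1}{T}\sum_{t=1}^{T}\langle \mathcal{P},G^{(t)}\rangle \;-\; \frac{2\log K}{s},
\]
so SelMix's expected average gain is within an additive $O((\log K)/s)$ of any non-adaptive policy's gain measured on SelMix's own trajectory. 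The sharper, fully $T$-vanishing form follows from the standard Hedge potential argument with the tuning $s\asymp\sqrt{(\log K)/T}$, giving regret $O(\sqrt{T\log K})$ and hence average gap $O(\sqrt{(\log K)/T})$; either bound yields the claimed ``approximately better'' as $T\to\infty$.

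\textbf{From expected to realized gain.} Because the algorithm actually samples $(i,j)\sim\mathcal{P}^{(t)}$ and receives the single realized gain $G^{(t)}_{A_t}$, I must relate the realized average gain to its expectation. Conditioned on the history up to $t-1$, the term $G^{(t)}_{A_t}-\langle\mathcal{P}^{(t)},G^{(t)}\rangle$ is a bounded, zero-mean martingale difference with conditional variance controlled by $\mathrm{Var}_{A_t\sim\mathcal{P}^{(t)}}(G^{(t)}_{A_t})\le B^2$; this is exactly the variance of the SelMix policy analyzed in Sec.~\ref{sec:app-analysis-of-var-of-selmix-policy}. An Azuma--Hoeffding (or Freedman) bound then gives, with probability $1-\delta$,
\[
\Big|\frac{1}{T}\sum_{t=1}^{T}G^{(t)}_{A_t} - \frac{1}{T}\sum_{t=1}^{T}\langle\mathcal{P}^{(t)},G^{(t)}\rangle\Big| \;\le\; B\sqrt{\frac{2\log(2/\delta)}{T}},
\]
which vanishes as $T\to\infty$. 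Chaining this with the regret bound shows the realized average gain of $\boldsymbol{\mathcal{P}}_{\text{SelMix}}$ is, up to $o(1)$ terms, at least that of any non-adaptive policy.

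\textbf{Main obstacle.} The genuine difficulty is that the gain sequence is \emph{endogenous}: each policy drives $W^{(t)}$ along a different path, so a non-adaptive policy's own average gain is computed on a sequence $(G^{(t)}_{\mathrm{na}})$ different from SelMix's $(G^{(t)})$, whereas the regret argument compares both on a common sequence. Bridging this is the crux: I would control $\|G^{(t)}-G^{(t)}_{\mathrm{na}}\|$ via the Lipschitzness of $\psi$ as a function of $W$ and the uniform bound $\|W^{(t)}\|<R$ from Theorem~\ref{thm:convergence-analysis}, so that trajectories started from the same pre-trained $W^{(0)}$ stay close enough that the cross-sequence comparison only loses a lower-order term. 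A secondary subtlety is that $\mathcal{P}^{(t)}_{\text{SelMix}}$ uses the \emph{current} gain $G^{(t)}$ rather than the cumulative gain driving textbook Hedge; I would reconcile this either through the entropy-regularized best-response inequality above (which needs no cumulation) or by exploiting the slow variation of $G^{(t)}$ along the optimization path to equate the current-gain softmax with a follow-the-regularized-leader iterate up to controllable error.
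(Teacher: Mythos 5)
Your proposal is correct within the same abstraction the paper uses, but its main line is a genuinely different argument from the paper's. The paper's formal result (Theorem \ref{thm:regbound}) analyzes a \emph{cumulative-gain} policy $\mathcal{P}^{(t)}_{\text{SelMix}} = \mathrm{softmax}\bigl(s\sum_{\tau \le t} G^{(\tau)}\bigr)$ and runs a mirror-descent telescoping argument with KL divergence anchored at the previous iterate: the first-order optimality condition gives $\langle \pi_0 - p^{(t)}, G^{(t)}\rangle \le \frac{1}{s}\bigl(D(\pi_0,p^{(t-1)}) - D(\pi_0,p^{(t)}) - D(p^{(t)},p^{(t-1)})\bigr)$, which telescopes to a \emph{total} regret of $2\log K/s$, i.e., an average gap $2\log K/(sT)$ that vanishes as $T\to\infty$ for fixed $s$. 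Your entropy-regularized best-response inequality instead analyzes the memoryless current-gain policy $\mathrm{softmax}(sG^{(t)})$ (which, to your credit, is what Algorithm \ref{alg:ours} actually implements), and because the entropy penalty is paid at every round rather than telescoped once, it yields only an average gap of $2\log K/s$, uniform in $T$ but \emph{not} vanishing as $T\to\infty$; this bound alone therefore does not deliver the ``if $T$ is sufficiently large'' part of the statement, and you must fall back on the standard Hedge potential argument with $s \asymp \sqrt{\log K/T}$ --- which is precisely the paper's variant, Theorem \ref{thm:regbound-variant}, and again requires cumulative gains. Two smaller points: (i) your protocol description (gain vector revealed \emph{after} the learner commits) contradicts the policy you analyze, which must see $G^{(t)}$ before acting; the paper is explicit that its setting differs from Hedge in exactly this lookahead, and the lookahead is what makes the constant-total-regret bound possible; (ii) your Azuma--Hoeffding step is superfluous, since the paper defines the average gain $\overline{G}^{(T)}$ directly as an expectation over the policy and data randomness. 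Finally, the endogeneity issue you flag as the main obstacle is real but is not resolved by the paper either: its formalization postulates a single gain process $\{G^{(t)}_{ij}\}$ common to all policies, so the cross-trajectory comparison never arises in the formal claim, and your sketched fix via trajectory closeness is neither needed for the statement as formalized nor likely to go through as stated.
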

\vspace{-3mm}

\vspace{-1.5mm}
\section{Experiments}
\label{sec:empirical_results}

\begin{table*}[!t]

    \caption{Comparison of metric values with various Semi-supervised Long-Tailed methods on CIFAR-10/100 LT under $\rho_l$ = $\rho_u$ setup. The best results are indicated in bold.}
    \label{tab:matched-results}
    \vspace{-2mm}
    \begin{adjustbox}{width=\columnwidth,center}
    \begin{tabular}{lccccc|cccccc}\toprule
    & \multicolumn{5}{c}{\textbf{CIFAR-10} \quad $(\rho_l=\rho_u=100, N_1=1500, M_1=3000)$} & \multicolumn{5}{c}{\textbf{CIFAR-100} \quad $(\rho_l=\rho_u=10, N_1=150, M_1=300)$}
    \\  \cmidrule(lr){2-6}\cmidrule(lr){7-12}
               & Mean Rec.  & Min Rec. & HM & GM & Mean Rec./Min Cov.  & Mean Rec.  & Min H-T Rec. & HM & GM & Mean Rec./Min H-T Cov. \\\midrule
    
    DARP~\cite{10.5555/3495724.3496945} & \s{83.3}{0.4} & \s{66.4}{3.1} & \s{81.9}{0.5} & \s{82.6}{0.4} & \s{83.3}{0.4}/\s{0.070}{3e-3} &  \s{56.5}{0.2} & \s{39.6}{1.1} & \s{48.7}{1.3} & \s{55.4}{0.5} & \s{56.5}{0.2}/\s{0.0040}{2e-3} & \\
    CReST~\cite{wei2021crest} & \s{82.1}{0.6} & \s{68.2}{3.2} & \s{81.0}{0.7} & \s{81.6}{0.7} & \s{82.1}{0.6}/\s{0.073}{5e-3} &  \s{58.2}{0.2} & \s{40.7}{0.7} & \s{48.3}{0.2} & \s{54.1}{0.1} & \s{58.2}{0.2}/\s{0.0083}{2e-4} \\
    CReST+~\cite{wei2021crest}   & \s{83.1}{0.3} & \s{71.3}{1.5} & \s{82.2}{0.2} & \s{82.6}{0.3} & \s{83.1}{0.3}/\s{0.076}{2e-3}  & \s{57.8}{0.8} & \s{42.1}{0.7} & \s{48.2}{0.6} & \s{53.8}{0.9} & \s{57.8}{0.8}/\s{0.0088}{1e-4} \\
    
    ABC~\cite{lee2021abc} & \s{85.1}{0.5} & \s{74.1}{0.6} & \s{84.6}{0.5} & \s{84.9}{0.6} & \s{85.1}{0.5}/\s{0.086}{3e-3}  & \s{59.7}{0.2} & \s{46.4}{0.6} & \s{50.1}{1.2} & \s{55.6}{0.4} & \s{59.7}{0.2}/\s{0.0089}{3e-4} \\
    CoSSL~\cite{fan2021cossl} & \s{82.0}{0.3} & \s{70.6}{0.9} & \s{81.3}{0.5} & \s{81.6}{0.3} & \s{82.0}{0.3}/\s{0.074}{4e-3}  & \s{57.9}{0.4} & \s{46.3}{0.5} & \s{53.7}{0.8} & \s{55.2}{0.7} & \s{57.9}{0.4}/\s{0.0051}{3e-4} \\
    DASO~\cite{oh2022daso} & \s{84.1}{0.3} & \s{72.6}{2.1} & \s{83.5}{0.3} & \s{83.8}{0.3} & \s{84.1}{0.3}/\s{0.083}{1e-3} &   \s{\textbf{60.6}}{0.2} & \s{40.9}{0.4} & \s{49.1}{0.7} & \s{55.9}{0.1} & \s{60.6}{0.2}/\s{0.0063}{3e-4} \\
    CSST~\cite{rangwani2022costsensitive} & \s{81.1}{0.2} & \s{71.7}{0.2} & \s{76.9}{0.2} & \s{77.7}{0.7} & \s{81.1}{0.2}/\s{0.090}{2e-4} & \s{57.2}{0.2} & \s{48.4}{0.3} & \s{47.7}{0.8} & \s{53.5}{0.4} & \s{57.2}{0.2}/\s{\textbf{0.0099}}{2e-3}  \\
    
    FixMatch(LA) & \s{79.7}{0.6} & \s{55.9}{1.9} & \s{76.7}{0.1} & \s{78.3}{0.1} & \s{79.7}{0.6}/\s{0.056}{3e-3} & \s{58.8}{0.1} & \s{34.6}{0.6} & \s{45.5}{2.1} & \s{53.4}{0.4} & \s{58.8}{0.1}/\s{0.0053}{1e-5} \\
    
    \quad w/SelMix (Ours) & \s{\textbf{85.4}}{0.1} & \s{\textbf{79.1}}{0.1} & \s{\textbf{85.1}}{0.1} & \s{\textbf{85.3}}{0.1} & \s{\textbf{85.7}}{0.2}/\s{\textbf{0.095}}{1e-3} & \s{\textbf{59.8}}{{0.2}} & \s{\textbf{57.8}}{0.5} & \s{\textbf{53.8}}{0.5} & \s{\textbf{56.7}}{0.4} & \s{\textbf{59.6}}{0.5}/\s{\textbf{0.0098}}{5e-5} \\\bottomrule
    \end{tabular}
    \end{adjustbox}
    \vspace{-5mm} 
\end{table*}

\vspace{-3mm} We demonstrate the effectiveness of SelMix in optimizing various Non-Decomposable objectives across different labeled and unlabeled data distributions. Following conventions for Long-Tail (LT) classification, $N_i$ and $M_i$ represent the number of samples in the $i^{\text{th}}$ class for the labeled and unlabeled sets, respectively. The label distribution is exponential in nature, and the imbalance factor $\rho$ characterizes it. We define it as $\rho_l = N_1/N_K, \addedtext{\rho_u} = M_1/M_K$.
In our experiments, we consider the LT semi-supervised version for CIFAR-10,100, Imagenet-100, and STL-10 datasets as done by \cite{fan2021cossl, oh2022daso, 10.5555/3495724.3496945, lee2021abc, rangwani2022costsensitive}. For the experiments on the long-tailed supervised dataset, we consider the Long-Tailed versions of CIFAR-10, 100, and ImageNet-1k. The parameters for the datasets are available in Tab. \ref{tab:hyperparams}.

\textbf{Training Details:} Our classifier comprises a feature extractor $g: \mX \rightarrow \mathbb{R}^d$ and a linear layer with weight $W$ (see Sec. \ref{sec:problem_setup}). In semi-supervised learning, we use the pre-trained Wide ResNet-28-2~\cite{DBLP:journals/corr/ZagoruykoK16} with FixMatch~\cite{sohn2020fixmatch}, replacing the loss function with the logit adjusted (LA) cross-entropy loss~\cite{menon2020long} for debiased pseudo-labels. Fine-tuning with SelMix (Alg. \ref{alg:ours}) includes cosine learning rate and SGD optimizer. In supervised learning, we pre-train models with MiSLAS on ResNet-32 for CIFAR-10, CIFAR-100, and ResNet-50 for ImageNet-1k. We freeze batch norm layers and fine-tune the feature extractor with a low learning rate to maintain stable mean feature statistics $z_k$, as per our theoretical findings. Further details and hyperparameters are provided in appendix Table \ref{tab:hyperparams}.

\textbf{Evaluation Setup.} We evaluate our work against baselines CReST, CReST+~\cite{wei2021crest}, DASO~\cite{oh2022daso}, DARP~\cite{10.5555/3495724.3496945}, and ABC~\cite{lee2021abc} in semi-supervised long-tailed learning.  We assess the methods based on two sets of metric objectives: a) \textbf{Unconstrained objectives}, including G-mean, H-mean, Mean (Arithmetic Mean), and worst-case (Min.) Recall. b) \textbf{Constrained objectives}, involving maximizing recalls under coverage constraints. The constraint requires coverage $\geq \frac{0.95}{K}$ for all classes. For CIFAR-100, we optimize Min Head-Tail Recall/Min Head-Tail coverage instead of Min Recall/Coverage due to its small size. The tail corresponds to the least frequent 10 classes, and the head the rest 90 classes. For detailed metric objectives and definitions, refer to Table \ref{tab:metric_full}. We present results as mean and standard deviation across three seeds.

\textbf{Matched Label Distributions.} We report results for $\rho_l = \rho_u$, signifying matched labeled and unlabeled class label distributions. SelMix outperforms FixMatch (LA), achieving a 5\% Min Recall boost for CIFAR-10 and a 9.8\% improvement in Min HT Recall for CIFAR-100. SelMix also excels in mean recall, akin to accuracy. Its strategy starts with tail class enhancement, transitioning to uniform mixups (App. \ref{app:gain-matrix-train}).We delve into optimizing coverage-constrained objectives ($\cov_{\text{i}}[h] \geq \frac{0.95}{K}$). Initially, we emphasize mean recall with coverage constraints, supported by CSST. However, SelMix, a versatile method, accommodates objectives like H-mean with coverage (App. \ref{app:hmean-coverage}). Table \ref{tab:matched-results} reveals that most SotA methods miss minimum coverage values, except CSST and SelMix. SelMix outperforms CSST in mean recall while meeting constraints, as confirmed in our detailed analysis (App. \ref{app:detailed-analysis}).
\begin{figure*}[!t]
    \centering
    \includegraphics[width=\textwidth]{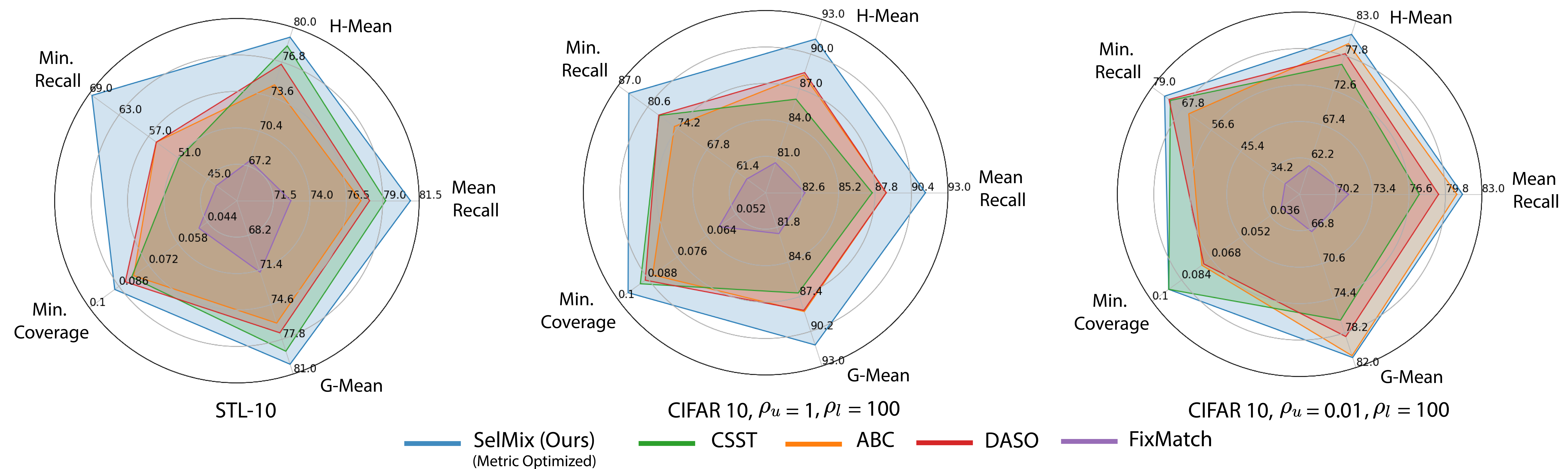}
    \caption{Comparison of metric for semi-supervised CIFAR-10 LT under $\rho_l \neq \rho_u$ and STL-10 $\rho_u = NA$  assumption. For CIFAR-10-LT (semi-supervised) involve $\rho_l = 100, \rho_u = 1$, (uniform) and $\rho_l = 100, \rho_u = \frac{1}{100}$ (inverted). SelMix achieves significant gains over other baselines.}
    \label{fig:summary_mismatched_radar}
    \vspace{-2mm}
\end{figure*}

\begin{table*}[!t]
\vspace{-2mm}
\caption{Comparison results in supervised case for CIFAR-10,100 LT ($\rho=100$). We use the pre-trained model of MiSLAS \cite{zhong2021mislas} in stage-1 and fine-tune using SelMix.}
\vspace{-2mm}
\begin{adjustbox}{width=\columnwidth,center}
\begin{tabular}{lccccc|cccccc}\toprule
& \multicolumn{5}{c}{\textbf{CIFAR-10} \quad $(\rho_l=100)$} & \multicolumn{5}{c}{\textbf{CIFAR-100} \quad $(\rho_l=10)$}
\\  \cmidrule(lr){2-6}\cmidrule(lr){7-12}
           & Mean Rec.  & Min Rec. & HM & GM & Mean Rec./Min Cov.  & Mean Rec.  & Min H-T Rec. & HM & GM & Mean Rec./Min H-T Cov. \\\midrule
MisLaS (Stage 1)~\cite{zhong2021mislas}   &  \s{72.7}{0.3} & \s{45.6}{2.3} & \s{70.3}{1.4} & \s{72.5}{0.9} & \s{72.7}{0.3}/\s{0.045}{2e-3} & \s{39.5}{0.2} & \s{1.2}{0.5} & \s{0.0}{0.0} & \s{0.0}{0.0} & \s{39.5}{0.2}/\s{0.0001}{2e-5}  \\
\quad w/ Stage 2~\cite{zhong2021mislas}  & \s{81.9}{0.1} & \s{72.5}{0.8} & \s{81.3}{0.9} & \s{81.6}{0.1} & \s{81.9}{0.1}/\s{0.077}{0.003} & \s{47.0}{0.4} & \s{15.2}{1.1} & \s{30.9}{0.6} & \s{39.9}{0.5} & \s{47.0}{0.4}/\s{0.0055}{2e-4} \\
\quad w/ SelMix (Ours)& \s{\textbf{83.3}}{0.2} & \s{\textbf{79.2}}{0.7} & \s{\textbf{82.6}}{0.5} & \s{\textbf{82.8}}{0.3} & \s{\textbf{82.8}}{0.2}/\s{\textbf{0.095}}{0.002} &  \s{\textbf{48.3}}{0.1} & \s{\textbf{41.3}}{1.4} & \s{\textbf{38.2}}{0.8} & \s{\textbf{42.3}}{0.5} & \s{\textbf{47.8}}{0.2}/\s{\textbf{0.0095}}{2e-4} &
 \\\bottomrule
\end{tabular}
\label{tab:cifar10/100-sup}
\end{adjustbox}
\vspace{-2.5mm}
 \end{table*}

\begin{table*}[!t]
\caption{Results for scaling SelMix to large datasets ImageNet-1k LT and ImageNet100 LT. }
\vspace{-3mm}
\begin{adjustbox}{width=\columnwidth,center}
\begin{tabular}{lccc|lccc}\toprule
& \multicolumn{3}{c}{\textbf{ImageNet100-LT ($\rho = 10$)}}& & \multicolumn{3}{c}{\textbf{ImageNet1k-LT }} \\  
\midrule
 & Mean Rec. & Min Rec. & Mean Rec./Min Cov.&   & Mean Rec. & Min HT Rec. &  Mean Rec./Min Cov. \\ \midrule
  CSST~\cite{rangwani2022costsensitive} & 59.1 & 12.1 &  59.1/0.003 & MiSLAS (Stage 1)~\cite{zhong2021mislas} & 45.4 & 4.1 & 45.4/0.00000 \\
  Fixmatch (LA) & 69.9 & 6.0 & 69.9/0.002 &  \quad w/ Stage 2 & 52.4 & 29.7 & 52.4/0.00068\\
 \quad  w/ SelMix & \textbf{73.5} & \textbf{24.0} & \textbf{73.1}/\textbf{0.009} &  \quad w/ SelMix & \textbf{52.8} & \textbf{45.1} & \textbf{52.5}/\textbf{0.00099}\\
 \bottomrule

\end{tabular}
\end{adjustbox}
\label{tab:large-data}
\vspace{-5mm}
 \end{table*}

\textbf{Unknown Label Distributions.} We address the practical scenario where the labeled data's label distribution differs from that of the unlabeled data ($\gamma_l \neq \gamma_u$). We assess two cases: \textit{a) Mismatched Distributions.} We evaluate various techniques on CIFAR-10 with two mismatched unlabeled distributions: balanced ($\rho_u=1$) and inverse ($\frac{1}{100}$). SelMix consistently outperforms all methods, especially in min. Recall and coverage-constrained objectives (Fig. \ref{fig:summary_mismatched_radar}). \textit{b) Real World Unknown Label Distributions.} STL-10 provides an additional 100k samples with an unknown label distribution, emulating scenarios where data is abundant but labels are scarce. SelMix, with no distributional assumptions, outperforms SotA methods like CSST and CRest (which assume matched distribution) in min-recall by a substantial 12.7\% margin (Fig. \ref{fig:summary_mismatched_radar}). Detailed results can be found in App. \ref{app:mismatched_comparison}.

\textbf{Results on SelMix in Supervised Learning.} To further demonstrate the generality of SelMix, we test it for optimizing non-decomposable objectives via fine-tuning a recent SotA work MisLaS~\cite{zhong2021mislas} for supervised learning. In comparison to fine-tuning stage-2 of MisLaS, SelMix-based fine-tuning achieves better performance across all objectives as in Table~\ref{tab:cifar10/100-sup}, for both CIFAR 10,100-LT.

\textbf{Analysis of SelMix.} We demonstrate SelMix's scalability on large-scale datasets like Imagenet-1k LT and Imagenet-100 LT and its ability to improve the objective compared to the baseline Tab. \ref{tab:large-data}, with minimal additional compute cost ($\sim$ 2 min.) (see Table \ref{tab:time-required}), through Thm. \ref{thm:regbound-variant}, we show the advantage of SelMix over uniform random sampling and the limitations of a purely greedy policy (ref. Table ~\ref{tab:policy-comparison} for empirical evidence). We observe improved feature extractor learning by comparing a trainable backbone to a frozen one (Tab. \ref{tab:backbone-scaling}). Additionally, our work can be combined with other mixup variants like \cite{kim2020puzzle, yun2019cutmix}, leading to performance enhancements (Tab. \ref{tab:mixup_variants}) demonstrating the diverse applicability for the proposed SelMix method. We refer readers to the Appendix for details on complexity (Sec.\ ~\ref{app:complexity},~\ref{app:computation}), analysis (Sec.\ ~\ref{app:analysis}), and limitations (Sec. ~\ref{app:limitations}).

\vspace{-4mm}
\section{Conclusion and Discussion}
\vspace{-3mm} We study the optimization of practical but complex metrics like the G-mean and H-mean of Recalls, along with objectives with fairness constraints in the case of neural networks. We find that SotA techniques achieve sub-optimal performance in terms of these practical metrics, notably on worst-case recall. These metrics and constraints are non-decomposable objectives, for which we propose a Selective Mixup (SelMix) based fine-tuning algorithm for optimizing them. The algorithm selects samples from particular classes to a mixup to improve a tractable approximation of the non-decomposable objective.  Our method\addedtext{,} SelMix, can improve on the majority of objectives in comparison to both theoretical and empirical SotA methods, bridging the gap between theory and practice. We expect \addedtext{the} SelMix fine-tuning technique to be used for improving existing models by improving on worst-case and fairness metrics inexpensively.

\subsubsection*{Acknowledgments}
The authors would like to thank Yasunari Hikima for providing useful comments on a preliminary version of this paper. We thank the KIAC Centre for its financial support. Harsh Rangwani is supported by the Prime Minister's Research Fellowship program. 

\bibliography{mybib}

\begin{thebibliography}{52}
\providecommand{\natexlab}[1]{#1}
\providecommand{\url}[1]{\texttt{#1}}
\expandafter\ifx\csname urlstyle\endcsname\relax
  \providecommand{\doi}[1]{doi: #1}\else
  \providecommand{\doi}{doi: \begingroup \urlstyle{rm}\Url}\fi

\bibitem[Auer et~al.(2002)Auer, Cesa-Bianch, Freund, and E.~Schapire]{auer2002}
Peter Auer, Nicolo Cesa-Bianch, Yoav Freund, and Robert E.~Schapire.
\newblock The non-stochastic multi-armed bandit problem.
\newblock \emph{SIAM Journal of Computing}, 32\penalty0 (1):\penalty0 48--77, 2002.

\bibitem[Berthelot et~al.(2019{\natexlab{a}})Berthelot, Carlini, Cubuk, Kurakin, Sohn, Zhang, and Raffel]{DBLP:journals/corr/abs-1911-09785}
David Berthelot, Nicholas Carlini, Ekin~D. Cubuk, Alex Kurakin, Kihyuk Sohn, Han Zhang, and Colin Raffel.
\newblock Remixmatch: Semi-supervised learning with distribution alignment and augmentation anchoring.
\newblock \emph{CoRR}, abs/1911.09785, 2019{\natexlab{a}}.

\bibitem[Berthelot et~al.(2019{\natexlab{b}})Berthelot, Carlini, Goodfellow, Papernot, Oliver, and Raffel]{berthelot2019mixmatch}
David Berthelot, Nicholas Carlini, Ian Goodfellow, Nicolas Papernot, Avital Oliver, and Colin~A Raffel.
\newblock Mixmatch: A holistic approach to semi-supervised learning.
\newblock In H.~Wallach, H.~Larochelle, A.~Beygelzimer, F.~d\textquotesingle Alch\'{e}-Buc, E.~Fox, and R.~Garnett (eds.), \emph{Advances in Neural Information Processing Systems}. Curran Associates, Inc., 2019{\natexlab{b}}.

\bibitem[Castelvecchi(2020)]{castelvecchi2020facial}
Davide Castelvecchi.
\newblock Is facial recognition too biased to be let loose?
\newblock \emph{Nature}, 587\penalty0 (7834):\penalty0 347--350, 2020.

\bibitem[Chen et~al.(2022)Chen, Sun, He, Torr, Yuille, and Bai]{chen2022transmix}
Jie-Neng Chen, Shuyang Sun, Ju~He, Philip~HS Torr, Alan Yuille, and Song Bai.
\newblock Transmix: Attend to mix for vision transformers.
\newblock In \emph{Proceedings of the IEEE/CVF Conference on Computer Vision and Pattern Recognition}, pp.\  12135--12144, 2022.

\bibitem[Cotter et~al.(2019)Cotter, Jiang, Gupta, Wang, Narayan, You, and Sridharan]{cotter2019optimization}
Andrew Cotter, Heinrich Jiang, Maya~R Gupta, Serena Wang, Taman Narayan, Seungil You, and Karthik Sridharan.
\newblock Optimization with non-differentiable constraints with applications to fairness, recall, churn, and other goals.
\newblock \emph{J. Mach. Learn. Res.}, 20\penalty0 (172):\penalty0 1--59, 2019.

\bibitem[Eban et~al.(2017)Eban, Schain, Mackey, Gordon, Rifkin, and Elidan]{eban2017scalable}
Elad Eban, Mariano Schain, Alan Mackey, Ariel Gordon, Ryan Rifkin, and Gal Elidan.
\newblock Scalable learning of non-decomposable objectives.
\newblock In \emph{Artificial intelligence and statistics}, pp.\  832--840. PMLR, 2017.

\bibitem[Fan et~al.(2022)Fan, Dai, Kukleva, and Schiele]{fan2021cossl}
Yue Fan, Dengxin Dai, Anna Kukleva, and Bernt Schiele.
\newblock Cossl: Co-learning of representation and classifier for imbalanced semi-supervised learning.
\newblock In \emph{The IEEE Conference on Computer Vision and Pattern Recognition (CVPR)}, 2022.

\bibitem[Freund \& Schapire(1997)Freund and Schapire]{freund1997decision}
Yoav Freund and Robert~E Schapire.
\newblock A decision-theoretic generalization of on-line learning and an application to boosting.
\newblock \emph{Journal of computer and system sciences}, 55\penalty0 (1):\penalty0 119--139, 1997.

\bibitem[Girdhar et~al.(2023)Girdhar, El-Nouby, Liu, Singh, Alwala, Joulin, and Misra]{girdhar2023imagebind}
Rohit Girdhar, Alaaeldin El-Nouby, Zhuang Liu, Mannat Singh, Kalyan~Vasudev Alwala, Armand Joulin, and Ishan Misra.
\newblock Imagebind: One embedding space to bind them all.
\newblock \emph{arXiv preprint arXiv:2305.05665}, 2023.

\bibitem[Goh et~al.(2016)Goh, Cotter, Gupta, and Friedlander]{goh2016satisfying}
Gabriel Goh, Andrew Cotter, Maya Gupta, and Michael~P Friedlander.
\newblock Satisfying real-world goals with dataset constraints.
\newblock \emph{Advances in Neural Information Processing Systems}, 29, 2016.

\bibitem[He et~al.(2022)He, Chen, Xie, Li, Doll{\'a}r, and Girshick]{he2022masked}
Kaiming He, Xinlei Chen, Saining Xie, Yanghao Li, Piotr Doll{\'a}r, and Ross Girshick.
\newblock Masked autoencoders are scalable vision learners.
\newblock In \emph{Proceedings of the IEEE/CVF Conference on Computer Vision and Pattern Recognition}, pp.\  16000--16009, 2022.

\bibitem[Hwang et~al.(2022)Hwang, Lee, Kwak, Oh, Teney, Kim, and Zhang]{hwang2022selecmix}
Inwoo Hwang, Sangjun Lee, Yunhyeok Kwak, Seong~Joon Oh, Damien Teney, Jin-Hwa Kim, and Byoung-Tak Zhang.
\newblock Selecmix: Debiased learning by contradicting-pair sampling.
\newblock \emph{Advances in Neural Information Processing Systems}, 35:\penalty0 14345--14357, 2022.

\bibitem[Kar et~al.(2016)Kar, Li, Narasimhan, Chawla, and Sebastiani]{kar2016online}
Purushottam Kar, Shuai Li, Harikrishna Narasimhan, Sanjay Chawla, and Fabrizio Sebastiani.
\newblock Online optimization methods for the quantification problem.
\newblock In \emph{Proceedings of the 22nd ACM SIGKDD international conference on knowledge discovery and data mining}, pp.\  1625--1634, 2016.

\bibitem[Kennedy et~al.(2010)Kennedy, Mac~Namee, and Delany]{kennedy2010learning}
Kenneth Kennedy, Brian Mac~Namee, and Sarah~Jane Delany.
\newblock Learning without default: A study of one-class classification and the low-default portfolio problem.
\newblock In \emph{Artificial Intelligence and Cognitive Science: 20th Irish Conference, AICS 2009, Dublin, Ireland, August 19-21, 2009, Revised Selected Papers 20}, pp.\  174--187. Springer, 2010.

\bibitem[Kim et~al.(2020{\natexlab{a}})Kim, Hur, Park, Yang, Hwang, and Shin]{10.5555/3495724.3496945}
Jaehyung Kim, Youngbum Hur, Sejun Park, Eunho Yang, Sung~Ju Hwang, and Jinwoo Shin.
\newblock Distribution aligning refinery of pseudo-label for imbalanced semi-supervised learning.
\newblock In \emph{Proceedings of the 34th International Conference on Neural Information Processing Systems}, NIPS'20, Red Hook, NY, USA, 2020{\natexlab{a}}. Curran Associates Inc.
\newblock ISBN 9781713829546.

\bibitem[Kim et~al.(2020{\natexlab{b}})Kim, Choo, and Song]{kim2020puzzle}
Jang-Hyun Kim, Wonho Choo, and Hyun~Oh Song.
\newblock Puzzle mix: Exploiting saliency and local statistics for optimal mixup.
\newblock In \emph{International Conference on Machine Learning}, pp.\  5275--5285. PMLR, 2020{\natexlab{b}}.

\bibitem[Kirillov et~al.(2023)Kirillov, Mintun, Ravi, Mao, Rolland, Gustafson, Xiao, Whitehead, Berg, Lo, et~al.]{kirillov2023segment}
Alexander Kirillov, Eric Mintun, Nikhila Ravi, Hanzi Mao, Chloe Rolland, Laura Gustafson, Tete Xiao, Spencer Whitehead, Alexander~C Berg, Wan-Yen Lo, et~al.
\newblock Segment anything.
\newblock \emph{arXiv preprint arXiv:2304.02643}, 2023.

\bibitem[Kolesnikov et~al.(2020)Kolesnikov, Beyer, Zhai, Puigcerver, Yung, Gelly, and Houlsby]{kolesnikov2020big}
Alexander Kolesnikov, Lucas Beyer, Xiaohua Zhai, Joan Puigcerver, Jessica Yung, Sylvain Gelly, and Neil Houlsby.
\newblock Big transfer (bit): General visual representation learning.
\newblock In \emph{European conference on computer vision}, pp.\  491--507. Springer, 2020.

\bibitem[Krizhevsky \& Hinton(2009)Krizhevsky and Hinton]{krizhevsky2009learning}
Alex Krizhevsky and Geoffrey Hinton.
\newblock Learning multiple layers of features from tiny images.
\newblock 2009.
\newblock Technical report, University of Toronto.

\bibitem[Lattimore \& Szepesv{\'a}ri(2020)Lattimore and Szepesv{\'a}ri]{lattimore2020bandit}
Tor Lattimore and Csaba Szepesv{\'a}ri.
\newblock \emph{Bandit algorithms}.
\newblock Cambridge University Press, 2020.

\bibitem[Lee et~al.(2021)Lee, Shin, and Kim]{lee2021abc}
Hyuck Lee, Seungjae Shin, and Heeyoung Kim.
\newblock Abc: Auxiliary balanced classifier for class-imbalanced semi-supervised learning.
\newblock \emph{Advances in Neural Information Processing Systems}, 34:\penalty0 7082--7094, 2021.

\bibitem[Li et~al.(2023)Li, Pan, Lyu, Yao, Sui, and Tsang]{li2023earning}
Jing Li, Yuangang Pan, Yueming Lyu, Yinghua Yao, Yulei Sui, and Ivor~W Tsang.
\newblock Earning extra performance from restrictive feedbacks.
\newblock \emph{IEEE Transactions on Pattern Analysis and Machine Intelligence}, 2023.

\bibitem[Menon et~al.(2020)Menon, Jayasumana, Rawat, Jain, Veit, and Kumar]{menon2020long}
Aditya~Krishna Menon, Sadeep Jayasumana, Ankit~Singh Rawat, Himanshu Jain, Andreas Veit, and Sanjiv Kumar.
\newblock Long-tail learning via logit adjustment.
\newblock In \emph{International Conference on Learning Representations}, 2020.

\bibitem[Mohri et~al.(2019)Mohri, Sivek, and Suresh]{mohri2019agnostic}
Mehryar Mohri, Gary Sivek, and Ananda~Theertha Suresh.
\newblock Agnostic federated learning.
\newblock In \emph{International Conference on Machine Learning}, pp.\  4615--4625. PMLR, 2019.

\bibitem[Narasimhan \& Menon(2021)Narasimhan and Menon]{narasimhan2021training}
Harikrishna Narasimhan and Aditya~K Menon.
\newblock Training over-parameterized models with non-decomposable objectives.
\newblock \emph{Advances in Neural Information Processing Systems}, 34, 2021.

\bibitem[Narasimhan et~al.(2014)Narasimhan, Vaish, and Agarwal]{narasimhan2014statistical}
Harikrishna Narasimhan, Rohit Vaish, and Shivani Agarwal.
\newblock On the statistical consistency of plug-in classifiers for non-decomposable performance measures.
\newblock \emph{Advances in neural information processing systems}, 27, 2014.

\bibitem[Narasimhan et~al.(2015{\natexlab{a}})Narasimhan, Kar, and Jain]{narasimhan2015optimizing}
Harikrishna Narasimhan, Purushottam Kar, and Prateek Jain.
\newblock Optimizing non-decomposable performance measures: A tale of two classes.
\newblock In \emph{International Conference on Machine Learning}, pp.\  199--208. PMLR, 2015{\natexlab{a}}.

\bibitem[Narasimhan et~al.(2015{\natexlab{b}})Narasimhan, Ramaswamy, Saha, and Agarwal]{narasimhan2015consistent}
Harikrishna Narasimhan, Harish Ramaswamy, Aadirupa Saha, and Shivani Agarwal.
\newblock Consistent multiclass algorithms for complex performance measures.
\newblock In \emph{International Conference on Machine Learning}, pp.\  2398--2407. PMLR, 2015{\natexlab{b}}.

\bibitem[Narasimhan et~al.(2022)Narasimhan, Ramaswamy, Tavker, Khurana, Netrapalli, and Agarwal]{narasimhan2022consistent}
Harikrishna Narasimhan, Harish~G Ramaswamy, Shiv~Kumar Tavker, Drona Khurana, Praneeth Netrapalli, and Shivani Agarwal.
\newblock Consistent multiclass algorithms for complex metrics and constraints.
\newblock \emph{arXiv preprint arXiv:2210.09695}, 2022.

\bibitem[Oh et~al.(2022)Oh, Kim, and Kweon]{oh2022daso}
Youngtaek Oh, Dong-Jin Kim, and In~So Kweon.
\newblock Daso: Distribution-aware semantics-oriented pseudo-label for imbalanced semi-supervised learning.
\newblock In \emph{Proceedings of the IEEE/CVF Conference on Computer Vision and Pattern Recognition}, pp.\  9786--9796, 2022.

\bibitem[Palakkadavath et~al.(2022)Palakkadavath, Nguyen-Tang, Gupta, and Venkatesh]{palakkadavath2022improving}
Ragja Palakkadavath, Thanh Nguyen-Tang, Sunil Gupta, and Svetha Venkatesh.
\newblock Improving domain generalization with interpolation robustness.
\newblock In \emph{NeurIPS 2022 Workshop on Distribution Shifts: Connecting Methods and Applications}, 2022.

\bibitem[Rangwani et~al.(2022)Rangwani, Ramasubramanian, Takemori, Takashi, Umeda, and Radhakrishnan]{rangwani2022costsensitive}
Harsh Rangwani, Shrinivas Ramasubramanian, Sho Takemori, Kato Takashi, Yuhei Umeda, and Venkatesh~Babu Radhakrishnan.
\newblock Cost-sensitive self-training for optimizing non-decomposable metrics.
\newblock In Alice~H. Oh, Alekh Agarwal, Danielle Belgrave, and Kyunghyun Cho (eds.), \emph{Advances in Neural Information Processing Systems}, 2022.

\bibitem[Russakovsky et~al.(2015)Russakovsky, Deng, Su, Krause, Satheesh, Ma, Huang, Karpathy, Khosla, Bernstein, et~al.]{russakovsky2015imagenet}
Olga Russakovsky, Jia Deng, Hao Su, Jonathan Krause, Sanjeev Satheesh, Sean Ma, Zhiheng Huang, Andrej Karpathy, Aditya Khosla, Michael Bernstein, et~al.
\newblock Imagenet large scale visual recognition challenge.
\newblock \emph{International journal of computer vision}, 115\penalty0 (3):\penalty0 211--252, 2015.

\bibitem[Sanyal et~al.(2018)Sanyal, Kumar, Kar, Chawla, and Sebastiani]{sanyal2018optimizing}
Amartya Sanyal, Pawan Kumar, Purushottam Kar, Sanjay Chawla, and Fabrizio Sebastiani.
\newblock Optimizing non-decomposable measures with deep networks.
\newblock \emph{Machine Learning}, 107\penalty0 (8):\penalty0 1597--1620, 2018.

\bibitem[Sohn et~al.(2020)Sohn, Berthelot, Carlini, Zhang, Zhang, Raffel, Cubuk, Kurakin, and Li]{sohn2020fixmatch}
Kihyuk Sohn, David Berthelot, Nicholas Carlini, Zizhao Zhang, Han Zhang, Colin~A Raffel, Ekin~Dogus Cubuk, Alexey Kurakin, and Chun-Liang Li.
\newblock Fixmatch: Simplifying semi-supervised learning with consistency and confidence.
\newblock \emph{Advances in Neural Information Processing Systems}, 33:\penalty0 596--608, 2020.

\bibitem[Sun et~al.(2006)Sun, Kamel, and Wang]{sun2006boosting}
Yanmin Sun, Mohamed~S Kamel, and Yang Wang.
\newblock Boosting for learning multiple classes with imbalanced class distribution.
\newblock In \emph{Sixth international conference on data mining (ICDM'06)}, pp.\  592--602. IEEE, 2006.

\bibitem[Teney et~al.(2023)Teney, Wang, and Abbasnejad]{teney2023selective}
Damien Teney, Jindong Wang, and Ehsan Abbasnejad.
\newblock Selective mixup helps with distribution shifts, but not (only) because of mixup.
\newblock \emph{arXiv preprint arXiv:2305.16817}, 2023.

\bibitem[Uddin et~al.(2020)Uddin, Monira, Shin, Chung, Bae, et~al.]{uddin2020saliencymix}
AFM Uddin, Mst Monira, Wheemyung Shin, TaeChoong Chung, Sung-Ho Bae, et~al.
\newblock Saliencymix: A saliency guided data augmentation strategy for better regularization.
\newblock \emph{arXiv preprint arXiv:2006.01791}, 2020.

\bibitem[Verma et~al.(2019)Verma, Lamb, Beckham, Najafi, Mitliagkas, Lopez-Paz, and Bengio]{verma2019manifold}
Vikas Verma, Alex Lamb, Christopher Beckham, Amir Najafi, Ioannis Mitliagkas, David Lopez-Paz, and Yoshua Bengio.
\newblock Manifold mixup: Better representations by interpolating hidden states.
\newblock In \emph{International Conference on Machine Learning}, pp.\  6438--6447. PMLR, 2019.

\bibitem[Wang \& Yao(2012)Wang and Yao]{wang2012multiclass}
Shuo Wang and Xin Yao.
\newblock Multiclass imbalance problems: Analysis and potential solutions.
\newblock \emph{IEEE Transactions on Systems, Man, and Cybernetics, Part B (Cybernetics)}, 42\penalty0 (4):\penalty0 1119--1130, 2012.

\bibitem[Wei et~al.(2021{\natexlab{a}})Wei, Sohn, Mellina, Yuille, and Yang]{Wei_2021_CVPR}
Chen Wei, Kihyuk Sohn, Clayton Mellina, Alan Yuille, and Fan Yang.
\newblock Crest: A class-rebalancing self-training framework for imbalanced semi-supervised learning.
\newblock In \emph{Proceedings of the IEEE/CVF Conference on Computer Vision and Pattern Recognition (CVPR)}, pp.\  10857--10866, June 2021{\natexlab{a}}.

\bibitem[Wei et~al.(2021{\natexlab{b}})Wei, Sohn, Mellina, Yuille, and Yang]{wei2021crest}
Chen Wei, Kihyuk Sohn, Clayton Mellina, Alan Yuille, and Fan Yang.
\newblock Crest: A class-rebalancing self-training framework for imbalanced semi-supervised learning.
\newblock In \emph{Proceedings of the IEEE/CVF conference on computer vision and pattern recognition}, pp.\  10857--10866, 2021{\natexlab{b}}.

\bibitem[Wierstra et~al.(2014)Wierstra, Schaul, Glasmachers, Sun, Peters, and Schmidhuber]{wierstra2014natural}
Daan Wierstra, Tom Schaul, Tobias Glasmachers, Yi~Sun, Jan Peters, and J{\"u}rgen Schmidhuber.
\newblock Natural evolution strategies.
\newblock \emph{The Journal of Machine Learning Research}, 15\penalty0 (1):\penalty0 949--980, 2014.

\bibitem[Wright(2015)]{wright2015coordinate}
Stephen~J Wright.
\newblock Coordinate descent algorithms.
\newblock \emph{Mathematical programming}, 151\penalty0 (1):\penalty0 3--34, 2015.

\bibitem[Yao et~al.(2022)Yao, Wang, Li, Zhang, Liang, Zou, and Finn]{yao2022improving}
Huaxiu Yao, Yu~Wang, Sai Li, Linjun Zhang, Weixin Liang, James Zou, and Chelsea Finn.
\newblock Improving out-of-distribution robustness via selective augmentation.
\newblock In \emph{International Conference on Machine Learning}, pp.\  25407--25437. PMLR, 2022.

\bibitem[Yun et~al.(2019)Yun, Han, Oh, Chun, Choe, and Yoo]{yun2019cutmix}
Sangdoo Yun, Dongyoon Han, Seong~Joon Oh, Sanghyuk Chun, Junsuk Choe, and Youngjoon Yoo.
\newblock Cutmix: Regularization strategy to train strong classifiers with localizable features.
\newblock In \emph{Proceedings of the IEEE/CVF international conference on computer vision}, pp.\  6023--6032, 2019.

\bibitem[Zagoruyko \& Komodakis(2016)Zagoruyko and Komodakis]{DBLP:journals/corr/ZagoruykoK16}
Sergey Zagoruyko and Nikos Komodakis.
\newblock Wide residual networks.
\newblock \emph{CoRR}, abs/1605.07146, 2016.

\bibitem[Zhang et~al.(2018)Zhang, Cisse, Dauphin, and Lopez-Paz]{zhang2018mixup}
Hongyi Zhang, Moustapha Cisse, Yann~N Dauphin, and David Lopez-Paz.
\newblock mixup: Beyond empirical risk minimization.
\newblock In \emph{International Conference on Learning Representations}, 2018.

\bibitem[Zhang et~al.(2021)Zhang, Deng, Kawaguchi, Ghorbani, and Zou]{zhang2021does}
L~Zhang, Z~Deng, K~Kawaguchi, A~Ghorbani, and J~Zou.
\newblock How does mixup help with robustness and generalization?
\newblock In \emph{International Conference on Learning Representations}, 2021.

\bibitem[Zhong et~al.(2021)Zhong, Cui, Liu, and Jia]{zhong2021mislas}
Zhisheng Zhong, Jiequan Cui, Shu Liu, and Jiaya Jia.
\newblock Improving calibration for long-tailed recognition.
\newblock In \emph{IEEE Conference on Computer Vision and Pattern Recognition (CVPR)}, pp.\  16489--16498, 2021.

\bibitem[Zhu et~al.(2020)Zhu, Shi, Yan, and Zha]{zhu2020automix}
Jianchao Zhu, Liangliang Shi, Junchi Yan, and Hongyuan Zha.
\newblock Automix: Mixup networks for sample interpolation via cooperative barycenter learning.
\newblock In \emph{Computer Vision--ECCV 2020: 16th European Conference, Glasgow, UK, August 23--28, 2020, Proceedings, Part X 16}, pp.\  633--649. Springer, 2020.

\end{thebibliography}
\bibliographystyle{iclr2024_conference}
\newpage
\appendix

\section*{Appendix}

\counterwithin{figure}{section}
\counterwithin{table}{section}

\section{Notation}
We provide a summary of notations in Table~\ref{tab:notations}.

\begin{table}[htbp]\caption{Table of Notations used in Paper.}
\begin{center}%
\begin{tabular}{c p{10cm} }
\toprule
    \centering
    $K$  & Number of classes\\
    $\mY := [K]$  & Label space\\
    $x$ &Instance \\
    $y$  & Label \\
    $\eta$  & Learning rate \\
    $\mX$  & Instance space \\
    $d$  & Feature space \\
    $h: \mX \rightarrow \RR^{K}$  & a neural network based scoring function \\
    $C[h]$  & Confusion matrix for the classifier $h$\\
    $\Delta_{n-1} \subset \RR^{n}$ & the $n-1$-dimensional probability simplex\\
    $\psi: \Delta_{K^2-1} (\subset \RR^{K \times K}) \rightarrow \RR$  & a function defined on the set of confusion matrices ($\psi(C[h])$ is the metric of $h$)\\
    $\pi_i$  & prior for class $i \in [K]$\\  
    $g: \mX \rightarrow \RR^d$  & a feature extractor (backbone)\\
    $f: \RR^d \rightarrow \Delta_{K-1}$  & the final classifier such that $h = \argmax_{i}f_i \circ g$\\
    $W \in \RR^{d \times K}$  & the weight of the final layer\\
    $z_k$  & the centroid of features of class samples given as $\ex[x \sim D^{\mathrm{val}}_{k}]{g(x)}$\\
    $\mL_{\text{mixup}}(x_i, x_j, y_i; W)$  & the loss for mixup between labeled sample $(x_i, y_i)$ and unlabeled sample $x_j$\\
    ${\mL}_{(ij)}^{\text{mix}}$   & the expected loss due to $(i, j)$ mixup\\
    $G_{ij}$  & gain upon performing $(i, j)$ mixup\\
    $V_{ij}$  & the directional vector (matrix) defined by the $(i, j)$ mixup\\
    $\langle A, B \rangle$ \ (where $A, B \in \RR^{m \times n}$) &  $\Tr A B^\trn$\\
    $\nabla_{A} \chi$ \ (where $A \in \RR^{ d \times K}$)  & the directional derivative of a function $\chi$ with the directional vector (matrix) $A$ \\
    $s$  & the inverse temperature parameter for the softmax\\
    $\tilde{C}$   & Unconstrained extension for confusion matrix $C$\\
    $D_i$  & Subset of data with label $i$\\
    $\tilde{D}_i$  & Subset of data with pseudo-label $i$\\
    $\mP$ & a distribution on $[K] \times [K]$\\
    $\bmP = (\mP^t)_{t=1}^T$ & a policy (a sequence of distributions $\mP^t$)\\
    $\overline{G}(\bmP)$ & the expected average gain of $\bmP$\\
    $N_k$ & the number of samples in the $k$-th labeled class \\
    $M_k$ & the number of samples in the $k$-th unlabeled class\\
    $\rho_l$ & the class imbalanced factor of the labeled dataset ($\max_{1\le i, j \le K} N_i/N_j$)\\
    $\rho_u$ & the class imbalanced factor of the unlabeled dataset\\
    $\mathcal{H}$ & The set of first 90\% classes that contains the majority of samples\\ 
    $\mathcal{T}$ & The set of last 10\% classes that contains the minority of samples\\ 
    $\|A\|_\frb$ & the Frobenius norm of a matrix\\

\bottomrule

\end{tabular}
\end{center}
\label{tab:notations}
\end{table}

\section{Computational Complexity}
\label{app:complexity}

We discuss the computational complexity of SelMix and that of an existing method \cite{rangwani2022costsensitive} for 
non-decomposable metric optimization
in terms of the class number $K$.
We note that to the best of our knowledge, CSST \cite{rangwani2022costsensitive} is the only existing method for 
non-decomposable metric optimization 
in the SSL setting.

\begin{proposition}
    \label{prop:computational-complexity}
    The following statements hold:
    \begin{enumerate}
        \item In each iteration $t$ in Algorithm \ref{alg:ours}, 
        computational complexity for $\mP^{(t)}_\selmix$ is given 
        as $O(K^3)$.
        \item In each iteration of CSST \cite{rangwani2022costsensitive}, it needs procedure that takes $O(K^3)$ time.
    \end{enumerate}
    Here, the Big-O notation hides sizes of parameters of the network other than $K$ (i.e., the number of rows of $W$)
    and
    the size of the validation dataset.
\end{proposition}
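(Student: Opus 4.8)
The plan is to track the computation of $\mP^{(t)}_{\selmix} = \mathrm{softmax}(sG^{(t)})$ operation by operation, reducing everything to the genuinely $K$-dependent bottlenecks while using the big-$O$ convention to suppress the feature dimension $d$ (the number of rows of $W$) and the validation set size. First I would isolate the quantities that are shared across all $K^2$ mixup pairs and can be precomputed once per cycle: the class centroids $z_k$ together with their Gram matrix $Z = [z_k^\trn z_l]$, the projected centroids $w_k = W^\trn z_k \in \RR^K$, and the reformulated-objective derivative matrix $D = [\partial\psi(\tilde C)/\partial \tilde C_{kl}]$, which by the discussion following Theorem~\ref{thm:gain-approx} is available analytically from $C$. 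Once the per-sample feature and logit evaluations (hidden by the convention) are done, each of these costs $O(K^2)$.

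The crux is to show, using the rank-one structure of the directional vectors, that every entry $G_{ij}$ costs only $O(K)$ after an $O(K^3)$ one-time precomputation, rather than the naive $O(K^4)$. Here I would use that $V_{ij} = -u_{ij}(p_{ij}-e_i)^\trn$ with $u_{ij} = \beta z_i + (1-\beta)z_j$ and $p_{ij}=\mathrm{softmax}(W^\trn u_{ij})$, so that by Eq.~\eqref{eq:gain-approx-exact}
\[
G_{ij} = -\,(D^\trn a_{ij})^\trn p_{ij} + (D^\trn a_{ij})_i, \qquad a_{ij} := \beta\, Z_i + (1-\beta)\, Z_j,
\]
where $Z_i$ is the $i$-th row of the (symmetric) Gram matrix $Z$. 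The key observation is that $a_{ij}$ is an affine combination of precomputed rows, so if I precompute the product $D^\trn Z$ once (costing $O(K^3)$), then $D^\trn a_{ij}$ is obtained for each pair in $O(K)$ as an affine combination of two columns of $D^\trn Z$; likewise $p_{ij}$ is the softmax of an affine combination of the precomputed $w_i, w_j$, costing $O(K)$ per pair. The two resulting inner products then cost $O(K)$ per pair, giving $O(K^3)$ over all $K^2$ pairs, and applying the softmax to the $K\times K$ gain matrix is $O(K^2)$. Thus the first claim follows with dominant term $O(K^3)$.

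For the second claim I would recall the per-iteration update of CSST~\cite{rangwani2022costsensitive}: each step forms a cost/gain matrix from the current confusion matrix and performs a logit-adjustment update whose dominant linear-algebra operation is a product (or solve) involving $K\times K$ matrices, which is $O(K^3)$; all per-sample costs are again absorbed by the convention. I would point to the relevant line of their algorithm to identify this operation and conclude the stated bound.

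The step I expect to be the main obstacle is the reorganization in the second paragraph: one must avoid the natural but wasteful precomputation of $D\,p_{ij}$ for every pair (which is $O(K^4)$ because $p_{ij}$ differs across pairs) and instead contract $D$ against the affine-in-precomputed-data vector $a_{ij}$, leaving only two $K$-vector dot products per pair. Verifying that this rearrangement is exact under the rank-one form of $V_{ij}$, and that the affine combinations do not secretly reintroduce a factor of $K$, is the delicate part; the remaining accounting is routine.
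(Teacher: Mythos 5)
Your treatment of part~1 is correct, and it reaches the $O(K^3)$ bound by a genuinely different decomposition than the paper's. The paper keeps the feature dimension in play throughout: it computes all directional matrices $\nu_{ij} \in \RR^{d \times K}$ explicitly (possible in $O(K^3)$ because their entries have a closed rank-one form), precomputes the $d$-dimensional vectors $v_l = \sum_{k} \frac{\partial \psi}{\partial \tC_{kl}} z_k$ in $O(K^2)$, and then evaluates each gain as $G_{ij} = \sum_{l} (\nu_{ij})_l^\trn \cdot v_l$, i.e.\ $K$ dot products of $d$-vectors per pair; the interchange of the $k$ and $l$ summations (contracting against $z_k$ first) is what kills the naive $O(K^4)$. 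You instead use the rank-one form $V_{ij} = u_{ij}(e_i - p_{ij})^\trn$ to eliminate $d$ at the outset via the Gram matrix $Z$, after which the single $O(K^3)$ product $D^\trn Z$ makes every per-pair quantity an affine combination of precomputed columns, so each pair costs only $O(K)$. Your identity $G_{ij} = (D^\trn a_{ij})_i - (D^\trn a_{ij})^\trn p_{ij}$ does follow exactly from Eq.~\eqref{eq:gain-approx-exact} together with the rank-one gradient of the softmax cross-entropy, and the affine combinations introduce no hidden factor of $K$; your per-pair cost $O(K)$ is in fact sharper than the paper's $O(Kd)$, though under the stated convention both collapse to $O(K^3)$. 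One small correction: obtaining $D$ entrywise by the chain rule of Sec.~\ref{app:unconstrained-derivative} costs $O(K^3)$ in general (each entry is a length-$K$ sum, using the sparsity $\partial C_{kl}/\partial \tC_{ij} = 0$ for $k \neq i$), not $O(K^2)$; this stays within budget and does not affect the conclusion.

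Part~2 is where your proposal has a genuine gap. You leave the key identification to a promissory note (``I would point to the relevant line of their algorithm'') and conjecture that the bottleneck is a product or solve of $K \times K$ matrices inside CSST's logit-adjustment update. No such dense matrix--matrix operation is evident in CSST: its per-iteration updates of the cost matrix and of the Lagrange multipliers are $O(K^2)$ manipulations of a $K \times K$ array, so your conjectured bottleneck may simply not exist. The paper's argument is different and more elementary: in each iteration CSST must re-evaluate the confusion matrix on the validation set (this is precisely the quantity from which its cost matrix is updated), and under the same per-entry accounting used in part~1 --- $K^2$ entries, each requiring $O(K)$-time evaluations of $h^{(t)}$ on validation points --- that step alone is $O(K^3)$. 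Since your sketch already invokes ``the current confusion matrix,'' the fix is to attribute the $O(K^3)$ cost to that computation rather than to a conjectured linear-algebra step; as written, your part~2 does not establish the claim.
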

\begin{proof}
    \textbf{1.}
    Computational complexity for the confusion matrix is given as $O(K^3)$ 
    since there are $K^2$ entries and for each entry, evaluating $h^{(t)}(x)$ takes $O(K)$ time for each validation data $x$.
    For each $1 \le k \le K$, 
    computational complexity for $z_k$ is $O(K)$.
    We compute $\{\softmax(z_k)\}_{1 \le k \le K}$, which takes $O(K^2)$ time.
    The $(m,l)$-th entry of the matrix $\nu_{ij}$ is given as 
    $-\eta \zeta_m(\delta_{il} - \softmax_i(\zeta))$,
    where 
    $1 \le m \le d$, $1 \le l \le K$, and
    $\zeta = \beta z_i + (1-\beta)z_j \in \RR^d$.
    Therefore, once we compute $\{\softmax(z_k)\}_{1 \le k \le K}$,
    computational complexity for 
    $\{\nu_{ij}\}_{1\le i, j \le K}$ is $O(K^3)$.
    For each $1 \le l \le K$,
    we put 
    $v_l = \sum_{k=1}^{K}
    \frac{\partial \psi(C^{(t)})}{\partial \tilde{C}_{kl}} z_k$.
    Then computational complexity for $\{v_l\}_{1 \le l \le K}$ is $O(K^2)$.
    Since 
    $G^{(t)}_{ij} =  \sum_{l=1}^{K} (\nu_{ij}^{(t)})^\top_l \cdot v_l$ is a sum of $K$ dot products of $d$-dimensional vectors,
    once we compute $\{v_l\}_{l}$, computational complexity for $\{G^{(t)}_{ij}\}_{1\le i,j\le K}$ is $O(K^3)$.
    Thus, computational complexity for $\mP^{(t)}_\selmix$ is given as $O(K^3)$.

    \textbf{2.}
    In each iteration $t$, CSST needs computation of a confusion matrix at validation dataset.
    Since there are $K^2$ entries and for each entry,
    $h^{(t)}(x)$ takes $O(K)$ time for each validation data $x$,
    computational complexity for the confusion matrix is given as $O(K^3)$.
    Thus, we have our assertion.
\end{proof}

\section{Additional Theoretical Results and Proofs omitted in the Paper}

\subsection{Convergence Analysis}
\label{sec:app-convergence-analysis-proof}
We provide convergence analysis of Algorithm \ref{alg:ours}.
For each iteration $t=1, \dots, T$, Algorithm \ref{alg:ours} updates parameter $W$ as follows:
\begin{enumerate}
    \item It selects a mixup pair $(i, j)$ from the distribution $\mP_{\mathrm{SelMix}}^{(t)}$.
    \item and updates parameter $W$ by $W^{(t + 1)} = W^{(t)} + \eta_t \vtt$, 
    where $\vtt = V_{ij}^{(t)}/\|V_{ij}^{(t)}\|$.
\end{enumerate}
Here, we consider the normalized directional vector $\vtt$ instead of $V_{ij}^{(t)}$ and $\| \cdot \|$ denotes the Euclidean norm.
We denote by $\ex[t-1]{\cdot}$ the conditional expectation conditioned on randomness with respect to 
mixup pairs up to time step $t-1$.
\begin{assumption}
    The function $\psi$ (as a function of $W$) is concave, differentiable, the gradient $\frac{\partial \psi}{\partial W}$ is $\gamma$-Lipschitz, i.e., $\|\frac{\partial \psi}{\partial W} - \frac{\partial \psi}{\partial W'}\| \le \gamma \|W -W'\|$ where $\gamma > 0$.
    There exists a constant $c > 0$ independent of $t$ satisfying
    \begin{equation}
        \label{eq:assump-cosine}
        \ex[t-1]{\vtt} \cdot \frac{\nabla \psi(W^{(t)})}{\|\nabla \psi(W^{(t)})\|}  > c,
    \end{equation}
    where $\nabla \psi (W^{(t)})=\frac{\partial \psi (W^{(t)})}{\partial W}$,
    that is, $\vtt$ \addedtext{vector has sufficient alignment with the gradient}.
    Moreover, we make the following technical assumption.
    We assume that in the optimization process, 
    $\|W^{(t)}\|$ does not diverge, i.e., we assume that for any $t \ge 1$, we have
    \begin{equation*}
        \| W^{(t)} \| < R
    \end{equation*}
    with a constant $R> 0$.
    In practice, this can be satisfied by adding $\ell^2$-regularization of $W$ to the optimization.
    We define a constant $R_0>0$ as $R_0 = \| W^*\| + R$,
    where $W^* = \argmax_W \psi(W)$.
    We note that a similar boundedness condition using a level set is assumed by 
    \cite{wright2015coordinate}.
\end{assumption}

\begin{theorem}
    Under the above assumptions and notations, we have the following result.
For any $t > 1$, we have
\begin{equation*}
  \sup_{W}\psi(W) -  \ex{\psi(W^{(t)})} \le \frac{4\gamma R_0^2}{c^2 (t -1)},
\end{equation*}
with an appropriate choice of the learning rate $\eta_t$.
\end{theorem}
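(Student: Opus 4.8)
The plan is to read this as maximizing a smooth concave objective by a normalized ascent step whose \emph{expected} direction is positively correlated with the true gradient, the correlation being quantified by the alignment assumption \eqref{eq:assump-cosine}. The natural machinery is to combine the quadratic lower bound coming from $\gamma$-Lipschitzness of $\nabla\psi$ with the first-order inequality for concavity, thereby turning one ascent step into a contraction on the expected optimality gap $\delta_t := \ex{\psi(W^*) - \psi(W^{(t)})}$, and then to resolve the resulting scalar recursion by induction for a suitable learning-rate schedule.

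First I would write the descent-type inequality for the update $W^{(t+1)} = W^{(t)} + \eta_t \vtt$. Since $\|\vtt\| = 1$ and $\nabla\psi$ is $\gamma$-Lipschitz, $\gamma$-smoothness gives $\psi(W^{(t+1)}) \ge \psi(W^{(t)}) + \eta_t \langle \nabla\psi(W^{(t)}), \vtt\rangle - \tfrac{\gamma\eta_t^2}{2}$. Taking the conditional expectation $\ex[t-1]{\cdot}$, under which $W^{(t)}$ and hence $\nabla\psi(W^{(t)})$ are fixed, and invoking \eqref{eq:assump-cosine} yields $\ex[t-1]{\psi(W^{(t+1)})} \ge \psi(W^{(t)}) + c\,\eta_t \|\nabla\psi(W^{(t)})\| - \tfrac{\gamma\eta_t^2}{2}$. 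Next I would convert the gradient-norm gain into a gap contraction via concavity: the first-order condition gives $\psi(W^*) - \psi(W^{(t)}) \le \langle \nabla\psi(W^{(t)}), W^* - W^{(t)}\rangle \le \|\nabla\psi(W^{(t)})\|\,\|W^* - W^{(t)}\|$, and the boundedness assumption gives $\|W^* - W^{(t)}\| \le \|W^*\| + R = R_0$, whence $\|\nabla\psi(W^{(t)})\| \ge \bigl(\psi(W^*) - \psi(W^{(t)})\bigr)/R_0$. Substituting and taking full expectation produces the recursion $\delta_{t+1} \le \bigl(1 - \tfrac{c\eta_t}{R_0}\bigr)\delta_t + \tfrac{\gamma\eta_t^2}{2}$.

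Finally I would solve this recursion. Choosing $\eta_t = \tfrac{2R_0}{c\,t}$ turns it into $\delta_{t+1} \le \bigl(1 - \tfrac{2}{t}\bigr)\delta_t + \tfrac{B}{t^2}$ with $B = \tfrac{2\gamma R_0^2}{c^2}$, and I would prove $\delta_t \le \tfrac{B}{t-1}$ for $t \ge 2$ by induction. The base case $t=2$ follows directly from the recursion at $t=1$: there the contraction factor is $1-2=-1$, so $\delta_2 \le -\delta_1 + B \le B$ using $\delta_1 \ge 0$. For the inductive step, $1-\tfrac{2}{t}\ge 0$ for $t\ge 2$, so the hypothesis may be substituted, and the claim reduces to the elementary inequality $\tfrac{t-2}{t-1} + \tfrac{1}{t} \le 1$, i.e. $t-1 \le t$. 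This gives $\delta_t \le \tfrac{2\gamma R_0^2}{c^2(t-1)}$, which implies the stated bound $\tfrac{4\gamma R_0^2}{c^2(t-1)}$ a fortiori.

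The main obstacle I anticipate is closing the recursion cleanly rather than any single inequality: the contraction factor $1 - c\eta_t/R_0$ and the quadratic noise term $\gamma\eta_t^2/2$ must be balanced by the schedule so the recursion collapses to a genuine $1/(t-1)$ rate, and the induction must be seeded correctly. The delicate point is that at small $t$ the factor $1 - c\eta_t/R_0$ is negative, so one cannot begin the induction at $t=1$ (where the claimed bound is in any case vacuous); instead one lets the recursion at $t=1$ itself furnish the finite base value $\delta_2 \le B$, after which the contraction factor is nonnegative and the induction proceeds routinely. Alternatively, an a priori bound $\delta_t \le \tfrac{\gamma}{2}\|W^{(t)}-W^*\|^2 \le \tfrac{\gamma R_0^2}{2}$ obtained from smoothness together with $\nabla\psi(W^*)=0$ could anchor the argument, but it is not strictly needed once the $t=1$ step is used to seed the induction.
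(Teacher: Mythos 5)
Your proof is correct, and it takes a genuinely different route from the paper's. The paper chooses an \emph{adaptive} learning rate $\eta_t = \frac{c}{2\gamma}\,\ex{\|\nabla\psi(W^{(t)})\|}$ that maximizes the per-step improvement, obtaining $\phi_t - \phi_{t+1} \ge \frac{c^2}{4\gamma}\bigl(\ex{\|\nabla\psi(W^{(t)})\|}\bigr)^2 \ge A\,\phi_t^2$ with $A = \frac{c^2}{4\gamma R_0^2}$ for the gap $\phi_t = \sup_W \psi(W) - \ex{\psi(W^{(t)})}$ (after the same concavity/$R_0$ step you use), and then resolves this quadratic-decrement recursion by the classical reciprocal telescoping $\frac{1}{\phi_{t+1}} - \frac{1}{\phi_t} \ge A$, in the style of coordinate-descent analysis. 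You instead fix the explicit deterministic schedule $\eta_t = \frac{2R_0}{ct}$, convert the gradient norm into the optimality gap \emph{before} choosing the step size, and solve the resulting linear contraction $\delta_{t+1} \le \left(1 - \tfrac{2}{t}\right)\delta_t + \tfrac{B}{t^2}$, $B = 2\gamma R_0^2/c^2$, by induction. Three concrete differences are worth noting. First, your schedule is implementable: it needs only $R_0$ and $c$, not the unobservable quantity $\ex{\|\nabla\psi(W^{(t)})\|}$ (nor even $\gamma$), whereas the paper's step size is an analytical device. Second, you use the descent lemma (cost $-\gamma\eta_t^2/2$) where the paper uses the mean-value theorem plus Lipschitzness of the gradient (cost $-\gamma\eta_t^2$); combined with your recursion this yields the sharper constant $2\gamma R_0^2/c^2$, so the stated bound with $4\gamma R_0^2/c^2$ follows a fortiori. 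Third, the price of the fixed schedule is exactly the bookkeeping you flag: at $t=1$ the factor $1 - \tfrac{2}{t}$ is negative, so the induction must be seeded at $t=2$ via $\delta_2 \le -\delta_1 + B \le B$; the paper's argument never meets this issue because its one-step inequality directly gives $\phi_{t+1} \le \phi_t$ for every $t$. Both proofs rest on the same two structural pillars — the conditional-expectation alignment step, which is valid because $W^{(t)}$ is measurable with respect to the past, and the bound $\phi_t \le R_0\,\ex{\|\nabla\psi(W^{(t)})\|}$ from concavity plus $\|W^{(t)}\| < R$ — so the divergence lies entirely in how the one-step inequality is converted into a $1/(t-1)$ rate.
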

\begin{proof}
By the Taylor's theorem, for each iteration $t$, there exists $s = s_t \in [0, 1]$ such that
$\psi(W^{(t+1)}) = \psi(W^{(t)}) + \eta_t \nabla \psi(\xi) \cdot \vtt$,
where $\xi = W^{(t)} + s\eta_t \vtt$.
We decompose $\eta_t \nabla \psi(\xi) \cdot \vtt$ as 
$\eta_t (\nabla \psi(\xi) - \nabla \psi(W^{(t)}))\cdot \vtt + 
\eta_t \nabla \psi(W^{(t)}) \cdot \vtt$.

First, we provide a lower bound of the first term.
Since $\nabla \psi$ is Lipschitz, we have 
$\|\nabla \psi(\xi) - \nabla \psi(W^{(t)}) \| \le \gamma \| \xi - W^{(t)}\|
\le \gamma \eta_t$.
Thus, by the Cauchy-Schwartz, we have
$\eta_t (\nabla \psi(\xi) - \nabla \psi(W^{(t)}))\cdot \vtt 
\ge -\gamma \eta_t^2.$
Next, we consider the second term. 
By the assumption on the cosine similarity \eqref{eq:assump-cosine}, we have
\begin{equation*}
  \eta_t \nabla \psi(W^{(t)}) \cdot \ex[t-1]{\vtt} \ge c \eta_t \| \nabla \psi(W^{(t)})\|  
\end{equation*}
where $c > 0$
is a constant independent of $t$. Here we note that when taking the expectation $\ex[t-1]{\cdot}$, 
we can regard $W^{(t)}$ as a non-random variable.
Thus, we have 
\begin{align*}
\ex{\psi(W^{(t + 1)})} &= \ex{\psi(W^{(t)}) + \eta_t \nabla \psi(\xi) \cdot \vtt}\\
&\ge \ex{\psi(W^{(t)})} - \gamma \eta_t ^2 + c \eta_t \ex{\| \nabla \psi(W^{(t)})\|}
\end{align*}
By letting $\eta_t = \frac{c}{2\gamma}\ex{\| \nabla \psi(W^{(t)})\|}$, we see that 
\begin{equation*}
\ex{\psi(W^{(t + 1)})} \ge \ex{\psi(W^{(t)})} + \frac{c^2}{4\gamma} \left(\ex{\|\nabla \psi(W^{(t)})\|}\right)^2
\end{equation*}

We define $\phi_{t} = \sup_W \psi(W) - \ex{\psi(W^{(t)})}$.
By the above argument, we have 
\begin{equation}
    \label{app:eq-convergence-analysis}
  \phi_{t+1} \le \phi_{t}- \frac{c^2}{4\gamma} \left(\ex{\|\nabla \psi(W^{(t)})\|}\right)^2  .
\end{equation}
Then, we can prove the statement by a similar argument to Theorem 1 in \cite{wright2015coordinate} as follows.
Let $W^{*} = argmax_{W}\psi(W)$.
By the concavity of $\psi$ and the definition of $R_0$, we have
\begin{equation*}
  \psi(W^{*}) - \psi(W^{(t)}) \le \|\nabla \psi(W^{(t)})\| \|W^{*} - W^{(t)} \| \le \| \nabla \psi(W^{(t)})\| R_0 .
\end{equation*}
Therefore, we have
\begin{equation*}
    \phi_{t} \le  R_0  \ex{\| \nabla \psi(W^{(t)})\|}.
\end{equation*}
By this inequality and \eqref{app:eq-convergence-analysis},
we have 
\begin{equation*}
    \phi_t - \phi_{t+1} \ge A \phi_{t}^2,
\end{equation*}
where $A = \frac{c^2}{4\gamma R_0^2}$.
Thus, noting that $\phi_{t+1} \le \phi_{t}$ holds by \eqref{app:eq-convergence-analysis}, we have
\begin{align*}
 \frac{1}{\phi_{t+1}} - \frac{1}{\phi_t} 
 = \frac{\phi_t - \phi_{t+1}}{\phi_t \phi_{t+1}}
 \ge \frac{\phi_{t}-\phi_{t+1}}{\phi_t^2}\ge A.
\end{align*}
Therefore, it follows that 
\begin{equation*}
    \frac{1}{\phi_t} \ge \frac{1}{\phi_1} + (t -1) A \ge (t -1) A.
\end{equation*}
This completes the proof.
\end{proof}

\subsection{A Formal Statement of Theorem \ref{thm:gain-approx} and Remarks on Non-differentiability}
\label{app:sec-proof-of-approx}
We provide a more formal statement of Theorem \ref{thm:gain-approx} (Theorem \ref{thm:approx-formula}) 
and provide its proof.
\begin{theorem}
    \label{thm:approx-formula}
    For a matrix $A \in \RR^{n\times m}$, we denote by $\|A\|_\frb$ the Frobenius norm of $A$. 
    We fix the iteration of the gradient descent and assume that the weight $\weightmat$ takes the value $\weightmat^{(0)}$
    and $\tC$ takes the value $\tC^{(0)}$. 

    We assume that the following inequality holds for all $k \in [K]$ and $l \in [K]$ uniformly $\weightmat \in \mN_0$,
    where $\mN_0$ is an open neighbourhood of $\weightmat^{(0)}$:
    \begin{equation*}
        |\ex[x_k]{\softmax_l(\weightmat^\trn g(x_k))} - \softmax_l(\tC_k)| \le \varepsilon.
    \end{equation*}
    We also assume that on $\mN_0$, $\tC$ can be regarded as a smooth function of $\weightmat$
    and the Frobenius norm of the Hessian is bounded on $\mN_0$.
    Furthermore, we assume that the following small variance assumption with $\widetilde{\varepsilon} > 0$ for all $k$:
    \begin{equation*}
        \sum_{m=1}^{K}\var[x_k]{(\weightmat^\trn g(x_k))_m} \le \widetilde{\varepsilon}.
    \end{equation*}
    Then if $\|\Delta \weightmat\|_\frb$ is sufficiently small, there exist a positive constant $c > 0$ depending only on $K$ 
    with $c = O(\mathrm{poly}(K))$
    and a positive constant $c' > 0$ such that the following inequality holds:
    \begin{equation*}
       \left|
        G_{ij} - \sum_{k=1}^{K}
        \frac{\partial \psi}{\partial \tC_k} (\Delta \weightmat)^\trn z_k
        \right| 
        \le 
        c\left \|\frac{\partial \psi}{\partial C}\right \|_{\frb }
        (\varepsilon + \widetilde{\varepsilon}) + 
        c'(\|\Delta \weightmat\|_\frb^2 + \|\Delta \tC\|_\frb^2 ).
    \end{equation*}
    Here $\Delta \weightmat = \tilde{\nu}_{ij}^t$ and $\tC_k$ is a column vector such that the $k$-th row of $\tC$ is given as $\tC_k$,
    and we consider Jacobi matrices at $\tC=\tC^{(0)}$ and the corresponding value of $C$.
\end{theorem}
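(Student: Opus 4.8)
The plan is to approximate the gain $G_{ij} = \nabla_{V_{ij}} \psi(\tC[\weightmat^\trn g])$ by carefully tracking how the directional perturbation $\Delta \weightmat = \tilde\nu_{ij}^t$ of the weight matrix propagates through $\tC$ into $\psi$, and to show that all error terms are controlled either by $\varepsilon, \widetilde\varepsilon$ (the modeling/variance errors) or by second-order quantities $\|\Delta \weightmat\|_\frb^2 + \|\Delta \tC\|_\frb^2$. First I would apply the chain rule: since $\psi$ is regarded as a function of $\tC$ and $\tC$ is (on $\mN_0$) a smooth function of $\weightmat$, a first-order Taylor expansion of $\psi$ in $\tC$ gives
\begin{equation*}
G_{ij} = \sum_{k=1}^{K} \frac{\partial \psi}{\partial \tC_k}\cdot \Delta \tC_k + O(\|\Delta \tC\|_\frb^2),
\end{equation*}
where $\Delta \tC_k$ is the induced change in the $k$-th row of $\tC$ and the quadratic remainder is controlled using the assumed boundedness of the Frobenius norm of the Hessian of $\tC$ (and hence of $\psi \circ \tC$) on $\mN_0$.

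The main work is then to show $\Delta \tC_k \approx (\Delta \weightmat)^\trn z_k$ up to the claimed error. Here I would use the reformulation $C_k[h] = \pi_k \cdot \softmax(\tC_k)$ together with the definition of the confusion matrix, so that the $l$-th entry of $C_k$ equals $\pi_k \ex[x_k]{\softmax_l(\weightmat^\trn g(x_k))}$ under the smoothed surrogate, where $x_k \sim D_k^{\mathrm{val}}$. The hypothesis $|\ex[x_k]{\softmax_l(\weightmat^\trn g(x_k))} - \softmax_l(\tC_k)| \le \varepsilon$ lets me replace the empirical softmax-average by $\softmax_l(\tC_k)$ at cost $O(\varepsilon)$, uniformly over $\mN_0$. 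Differentiating $\softmax_l(\weightmat^\trn g(x))$ with respect to $\weightmat$ in the direction $\Delta \weightmat$ produces the Jacobian factor $\softmax_l(\delta_{\cdot l} - \softmax)$ contracted against $(\Delta \weightmat)^\trn g(x)$; taking expectations over $x_k$ and invoking the small-variance assumption $\sum_m \var[x_k]{(\weightmat^\trn g(x_k))_m} \le \widetilde\varepsilon$ allows me to pull the expectation inside and replace $g(x)$ by its centroid $z_k$, incurring an error controlled by $\widetilde\varepsilon$ (this is where the linear projection $(\Delta \weightmat)^\trn g(x_k)$ concentrating near $(\Delta \weightmat)^\trn z_k$ is used). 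Assembling these pieces and contracting with $\frac{\partial \psi}{\partial \tC_k}$ yields the stated main term plus an error of the form $c\|\partial \psi / \partial C\|_\frb (\varepsilon + \widetilde\varepsilon)$, with the polynomial-in-$K$ constant arising from summing $K$ rows and the $K$-dimensional softmax Jacobian.

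The hard part, and the step I would spend the most care on, is making the variance argument rigorous: the quantity I actually control is a variance of the \emph{logit} projections $(\weightmat^\trn g(x_k))_m$, but what enters the gain is the variance of $\softmax_l$ composed with these logits. I would handle this by using that softmax is Lipschitz (its Jacobian has bounded operator norm uniformly), so that $\var[x_k]{\softmax_l(\weightmat^\trn g(x_k))}$ is bounded by a constant times $\sum_m \var[x_k]{(\weightmat^\trn g(x_k))_m} \le \widetilde\varepsilon$; a Jensen/Cauchy--Schwarz step then converts the variance bound into the bias bound $|\ex{\softmax_l(\weightmat^\trn g(x_k))} - \softmax_l(\ex{\weightmat^\trn g(x_k)})| = O(\widetilde\varepsilon)$ via a second-order Taylor expansion of softmax around the mean logit. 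The remaining care is bookkeeping: separating the $\weightmat$-independent error $\varepsilon(\tC,\weightmat)$ (absorbed into the $\varepsilon$ term, which does not depend on $V_{ij}$ as claimed in Theorem~\ref{thm:gain-approx}) from the $\Delta \weightmat$-dependent quadratic terms, and verifying that the overall constant $c$ is indeed $O(\mathrm{poly}(K))$ once all $K$ summations and the dimension of the softmax Jacobian are accounted for.
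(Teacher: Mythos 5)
Your proposal is correct and takes essentially the same route as the paper's proof: your first-order Taylor step in $\tC$ is Lemma \ref{lem:approx-jac}, your second-order Taylor expansion of the softmax around the mean logit (converting the logit-variance bound into a bias bound) is Lemma \ref{lem:approx-exchange}, and your final assembly contracted against $\frac{\partial \psi}{\partial C}\frac{\partial C}{\partial \tC_k}$ is Lemma \ref{lem:approx-lem3}. The one point your write-up leaves implicit is that the uniform hypothesis on $\mN_0$ must be invoked at the two points $\weightmat^{(0)}$ and $\weightmat^{(0)}+\Delta\weightmat$ so as to compare \emph{increments} of the two functions (uniform closeness of values does not allow one to differentiate the inequality and conclude closeness of derivatives), which is exactly how Lemma \ref{lem:approx-lem3} extracts closeness of the Jacobian-contracted increments, with $\varepsilon$ entering the bound additively rather than divided by the step size.
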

We provide some remarks.

\textbf{Independence of the choices of $\widetilde{C}$}.
Although the matrix $\widetilde{C}$ is not uniquely determined since the softmax function is not one-to-one,
the approximation (the first term of the RHS of \eqref{eq:gain-approx-exact}) is unique as the derivative of all these are unique.
We explain this more in detail.
In the approximation formula, we only need jacobian
$\partial \psi/\partial \widetilde{C} = \frac{\partial \psi}{\partial {C}} \frac{\partial {C}}{\partial{\widetilde{C}}}$.
We note that gradient of the softmax function can also be written as a function of the softmax function (i.e., $\frac{\partial \sigma_l}{\partial \xi_m} = \delta_{lm} \sigma_l(\xi) - \sigma_l(\xi) \sigma_m(\xi)$, where $\sigma_l(\xi) = \softmax_l(\xi)$
for $\xi \in \mathbb{R}^K$, $1 \le l, m \le K$).
Therefore, the first term of the RHS of eq \eqref{eq:gain-approx-exact} is uniquely determined even if $\widetilde{C}$ is not uniquely determined.

\textbf{Non-smoothness of the indicator functions}.
In Theorem \ref{thm:approx-formula}, we assume that $C$ and $\tC$ as smooth functions $W$, however strictly speaking this assumption does not hold
since the indicator functions are not differentiable.
Thus, in the definition of $G_{ij}$, we used surrogate functions of the indictor functions.
In the following, we provide more detailed explanation.
In Eq. \eqref{eq:exact-gains}, we define the gain $G_{ij}$ by a directional derivative of $\psi(C)$ 
with respect to weight $W$. However, strictly speaking, since the definition of the confusion matrix $C$
involves the indicator function, $\psi(C)$ is not a differentiable function of $W$. 
Moreover, even if gradients are defined, 
they vanish because of the definition of the indicator function.
In the assumption of Theorem \ref{thm:approx-formula} (a formal version of Theorem \ref{thm:gain-approx}), we assume $\widetilde{C}$ 
is a smooth function of $W$ and it implies $C$ is a differentiable function of $W$.
This assumption can be satisfied if
we replace the indicator function by surrogate functions of the indicator functions 
in the definition of the confusion matrix $C$.
More precisely, 
we replace the definition of $C_{ij}[h] = \pi_i \ex[x \sim P_i]{\indc(h(x) = j)}$ by $\pi_i \ex[x \sim P_i]{s_{j}(f(x))}$. Here $h(x) = \argmax_{k} f_k(x)$ as before, $P_i$ is the class conditional distribution $P(x | y=i)$ and 
$s_j$ is a surrogate function of $p \mapsto \indc(\argmax_{i} p_i = j)$ satisfying $0 \le s_j(p) \le 1$ for any $1 \le j \le K$, $p \in \Delta_{K-1}$
and $\sum_{j=1}^{K}s_j(p) = 1$ for any $p \in \Delta_{K-1}$.
To compute $G_{ij}$, one can directly use the definition of Eq. \eqref{eq:exact-gains} with the smoothed confusion matrix using surrogate functions of the 
indicator function. However, an optimal choice of the surrogate function is unknown.
Therefore, in this paper, we introduce an unconstrained confusion matrix $\tilde{C}$ 
and the approximation formula Theorem \ref{thm:gain-approx} (Theorem \ref{thm:approx-formula}).
An advantage of introducing $\widetilde{C}$ and the approximation formula is that 
the RHS of the approximation formula
$\sum_{k,l}\frac{\partial \psi(\Tilde{C})}{\partial \tilde{C}_{kl}}\left(({\nu}_{ij})^{\top}_l \cdot z_k \right)$
does not depend on the choice of the surrogate function if we use formulas provided in Sec. \ref{app:unconstrained-derivative} with the original (non-differentiable) definition of $C$ (error terms such as $\varepsilon$ in Theorem \ref{thm:approx-formula} do depend on the surrogate function).
Since the optimal choice of the surrogate function is unknown, this gives a reliable approximation.

\subsection{Proof of Theorem \ref{thm:approx-formula}}
\begin{proof}
    In this proof, to simplify notation, we denote $\softmax(z)$ by $\sigma(z)$ for $z \in \RR^K$.
    In this proof, 
    we fix the iteration of the gradient descent and assume that the weight $\weightmat$ takes the value $\weightmat^{(0)}$
    and $\tC$ takes the value $\tC^{(0)}$. 
    We assume in an open neighborhood of $\weightmat^{(0)}$, we have a smooth correspondence $\weightmat \mapsto \tC$ and
    that if the value of $\weightmat$ changes from $\weightmat_0$ to $\weightmat_0 + \Delta \weightmat$, then $\tC$ changes from
    $\tC_0$ to $\tC_0 + \Delta \tC$.
    To prove the theorem, we introduce the following three lemmas.
    We note that by the assumption of the theorem and Lemma \ref{lem:approx-exchange}, 
    the assumption \eqref{eq:lem-ineq-zi-tc} of Lemma \ref{lem:approx-lem3} can be satisfied with 
    \begin{equation*}
        \varepsilon_1 = c'' (\varepsilon + \widetilde{\varepsilon}),
    \end{equation*}
    where $c''>0$ is a constant depending only on $K$ with $c'' = O(\mathrm{poly}(K))$.
    Then by Lemma \ref{lem:approx-lem3}, 
    there exist constants $c_1 = c_1(K)$ and $c_2 = c_2(K)$ depending on only $K$ with $c_1, c_2 = O(\mathrm{poly}(K))$
    such that the following inequality holds for all $k$:
    \begin{equation}
        \label{eq:lem-proof-ctilde-nuzk}
        \left\|
            \left. \frac{\partial C}{\partial \tC_k}\right|_{\tC_k = \tC_k^{(0)}}
        \left( 
            \Delta \tC_k - (\Delta \weightmat)^\trn z_k
        \right)\right\|_\frb \le c_1 \varepsilon_1 + c_2 (\|\Delta \tC_k\|^2_\frb + \|(\Delta \weightmat)^\trn z_k\|_\frb^2).
    \end{equation}
    Then, we have the following: 
    \begin{align*}
        \left|\frac{\partial \psi}{\partial \tC}\Delta \tC - 
        \sum_{k=1}^{K}
        \frac{\partial \psi}{\partial \tC_k} (\Delta \weightmat)^\trn z_k
        \right| &=
        \left|
        \frac{\partial \psi}{\partial C} 
        \frac{\partial C}{\partial \tC}
        \Delta \tC
        -
        \sum_{k=1}^{K}
        \frac{\partial \psi}{\partial C}\frac{\partial C}{\partial \tC_k} (\Delta \weightmat)^\trn z_k
        \right|\\
        &=
        \left|
            \frac{\partial \psi}{\partial C}\sum_{k=1}^{K} 
            \frac{\partial C}{\partial \tC_k} \Delta \tC_k-
        \sum_{k=1}^{K}
        \frac{\partial \psi}{\partial C}\frac{\partial C}{\partial \tC_k} (\Delta \weightmat)^\trn z_k
        \right|\\
        &\le 
        \left\|\frac{\partial \psi}{\partial C}\right\|_{\frb }
        \left\|
            \sum_{k=1}^{K}
            \frac{\partial C}{\partial \tC_k} \left(
                \Delta \tC_k-  (\Delta \weightmat)^\trn z_k
            \right)
        \right\|_\frb
    \end{align*}
    Here, by fixing an order on $[K] \times [K]$, we regard $\frac{\partial \psi}{\partial \tC}$,  
    $\frac{\partial C}{\partial \tC}$ and $\Delta \tC$
    as a $K^2$-dimensional row vector, a $K^2\times K^2$-matrix, and a $K^2$-dimensional column vector, respectively.
    Moreover, we consider Jacobi matrices at $\tC = \tC^{(0)}$.
    Then, the assertion of the theorem from this inequality, \eqref{eq:lem-proof-ctilde-nuzk}, 
    Lemma \ref{lem:approx-jac}.
\end{proof}

\begin{lemma}
    \label{lem:approx-jac}
    Under assumptions and notations in the proof of Theorem \ref{thm:approx-formula},
    there exists a constant $c > 0$ 
    such that 
    \begin{equation*}
         \left|G_{ij} -  \left.\frac{\partial \psi}{\partial \tC}\right|_{\tC=\tC^{(0)}}\Delta \tC \right| 
         \le
         c \| \Delta \weightmat\|_{\frb}^2.
    \end{equation*}
\end{lemma}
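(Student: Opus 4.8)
The plan is to read $G_{ij}$ as the exact first-order (directional) change of $\psi$ along the weight direction $V_{ij}=\Delta\weightmat$, and to show that the only discrepancy with the linearized quantity $\frac{\partial\psi}{\partial\tC}\big|_{\tC^{(0)}}\Delta\tC$ comes from the second-order curvature of the map $\weightmat\mapsto\tC$, which is exactly what the bounded-Hessian hypothesis of Theorem~\ref{thm:approx-formula} controls. Concretely, I would first invoke the chain rule at the fixed iterate: since $\psi$ is viewed as a function of $\tC$ and, under the surrogate-function smoothing, $\tC$ is a smooth function of $\weightmat$ on the neighbourhood $\mN_0$, the directional derivative factors as $G_{ij}=\frac{\partial\psi}{\partial\tC}\big|_{\tC^{(0)}}\,D_\weightmat\tC[\Delta\weightmat]$, where $D_\weightmat\tC[\Delta\weightmat]$ is the Jacobian of $\tC$ (as a function of $\weightmat$) applied to the step $\Delta\weightmat$.

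Next I would Taylor-expand the \emph{finite} increment $\Delta\tC$. By Taylor's theorem applied entrywise to the smooth map $\weightmat\mapsto\tC$, and using the assumption that the Frobenius norm of the Hessian of $\tC$ is bounded on $\mN_0$ by some $H>0$, one obtains $\Delta\tC = D_\weightmat\tC[\Delta\weightmat] + R$ with a remainder satisfying $\|R\|_\frb\le\tfrac12 H\,\|\Delta\weightmat\|_\frb^2$. Substituting into the target difference gives $G_{ij}-\frac{\partial\psi}{\partial\tC}\Delta\tC=-\frac{\partial\psi}{\partial\tC}\,R$, which by Cauchy–Schwarz for the Frobenius inner product is bounded by $\big\|\frac{\partial\psi}{\partial\tC}\big\|_\frb\,\|R\|_\frb$. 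Because the Jacobian and gradient are evaluated at the fixed point $\tC^{(0)}$, the factor $\big\|\frac{\partial\psi}{\partial\tC}\big\|_\frb$ is a constant, so the whole expression is at most $c\,\|\Delta\weightmat\|_\frb^2$ with $c=\tfrac12 H\,\big\|\frac{\partial\psi}{\partial\tC}\big\|_\frb$, which is precisely the claimed bound.

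The delicate point — and the one I would treat most carefully — is the very first step: $\psi(\tC)$ is not literally a differentiable function of $\weightmat$, since $\tC$ (and $C$) are built from indicator functions. I would therefore carry out the argument under the surrogate-function reformulation discussed after Theorem~\ref{thm:approx-formula}, replacing the indicators by smooth $s_j$ so that $\weightmat\mapsto\tC$ is genuinely twice continuously differentiable on $\mN_0$; this legitimises both the chain rule and the second-order Taylor remainder. The remaining checks are routine: that the step $\Delta\weightmat=\tilde\nu_{ij}^t$ is small enough to remain inside $\mN_0$ (so the Hessian bound applies), and that the constant $c$ depends only on $H$ and on $\partial\psi/\partial\tC$ at $\tC^{(0)}$, hence is independent of the mixup pair $(i,j)$ and of $\Delta\weightmat$. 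Notably, none of the variance/exchange estimates of Theorem~\ref{thm:approx-formula} enter here — this lemma isolates purely the Jacobian/second-order-remainder portion of the overall approximation, with Lemmas~\ref{lem:approx-lem3} and~\ref{lem:approx-exchange} supplying the complementary control of $\Delta\tC_k$ versus $(\Delta\weightmat)^\trn z_k$.
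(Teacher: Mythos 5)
Your proposal is correct and follows essentially the same route as the paper's proof: both apply the chain rule to write $G_{ij} = \frac{\partial \psi}{\partial \tC}\frac{\partial \tC}{\partial \weightmat}\Delta\weightmat$, invoke Taylor's theorem on the smooth (surrogate-smoothed) map $\weightmat \mapsto \tC$ to bound $\bigl\|\Delta\tC - \frac{\partial \tC}{\partial \weightmat}\Delta\weightmat\bigr\|_\frb \le c_1\|\Delta\weightmat\|_\frb^2$, and conclude via sub-multiplicativity of the Frobenius norm, yielding a constant proportional to $\bigl\|\frac{\partial\psi}{\partial\tC}\bigr\|_\frb$. Your additional remarks — that the surrogate reformulation is what legitimises differentiability, and that the variance/exchange estimates of Lemmas \ref{lem:approx-exchange} and \ref{lem:approx-lem3} play no role in this particular lemma — accurately reflect how the paper organizes the overall argument.
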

\begin{proof}
    By the assumption of the mapping $\weightmat \mapsto \tC$ and the Taylor's theorem, there exists $c_1 > 0$ such that 
    \begin{equation}
        \label{eq:ctilde-nu-jac-approx}
        \left\| \Delta \tC - 
        \left(\left.\frac{\partial \tC}{\partial \weightmat} \right)\right |_{\weightmat = \weightmat_0} \Delta \weightmat \right\|_{\frb}
        \le c_1 \| \Delta \weightmat \|_\frb^2.
    \end{equation}
    By definition of $G_{ij}$, we have the following:
    \begin{align*}
        \left|G_{ij} -  \left.\frac{\partial \psi}{\partial \tC}\right|_{\tC=\tC^{(0)}}\Delta \tC \right| 
        &=
      \left |\frac{\partial \psi}{\partial \weightmat} \Delta \weightmat - \frac{\partial \psi}{\partial \tC}\Delta \tC  \right|
      \\
      &= 
      \left | \frac{\partial \psi}{\partial \tC} 
      \frac{\partial \tC}{\partial \weightmat} \Delta \weightmat 
      - \frac{\partial \psi}{\partial \tC} \Delta \tC
      \right | \\
      &\le 
      c_1 \left \| \frac{\partial \psi}{\partial \tC} \right \|_\frb  \left \| \Delta \weightmat \right \|_\frb^2.
    \end{align*}
    Here we consider Jacobi matrices at $\weightmat = \weightmat_0$ and corresponding values.
    The last inequality follows from the fact that 
    the matrix norm $\| \cdot\|_\frb$ is sub-multiplicative and Eq. \eqref{eq:ctilde-nu-jac-approx}.
\end{proof}

\begin{lemma}
    \label{lem:approx-exchange}
    Under assumptions and notations in the proof of Theorem \ref{thm:approx-formula},
    there exist a positive constant $c = c(K)$ depending only on $K$ with $c = O(\mathrm{poly}(K))$
    such that:
    \begin{equation*}
        |\ex{\sigma_l(\weightmat^\trn g(x_k))} -
        \sigma_l(\weightmat^\trn z_k)
        | \le c \sum_{m=1}^{K}\var{(\weightmat^\trn g(x_k))_m},
    \end{equation*}
    for any $1 \le k, l \le K$.
\end{lemma}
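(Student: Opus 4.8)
The plan is to recognize Lemma~\ref{lem:approx-exchange} as a second-order Taylor (Jensen-gap) estimate for the smooth map $\sigma_l$ evaluated at a random logit vector. I write $\xi = \weightmat^\trn g(x_k)$ for this random vector and $\mu = \weightmat^\trn z_k$ for the image of the centroid. Since $z_k = \ex{g(x_k)}$ and $\weightmat$ is fixed, linearity of expectation gives $\ex{\xi} = \weightmat^\trn \ex{g(x_k)} = \mu$. Hence the quantity to be bounded is exactly the Jensen gap $\ex{\sigma_l(\xi)} - \sigma_l(\ex{\xi})$, and the guiding intuition is that this gap is controlled by the curvature of $\sigma_l$ multiplied by the spread of $\xi$ around $\mu$, i.e.\ by the coordinatewise variances.

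First I would apply Taylor's theorem with Lagrange remainder to $\sigma_l$ about $\mu$: for each realization of $\xi$ there is an intermediate point $\widetilde{\xi}$ on the segment joining $\mu$ and $\xi$ with
\[
\sigma_l(\xi) = \sigma_l(\mu) + \nabla \sigma_l(\mu) \cdot (\xi - \mu) + \tfrac{1}{2}(\xi - \mu)^\trn H_l(\widetilde{\xi})(\xi - \mu),
\]
where $H_l$ denotes the Hessian of $\sigma_l$. Taking the expectation and using $\ex{\xi - \mu} = 0$ annihilates the linear term, leaving
\[
\ex{\sigma_l(\xi)} - \sigma_l(\mu) = \tfrac{1}{2}\,\ex{(\xi - \mu)^\trn H_l(\widetilde{\xi})(\xi - \mu)}.
\]

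Next I would bound the Hessian entries uniformly. Differentiating the first-order identity $\partial \sigma_l/\partial \xi_m = \sigma_l(\delta_{lm} - \sigma_m)$ once more yields second partials that are polynomials in the values $\sigma_1,\dots,\sigma_K$ together with Kronecker deltas; since $0 \le \sigma_r \le 1$ for every $r$, each entry of $H_l$ is bounded in absolute value by an absolute constant $c_0$ \emph{uniformly over all of} $\RR^K$. Consequently $|(\xi - \mu)^\trn H_l(\widetilde{\xi})(\xi - \mu)| \le c_0 \sum_{m,n} |(\xi - \mu)_m|\,|(\xi - \mu)_n|$, and applying $|ab| \le \tfrac{1}{2}(a^2 + b^2)$ to each of the $K^2$ cross terms collapses this to $c_0 K \sum_m (\xi - \mu)_m^2$. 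Taking expectations converts $\ex{(\xi - \mu)_m^2}$ into $\var{(\weightmat^\trn g(x_k))_m}$, giving the claim with $c = \tfrac{1}{2} c_0 K = O(K)$, hence $c = O(\mathrm{poly}(K))$.

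The one delicate point is that the remainder is evaluated at the \emph{random} intermediate point $\widetilde{\xi}$, so a Hessian bound valid only on a neighborhood would not suffice; the argument works precisely because the softmax Hessian is bounded on the whole of $\RR^K$, which removes any dependence on where $\widetilde{\xi}$ lands. Everything else is bookkeeping, and the polynomial factor of $K$ in the final constant arises exactly from bounding the $K^2$ cross terms via AM--GM.
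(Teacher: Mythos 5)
Your proof is correct and follows essentially the same route as the paper's: a second-order Taylor expansion of $\sigma_l$ about the mean logit vector, cancellation of the linear term upon taking expectations, and a globally valid bound on the softmax Hessian (polynomial in $K$) to control the quadratic remainder by the sum of coordinatewise variances. The only difference is bookkeeping — you bound the Hessian entrywise and handle cross terms via AM--GM, while the paper invokes a Frobenius-norm bound — and your explicit remark that the bound must hold at the random intermediate point is a detail the paper leaves implicit.
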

\begin{proof}
    This can be proved by applying the Taylor's theorem to $\sigma_l$.
    We fix $k, l$ and apply the Taylor's theorem to the function $\xi \mapsto \sigma_l(\xi)$ 
    at $\xi = \weightmat^\trn z_k = \weightmat^\trn \ex{g(x_k)}$.
    Then there exists $\xi_0 \in \RR^K$ such that 
    \begin{equation*}
        \sigma_l(\xi) = 
        \sigma_l(\weightmat^\trn z_k) + 
        \left. \frac{\partial \sigma_l}{\partial \xi}\right|_{\xi=\weightmat^\trn z_k}(\xi - \weightmat^\trn z_k) + 
        \frac{1}{2}
        (\xi - \weightmat^\trn z_k)^\trn H_k
        (\xi - \weightmat^\trn z_k),
    \end{equation*}
    where $H_k = 
    \left. \frac{\partial^2 \sigma_l}{\partial \xi^2} \right|_{\xi=\xi_0}$.
    By noting that $\frac{\partial \sigma_l}{\partial \xi_m} = \delta_{lm} \sigma_l(\xi) - \sigma_l(\xi) \sigma_m(\xi)$ 
    (here $\delta_{lm}$ is the Kronecker's delta), it is easy to see that there exists a constant $c'_{l}$ 
    depending only on $l$ and $K$
    such that $\|H\|_\frb < c'_{l}$ and $c'_{l} = O(\mathrm{poly}(K))$.
    By letting $\xi = \weightmat^\trn g(x_k)$ in the above equation and taking the expectation of the both sides,
    we obtain the assertion of the lemma with $c=\frac{1}{2}\max_{l \le [K]}c_{l}'$.
\end{proof}

\begin{lemma}
    \label{lem:approx-lem3}
    Under assumptions and notations in the proof of Theorem \ref{thm:approx-formula},
    we assume there exists $\varepsilon_1 > 0$ such that 
    the following inequality holds for all $k$ and $l$ for any $\weightmat$ in an open neighborhood of $\weightmat^{(0)}$ and corresponding $\tC$:
    \begin{equation}
        \label{eq:lem-ineq-zi-tc}
        \left | \sigma_l(\weightmat^\trn z_k) -  
        \sigma_l(\tC_k)
        \right | \le \varepsilon_1.
    \end{equation}
    Furthermore, we assume that $\|(\Delta \weightmat)^\trn z_k\|_\frb$ is sufficiently small for all $k$.
    Then there exist constants $c_1 = c_1(K)$ and $c_2 = c_2(K)$ depending on only $K$ with $c_1, c_2 = O(\mathrm{poly}(K))$
    such that 
    \begin{equation*}
        \left\|
            \left. \frac{\partial C}{\partial \tC_k}\right|_{\tC_k = \tC_k^{(0)}}
        \left( 
            \Delta \tC_k - (\Delta \weightmat)^\trn z_k
        \right)\right\|_\frb \le c_1 \varepsilon_1 + c_2 (\|\Delta \tC_k\|^2_\frb + \|(\Delta \weightmat)^\trn z_k\|_\frb^2).
    \end{equation*}
    Here, 
    $\tC_k$ (resp. $\Delta \tC_k$) is a column vector such that the $k$-th row vector of $\tC$ (resp. $\Delta \tC$) is given as
    $\tC_k$ (resp. $\Delta \tC_k$).
    Moreover, 
    when defining Jacobi matrices,
    we regard $C$ as a $K^2$-vector
    and consider a $K^2\times K$ Jacobi matrix $\left. \frac{\partial C}{\partial \tC_k}\right|_{\tC_k = \tC_k^{(0)}}$ at $\tC_k = \tC_k^{(0)}$.
\end{lemma}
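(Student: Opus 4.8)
The plan is to exploit the algebraic structure of the map $\tC_k \mapsto C_k$. Since $C_k = \pi_k\,\sigma(\tC_k)$ with $\sigma = \softmax$, the confusion block $C_k$ depends only on the $k$-th row $\tC_k$, so the only nonzero row-block of the $K^2\times K$ Jacobian $\left.\frac{\partial C}{\partial \tC_k}\right|_{\tC_k^{(0)}}$ is $\pi_k\left.\frac{\partial \sigma}{\partial \xi}\right|_{\xi=\tC_k^{(0)}}$. Because $\pi_k \le 1$, the Frobenius norm to be bounded reduces to $\left\|\left.\frac{\partial \sigma}{\partial \xi}\right|_{\tC_k^{(0)}}\big(\Delta\tC_k - (\Delta\weightmat)^\trn z_k\big)\right\|$, so from here on everything is a $K$-dimensional computation. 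I would then convert each of the two Jacobian-vector products into a genuine finite difference of a softmax composition plus a controlled quadratic remainder. Writing $F(\weightmat) = \sigma(\weightmat^\trn z_k)$ and $G(\weightmat) = \sigma(\tC_k(\weightmat))$, Taylor's theorem applied to $\sigma$ (whose Hessian is a fixed degree-two polynomial in the softmax values, hence bounded by $O(\mathrm{poly}(K))$ as already used in Lemma~\ref{lem:approx-exchange}) gives $\left.\frac{\partial\sigma}{\partial\xi}\right|_{\tC_k^{(0)}}\Delta\tC_k = G(\weightmat^{(0)} + \Delta\weightmat) - G(\weightmat^{(0)}) + O(\|\Delta\tC_k\|_\frb^2)$, using that $\Delta\tC_k$ is the \emph{exact} increment of $\tC_k$. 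By the chain rule for the linear map $\weightmat \mapsto \weightmat^\trn z_k$, one gets analogously $\left.\frac{\partial\sigma}{\partial\xi}\right|_{(\weightmat^{(0)})^\trn z_k}(\Delta\weightmat)^\trn z_k = F(\weightmat^{(0)} + \Delta\weightmat) - F(\weightmat^{(0)}) + O(\|(\Delta\weightmat)^\trn z_k\|_\frb^2)$.

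The key technical step is to replace the Jacobian evaluated at $\tC_k^{(0)}$ by the one at $(\weightmat^{(0)})^\trn z_k$ in the second product, and this is precisely where hypothesis \eqref{eq:lem-ineq-zi-tc} enters. Since the softmax Jacobian $\left(\frac{\partial\sigma}{\partial\xi}\right)_{lm} = \delta_{lm}\sigma_l - \sigma_l\sigma_m$ is a fixed quadratic polynomial in the softmax output, and \eqref{eq:lem-ineq-zi-tc} forces $\|\sigma(\tC_k^{(0)}) - \sigma((\weightmat^{(0)})^\trn z_k)\| \le \sqrt{K}\,\varepsilon_1$, the two Jacobians differ in operator norm by at most $O(\mathrm{poly}(K))\,\varepsilon_1$. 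Multiplying this discrepancy by $(\Delta\weightmat)^\trn z_k$ and using that $\|(\Delta\weightmat)^\trn z_k\|_\frb$ is sufficiently small (so that $\varepsilon_1\|(\Delta\weightmat)^\trn z_k\|_\frb \le c\,\varepsilon_1$) produces a term that can be absorbed into $c_1\varepsilon_1$. Crucially, this transfer of closeness-of-values to closeness-of-Jacobians needs no extra regularity on the correspondence $\weightmat \mapsto \tC$, only that the Jacobian is a function of the softmax output.

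Assembling the pieces, the quantity $\left.\frac{\partial\sigma}{\partial\xi}\right|_{\tC_k^{(0)}}\big(\Delta\tC_k - (\Delta\weightmat)^\trn z_k\big)$ equals $\big[G(\weightmat^{(0)} + \Delta\weightmat) - F(\weightmat^{(0)} + \Delta\weightmat)\big] - \big[G(\weightmat^{(0)}) - F(\weightmat^{(0)})\big]$ up to the collected remainders $O(\|\Delta\tC_k\|_\frb^2) + O(\|(\Delta\weightmat)^\trn z_k\|_\frb^2) + O(\mathrm{poly}(K))\,\varepsilon_1$. Because hypothesis \eqref{eq:lem-ineq-zi-tc} holds at \emph{every} $\weightmat$ in the neighborhood, in particular at both $\weightmat^{(0)}$ and $\weightmat^{(0)}+\Delta\weightmat$, each bracket $G - F$ has all coordinates bounded by $\varepsilon_1$, so the main term is bounded in Frobenius norm by $2\sqrt{K}\,\varepsilon_1$. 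Folding every $\varepsilon_1$-contribution into $c_1\varepsilon_1$ and the two Taylor remainders into $c_2(\|\Delta\tC_k\|_\frb^2 + \|(\Delta\weightmat)^\trn z_k\|_\frb^2)$, with $c_1, c_2 = O(\mathrm{poly}(K))$, yields exactly the claimed inequality.

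I expect the main obstacle to be the Jacobian-replacement step described above: one must carefully justify that the softmax Jacobian depends on $\weightmat$ only through the softmax values, so that \eqref{eq:lem-ineq-zi-tc} alone controls the Jacobian difference, and then track that this difference enters only multiplied by the small factor $(\Delta\weightmat)^\trn z_k$ so it contributes at order $\varepsilon_1$ rather than spoiling the bound. The remaining work—bounding softmax first and second derivatives by $O(\mathrm{poly}(K))$ and the two Taylor remainders—is routine bookkeeping.
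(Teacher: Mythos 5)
Your proposal is correct and follows essentially the same route as the paper's proof: Taylor-expanding the softmax at both $\tC_k^{(0)}$ and $\mu_k = (\weightmat^{(0)})^\trn z_k$ with $O(\mathrm{poly}(K))$ Hessian bounds, invoking hypothesis \eqref{eq:lem-ineq-zi-tc} at both $\weightmat^{(0)}$ and $\weightmat^{(0)}+\Delta\weightmat$ to control the difference of the resulting finite differences, swapping the Jacobian at $\mu_k$ for the one at $\tC_k^{(0)}$ via the polynomial dependence of the softmax Jacobian on the softmax values times the small factor $\Delta\mu_k$, and finally reducing $\partial C/\partial \tC_k$ to $\pi_k$ times the softmax Jacobian. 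The only differences are organizational (you do the $\pi_k$-reduction first rather than last, and make the finite-difference functions $F, G$ explicit), not mathematical.
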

\begin{proof}
    Since \eqref{eq:lem-ineq-zi-tc} holds all $\weightmat$ in an open neighborhood of $\weightmat^{(0)}$ and corresponding $\tC$, 
    we apply the Taylor's theorem to the function $\xi \mapsto \sigma_l(\xi)$ at $\xi = (\weightmat^{(0)})^\trn z_k$ and 
    $\xi = \tC_k^{(0)}$. Then by \eqref{eq:lem-ineq-zi-tc} and the same argument in the proof of Lemma \ref{lem:approx-exchange},
    we have
    \begin{equation*}
        \left |
        \left.
            \frac{\partial \sigma_l} {\partial \xi} 
        \right|_{\xi=\mu_k} \Delta \mu_k -
        \left. \frac{\partial \sigma_l}{\partial_\xi}\right|_{\xi=\tC_k^{(0)}} \Delta \tC_k
        \right | \le \varepsilon_1 + c_2'(\|\Delta \mu_k\|_\frb^2 + \|\Delta \tC_k\|_\frb^2),
    \end{equation*}
    where $\mu_k = (\weightmat^{(0)})^\trn z_k$,
    $\Delta \mu_k = (\Delta \weightmat)^\trn z_k$.
    Noting that $(\frac{\partial \sigma_l}{\partial \xi})_m$ is given as $\delta_{ml} \sigma_l(\xi) - \sigma_m(\xi)\sigma_l(\xi)$,
    \eqref{eq:lem-ineq-zi-tc} and the assumption that $\|\Delta \mu_k\|_\frb$ is sufficiently small, 
    we see that there exists a constant $c_1', c_2'>0$ depending only on $K$
    with $c_1', c_2' = O(\mathrm{poly}(K))$ such that the following inequality holds:
    \begin{equation}
        \label{eq:lem-delmu-deltilde}
        \left|
        \left. 
            \frac{\partial \sigma_l} {\partial \xi} 
        \right|_{\xi=\tC_k^{(0)}}
        \left(
        \Delta \mu_k - \Delta \tC_k
        \right)
        \right|
        \le c_1' \varepsilon_1 + c_2'(\|\Delta \mu_k\|_\frb^2 + \|\Delta \tC_k\|_\frb^2).
    \end{equation}
    Next, we consider entries of the $K^2$-vector 
    $
    \left.  \frac{\partial C}{\partial \tC_k}
        \right|_{\tC_k=\tC_k^{(0)}} (\Delta \tC_k - \Delta \mu_k)$.
    Here as previously mentioned by fixing an order on $[K] \times [K]$, we regard 
    $        \frac{\partial C}{\partial \tC_k}$ as a $K^2 \times K$-matrix.
    For $(k, l) \in [K] \times [K]$, by the definition of the mapping $\tC \mapsto C$,
    $(k, l)$-th entry of 
    $\left.  \frac{\partial C}{\partial \tC_k}
    \right|_{\tC_k=\tC_k^{(0)}} (\Delta \tC_k - \Delta \mu_k)$ 
    is given as $\pi_k \left.\frac{\partial \sigma_l}{\partial \xi}\right|_{\xi=\tC_k^{(0)}} (\Delta \tC_k - \Delta \mu_k)$.
    By \eqref{eq:lem-delmu-deltilde}, we see that there exist constants $c_1'', c_2''$ depending only on $K$ and 
    $c_1'', c_2'' = O(\mathrm{poly}(K))$ such that 
    \begin{equation*}
        \left\|
        \left. \frac{\partial C}{\partial \tC_k}\right|_{\tC_k = \tC_k^{(0)}}
        \left( 
            \Delta \tC_k - (\Delta \weightmat)^\trn z_k
        \right)\right\|_\frb \le c_1'' \varepsilon_1 + c_2'' (\|\Delta \tC_k\|^2_\frb + \|(\Delta \weightmat)^\trn z_k\|_\frb^2).
    \end{equation*}
    Since constants $c_1'', c_2''$ may depend on $(k, l)$ by taking $c_1 = \max_{(k, l)}c_1''$ and $c_2 = \max_{(i, l)} c_2''$, 
    we have the assertion of the lemma.
\end{proof}

\subsection{Validity of the Mixup Sampling Distribution}
\label{sec:app-analysis-of-var-of-selmix-policy}
In this section, 
Motivated by Algorithm \ref{alg:ours}, we consider the following online learning problem and 
prove validity of our method.
For each time step $t=1,\dots, T$, an agent selects pairs $(i^{(t)}, j^{(t)}) \in [K] \times [K]$,
where random variables $(i^{(t)}, j^{(t)})$ follows a distribution $\mathcal{P}^{(t)}$ on $[K] \times [K]$.
We call a sequence of distributions $(\mathcal{P}^{(t)})_{t=1}^{T}$ a policy.
For $(i, j) \in [K] \times [K]$ and $1 \le t \le T$, we assume that random variable $G^{(t)}_{ij}$ 
is defined. 
We regard $G^{(t)}_{ij}$ as the gain in the metric when performing $(i, j)$-mixup at iteration $t$
in Algorithm \ref{alg:ours}.
We assume that $G^{(t)}_{ij}$ is random variable 
due to randomness of the validation dataset, $X_1, X_2$, and the policy.
Furthermore, we assume that when selecting $(i^{(t)}, j^{(t)})$, the agent observes random variables 
$G^{(t)}_{ij}$ for $(i, j) \in [K] \times [K]$
but cannot observe the true gain defined by $\ex{G^{(t)}_{ij}}$.
The average gain $\overline{G}^{(T)}(\boldsymbol{\mathcal{P}})$ of a policy $\boldsymbol{\mathcal{P}} = (\mathcal{P}^{(t)})_{t=1}^{T}$ is defined as 
\begin{math}
   \overline{G}^{(T)}(\boldsymbol{\mathcal{P}}) = \frac{1}{T}\sum_{t=1}^T \ex{G^{(t)}_{i^{(t)}j^{(t)}}},
\end{math}
where $(i^{(t)}, j^{(t)})$ follows the distribution $\mathcal{P}^{(t)}$ and 
the expectation is taken with respect to the randomness of the policy,
validation dataset, $X_1, X_2$.
This problem setting is similar to that of Hedge
\cite{freund1997decision} (i.e., online convex optimization).
However, in the problem setting of Hedge, the agent observes gains (or losses) 
after performing an action but in our problem setting, the agent have random estimations of the gains before performing an action.
We note that even in this setting, methods such as $\mathrm{argmax}$ with respect to $G^{(t)}_{ij}$
may not perform well due to randomness of $G^{(t)}_{ij}$ and errors in the approximation (Refer Sec. \ref{sec:empirical_results} for evidence).

We call a policy $\boldsymbol{\mathcal{P}} =(\mathcal{P}^{(t)})_{t=1}^{T}$ non-adaptive (or stationary) if $\mathcal{P}^{(t)}$ is the same for all $t=1, \dots, T$, i.e,
if there exists a distribution $\mathcal{P}^{(0)}$ on $[K] \times [K]$ such that $\mathcal{P}^{(t)} = \mathcal{P}^{(0)}$ for all $t=1, \dots, T$.
A typical example of non-adaptive policies is the uniform mixup, i.e.,  $\mathcal{P}^{(t)}$ is the uniform distribution on $[K] \times  [K]$.
Another example is $\mathcal{P}^{(t)} = \delta_{(i^{(0)}, j^{(0)})}$ 
for a fixed $(i^{(0)}, j^{(0)}) \in [K]\times [K]$ (i.e., the agent performs the fixed $(i^{(0)}, j^{(0)})$-mixup in each iteration).
Similarly to Hedge \citep{freund1997decision} and EXP3 \cite{auer2002}, we define 
$\boldsymbol{\mathcal{P}}_\text{SelMix} = (\mathcal{P}^{(t)}_\text{SelMix})_{t=1}^T$ by 
$\mathcal{P}^{(t)}_\text{SelMix} = \softmax((\invtemp \sum_{\tau=1}^t G^{(\tau)}_{ij})_{1\le i, j \le K})$,
where $\invtemp > 0$ is the inverse temperature parameter.
The following theorem states that $\bmP_\selmix$ is better than any non-adaptive policy in terms of the average expected gain
if $T$ is large:
\begin{theorem}
    \label{thm:regbound}
    We assume that $G^{(t)}_{ij}$ is normalized so that $|G^{(t)}_{ij}| \le 1$.
    Then, 
    for any non-adaptive policy $\boldsymbol{\mathcal{P}}^{(0)} = (\mathcal{P}^{(0)})_{t=1}^{T}$, we have
    \begin{math}
        \overline{G}^{(T)}(\boldsymbol{\mathcal{P}}_\text{SelMix}) + 
        \frac{2 \log K}{s T}
        \ge \overline{G}^{(T)}(\boldsymbol{\mathcal{P}}^{(0)}).
    \end{math}
\end{theorem}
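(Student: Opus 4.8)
The plan is to recognize $\boldsymbol{\mathcal{P}}_\text{SelMix}$ as a \emph{prescient} exponential-weights (Hedge-type) policy over the $N = K^2$ ``experts'' $(i,j)\in[K]\times[K]$, prove a deterministic pathwise regret bound for it, and then lift this to the stated inequality on expected average gains by taking expectations. The defining feature I will exploit is that $\mathcal{P}^{(t)}_\text{SelMix} = \softmax\big((s\sum_{\tau=1}^t G^{(\tau)}_{ij})_{i,j}\big)$ uses the cumulative gain through the \emph{current} round $t$; this is what lets the analysis avoid the usual second-order penalty and leave only the $\tfrac{2\log K}{s}$ term.

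First I would reduce to a pathwise statement. Conditioning on the realized table of gains $\mathcal{G}=\{G^{(\tau)}_{ij}\}$, each $\mathcal{P}^{(t)}_\text{SelMix}$ is a fixed function of $\mathcal{G}$ and the action $(i^{(t)},j^{(t)})$ is a fresh draw from it, so $\mathbb{E}[G^{(t)}_{i^{(t)}j^{(t)}}\mid\mathcal{G}] = \sum_{i,j}\mathcal{P}^{(t)}_\text{SelMix}(i,j)\,G^{(t)}_{ij} = \langle \mathcal{P}^{(t)}_\text{SelMix}, G^{(t)}\rangle$; for the non-adaptive comparator the analogous identity gives $\mathbb{E}[G^{(t)}_{i^{(t)}j^{(t)}}\mid\mathcal{G}] = \langle \mathcal{P}^{(0)}, G^{(t)}\rangle$ since $\mathcal{P}^{(0)}$ does not look at $\mathcal{G}$. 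The normalization $|G^{(t)}_{ij}|\le 1$ keeps these expectations finite. It therefore suffices to establish, for every fixed gain sequence,
\begin{equation*}
\sum_{t=1}^T \langle \mathcal{P}^{(t)}_\text{SelMix}, G^{(t)}\rangle \;\ge\; \sum_{t=1}^T \langle \mathcal{P}^{(0)}, G^{(t)}\rangle \;-\; \frac{2\log K}{s},
\end{equation*}
after which dividing by $T$ and taking the outer expectation yields the theorem.

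For the pathwise bound I would use a log-partition potential. Writing $S^{(t)}_{ij}=\sum_{\tau=1}^t G^{(\tau)}_{ij}$ and $\Phi_t = \tfrac{1}{s}\log\sum_{i,j}\exp(s S^{(t)}_{ij})$, one has $\Phi_0 = \tfrac{1}{s}\log(K^2) = \tfrac{2\log K}{s}$. Two estimates then drive the argument. (a) A terminal lower bound via the Gibbs variational inequality: for the distribution $q=\mathcal{P}^{(0)}$, $\Phi_T \ge \sum_{i,j} q_{ij}S^{(T)}_{ij} + \tfrac{1}{s}H(q) \ge \sum_{t=1}^T\langle q, G^{(t)}\rangle$, where $H(q)\ge 0$ is the Shannon entropy, which recovers the comparator gain. (b) A per-round increment bound: since $\mathcal{P}^{(t)}_\text{SelMix}$ is built from $S^{(t)}$, a direct computation gives $\Phi_t - \Phi_{t-1} = -\tfrac{1}{s}\log\big(\sum_{i,j}\mathcal{P}^{(t)}_\text{SelMix}(i,j)\exp(-s G^{(t)}_{ij})\big)$, and convexity of $x\mapsto e^{-sx}$ (Jensen) yields $\Phi_t - \Phi_{t-1} \le \langle \mathcal{P}^{(t)}_\text{SelMix}, G^{(t)}\rangle$. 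Telescoping (b) and combining with (a) and the value of $\Phi_0$ gives the displayed pathwise inequality.

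The step I expect to require the most care is twofold. The essential point is the \emph{prescient} structure: because the current gain $G^{(t)}$ already enters $\mathcal{P}^{(t)}_\text{SelMix}$ and the agent observes it before acting, the increment in (b) collapses to a single clean Jensen step, so the quadratic ``stability'' term of order $sT$ that appears in standard Hedge analysis is absent — this is precisely what produces the bound with no additive $O(sT)$ penalty. The second delicate point is the probabilistic reduction in the first paragraph: I must ensure the conditional-expectation identities are valid when evaluating both policies against a \emph{common} realized gain table $\mathcal{G}$, which is legitimate because the pathwise bound holds for any fixed table and the comparator's sampling is independent of $\mathcal{G}$.
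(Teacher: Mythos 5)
Your proof is correct, and it takes a recognizably different route from the paper's. The paper identifies $\mathcal{P}^{(t)}_{\text{SelMix}}$ as the iterate of entropic mirror descent, i.e., as the solution of $p^{(t)} = \argmin_{p \in \Delta}\, -s\langle p, G^{(t)}\rangle + D(p, p^{(t-1)})$ with $p^{(0)}$ uniform, reduces the comparator to the best fixed pair $(i_0,j_0)$ in hindsight, and then combines the first-order optimality condition of this convex problem with a telescoping sum of KL divergences, ending with $D(\pi_0, p^{(0)}) \le -F(p^{(0)}) = \log K^2$. You instead run the potential-function argument on the log-partition function $\Phi_t = \tfrac{1}{s}\log\sum_{i,j}\exp(s S^{(t)}_{ij})$: each increment is bounded by $\langle \mathcal{P}^{(t)}_{\text{SelMix}}, G^{(t)}\rangle$ via a single application of Jensen's inequality (this is where the prescient structure --- the current gain appearing inside the current softmax --- is used, exactly as the paper uses it by putting $G^{(t)}$ into the $t$-th mirror-descent step), and the terminal value is lower-bounded by the comparator's cumulative gain via the Gibbs variational principle plus $H(q)\ge 0$. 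The two arguments are Fenchel-dual views of the same mechanism, but yours is more self-contained (Jensen and log-sum-exp duality rather than Bregman-divergence machinery), and it handles an arbitrary fixed comparator distribution $q = \mathcal{P}^{(0)}$ directly, without the paper's (trivial) reduction to a one-hot comparator; both proofs correctly avoid any second-order $O(sT)$ term and in fact never need the normalization $|G^{(t)}_{ij}|\le 1$. One caveat: your reduction conditions on the \emph{entire} realized gain table $\mathcal{G}$, but in the paper's setting the gains may depend on past actions, so conditioning on future gains could in principle distort the conditional law of the action at round $t$; the clean fix is to take expectations via the filtration $\mathcal{F}_t$ (history up to and including $G^{(t)}$), under which the action at round $t$ is exactly $\mathcal{P}^{(t)}_{\text{SelMix}}$-distributed and your pathwise inequality --- which only involves deterministic functionals $\langle \mathcal{P}^{(t)}_{\text{SelMix}}, G^{(t)}\rangle$ of the gain path --- passes to expectations by the tower property. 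The paper glosses over this same point, so it is a shared technical caveat rather than a gap in your argument.
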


\begin{proof}[Proof of Theorem \ref{thm:regbound}]
    This can be proved by standard argument of the proof of the mirror descent method (see e.g. \citep{lattimore2020bandit}, chapter 28).

    Denote by $\Delta \subset \RR^{K\times K}$ the probability simplex of dimension $K^2 - 1$.
    Let $(i_0, j_0) \in K \times K$ 
    be the best fixed mixup hindsight.
    Since any non-adaptive policy is no better than
    the best fixed mixup in terms of $\overline{G}$,
    we may assume that $\bmP^{(0)} = (\pi_0)_t$,
    where 
    $\pi_0$ is 
    the one-hot vector in $ \Delta$ 
    defined 
    as $(\pi_0)_{ij} = 1$ if $(i, j) = (i_0, j_0)$ and $0$ otherwise
    for $1 \le i, j \le K$.
    Let $F$ be the negative entropy function, i.e., $F(p) = \sum_{i,j=1}^{K}p_{ij} \log p_{ij}$.
    For $p \in \Delta$ and $G \in \RR^{K \times K}$, we define $\langle p, G \rangle = \sum_{i,j=1}^{K}p_{ij}G_{ij}$.
    Then, it is easy to see that $p^{(t)}=\mP_\selmix^{(t)}$ defined above is given as the solution of the following:
    \begin{equation}
        \label{eq:mirror-descent}
        p^{(t)} = \argmin_{p \in \Delta}
        - s\langle p, G^{(t)} \rangle + D(p, p^{(t-1)}).
    \end{equation}
    Here $D$ denotes the KL-divergence
    and we define $p^{(0)}=(1/K^2)_{1\le i ,j \le K} = \argmin_{p \in \Delta} F(p)$.
    Since the optimization problem \eqref{eq:mirror-descent} is a convex optimization problem,
    by the first order optimality condition, we have
    \begin{equation*}
        \langle \pi_0 - p^{(t)} ,  G^{(t)}\rangle 
        \le \frac{1}{s}
        \left\{
            D(\pi_0, p^{(t-1)}) - D(\pi_0, p^{(t)}) - D(p^{(t)}, p^{(t-1)})
        \right\}.
    \end{equation*}
    By summing the both sides and taking expectation, we have
    \begin{align*}
        T\overline{G}^{(T)}(\bmP^{(0)}) - T\overline{G}^{(T)}(\bmP_\selmix) 
        &\le 
        \frac{1}{s} \left\{ 
            D(\pi_0, p^{(0)}) - D(\pi_0, p^{(T)})
            - \sum_{t=1}^{T} D(p^{(t)}, p^{(t-1)})
        \right\}\\
        &\le 
        \frac{1}{s} D(\pi_0, p^{(0)}).
    \end{align*}
    Here the second inequality follows from the non-negativity of the KL-divergence. 
    Since $p^{(0)} = \argmin_{p} F(p)$, by the first-order optimality condition, we have
    $D(\pi_0, p^{(0)}) \le F(\pi_0) - F(p^{(0)})$.
    Noting that $F(\pi_0) \le 0$, we have the following
    \begin{equation*}
        T\overline{G}^{(T)}(\bmP^{(0)}) - T\overline{G}^{(T)}(\bmP_\selmix) 
        \le 
        \frac{-F(p^{(0)})}{s}
        = \frac{\log K^2}{s}.
    \end{equation*}
    This completes the proof.
\end{proof}

\subsection{A Variant of Theorem \ref{thm:regbound}}
\label{app:proof-reg}
In the case when $\bmP_{\selmix}^{(t)}$ is defined similarly to Hedge \cite{freund1997decision},
i.e., 
$\boldsymbol{\mathcal{P}}^{(t)}_\text{SelMix} = \softmax((\invtemp \sum_{\tau=1}^{t-1} G^{(\tau)}_{ij})_{ij})$,
then by the standard analysis, we can prove the following.

\begin{theorem}
    \label{thm:regbound-variant}
    We assume that $G^{(t)}_{ij}$ is normalized so that $|G^{(t)}_{ij}| \le 1$.
    Then, with an appropriate choice of the parameter $\invtemp$, 
    for any non-adaptive policy $\boldsymbol{\mathcal{P}}^{(0)} = (\mathcal{P}^{(0)})_{t=1}^{T}$, we have
    \begin{math}
        \overline{G}^{(T)}(\boldsymbol{\mathcal{P}}_\text{SelMix}) + 2\sqrt{\log K}/\sqrt{T} 
        \ge \overline{G}^{(T)}(\boldsymbol{\mathcal{P}}^{(0)}).
    \end{math}
\end{theorem}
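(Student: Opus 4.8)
The plan is to recognize Theorem~\ref{thm:regbound-variant} as the classical regret guarantee for the exponential-weights (Hedge) algorithm applied to our randomly estimated gains, run the standard potential-function analysis pathwise, and only then take expectations. As in the proof of Theorem~\ref{thm:regbound}, I would first reduce the comparator to a single fixed mixup pair. Writing $\bar g_{ij} = \frac{1}{T}\sum_{t=1}^T \ex{G^{(t)}_{ij}}$, every non-adaptive policy satisfies $\overline{G}^{(T)}(\bmP^{(0)}) = \sum_{ij}\mathcal{P}^{(0)}_{ij}\,\bar g_{ij} \le \max_{ij}\bar g_{ij} = \bar g_{i_0 j_0}$, where $(i_0,j_0) = \argmax_{ij}\bar g_{ij}$ is a \emph{deterministic} maximizer. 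Hence it suffices to prove $\overline{G}^{(T)}(\bmP_\selmix) + 2\sqrt{\log K}/\sqrt{T} \ge \bar g_{i_0 j_0}$.

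For the core estimate I would argue pathwise, i.e.\ for a fixed realization of all the random gains $G^{(t)}_{ij}$. Set $w^{(t)}_{ij} = \exp(\invtemp \sum_{\tau=1}^{t-1} G^{(\tau)}_{ij})$ and $W_t = \sum_{ij} w^{(t)}_{ij}$, so that $p^{(t)} := \mathcal{P}^{(t)}_\selmix = w^{(t)}/W_t$, and track the potential $\Phi_t = \log W_t$. A lower bound follows by keeping only the comparator term, $\Phi_{T+1} \ge \invtemp \sum_t G^{(t)}_{i_0 j_0}$, while $\Phi_1 = \log K^2$. For the matching upper bound I would bound one increment at a time: since $|G^{(t)}_{ij}| \le 1$, Hoeffding's lemma gives $\log(W_{t+1}/W_t) = \log \sum_{ij} p^{(t)}_{ij} e^{\invtemp G^{(t)}_{ij}} \le \invtemp \langle p^{(t)}, G^{(t)}\rangle + \invtemp^2/2$. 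Summing over $t$ and combining with the lower bound yields $\sum_t G^{(t)}_{i_0 j_0} - \sum_t \langle p^{(t)}, G^{(t)}\rangle \le \frac{\log K^2}{\invtemp} + \frac{\invtemp T}{2}$, and the choice $\invtemp = 2\sqrt{(\log K)/T}$ balances the two terms into the pathwise regret bound $2\sqrt{T\log K}$.

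It remains to take expectations and match the definition of $\overline{G}^{(T)}$. Since $(i_0,j_0)$ is deterministic, $\ex{\sum_t G^{(t)}_{i_0 j_0}} = T\,\bar g_{i_0 j_0}$. For the algorithm's term, the key observation is that $p^{(t)}$ is determined by the gains observed in rounds $1,\dots,t-1$; writing $\ex[t-1]{\cdot}$ for the expectation conditional on that history and using that the round-$t$ draw is independent of the round-$t$ gain realization, I obtain $\ex[t-1]{\langle p^{(t)}, G^{(t)}\rangle} = \sum_{ij} p^{(t)}_{ij}\,\ex[t-1]{G^{(t)}_{ij}} = \ex[t-1]{G^{(t)}_{i^{(t)}j^{(t)}}}$. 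Taking the outer expectation and summing gives $\ex{\sum_t \langle p^{(t)}, G^{(t)}\rangle} = T\,\overline{G}^{(T)}(\bmP_\selmix)$. Taking expectations in the pathwise bound and dividing by $T$ then produces $\bar g_{i_0 j_0} - \overline{G}^{(T)}(\bmP_\selmix) \le 2\sqrt{\log K}/\sqrt{T}$, which together with the reduction above is the claim.

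The routine ingredients (Hoeffding's lemma and the one-line optimization over $\invtemp$) are standard, so the part deserving care is the probabilistic bookkeeping of the last step. Unlike Theorem~\ref{thm:regbound}, whose be-the-leader update summing to $t$ permits a clean telescoping of the KL terms and drops the quadratic penalty (giving an $O(\log K)$ cumulative bound), the present update sums only to $t-1$, so each increment genuinely costs $\invtemp^2/2$ and we must pay $\Theta(\sqrt{T})$ regret. The subtle point is that the deterministic pathwise bound may only be compared against a comparator independent of the realized randomness---hence the choice $(i_0,j_0)=\argmax_{ij}\bar g_{ij}$ rather than an in-hindsight maximizer of the realized gains---and that the conditioning argument correctly identifies $\ex{\langle p^{(t)}, G^{(t)}\rangle}$ with the gain $\overline{G}^{(T)}(\bmP_\selmix)$ actually realized by sampling the mixup pair from $p^{(t)}$.
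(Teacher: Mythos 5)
Your proof is correct, and it follows the same overall skeleton as the paper's: reduce the comparator to the deterministic maximizer of the expected gains (the paper's $(i^*,j^*)=\argmax_{(i,j)}\overline{S}_{i,j}$ is exactly your $(i_0,j_0)$), run an exponential-weights potential argument, and tune $\invtemp \asymp \sqrt{\log K/T}$. The genuine difference is the per-step increment bound. The paper proves a Freund--Schapire-style lemma using the chord inequality $e^{\invtemp x}\le 1+(e^{\invtemp}-1)x$, which is valid only for $x\in[0,1]$, and then extracts the $2\sqrt{T\log K}$ bound via the substitution $\invtemp=\log(1+\alpha)$ and the elementary inequality $(\alpha-\log(1+\alpha))/\alpha\le \alpha/2$, together with $\overline{S}_{i^*,j^*}\le T$. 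You instead bound $\log(W_{t+1}/W_t)$ by Hoeffding's lemma, paying $\invtemp^2/2$ per round, and optimize $\invtemp$ directly. This buys you two things: the analysis is a one-pass computation rather than a lemma-plus-substitution, and--more substantively--Hoeffding's lemma is valid on the symmetric range $[-1,1]$, so your argument works under the theorem's stated hypothesis $|G^{(t)}_{ij}|\le 1$, whereas the paper's Lemma \ref{lem:hedge-lem} as written assumes $G^{(t)}_{ij}\in[0,1]$ (the chord inequality fails for negative $x$), leaving a small mismatch between hypothesis and proof that your route closes; both routes land on the identical constant $2\sqrt{T\log K}$. Your final bookkeeping step, pulling the $\mathcal{F}_{t-1}$-measurable $p^{(t)}$ out of the conditional expectation and using conditional independence of the round-$t$ draw from the round-$t$ gain realization to identify $\ex{\langle p^{(t)},G^{(t)}\rangle}$ with $\ex{G^{(t)}_{i^{(t)}j^{(t)}}}$, is the same step the paper performs implicitly when it "takes the expectation with respect to the randomness of $G^{(t)}_{ij}$," and you are right that it deserves the explicit justification you give.
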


First we introduce the following lemma, which is due to \cite{freund1997decision}.
Although, one can prove the following result by a standard argument, 
we provide a proof for the sake of completeness.
\begin{lemma}[c.f. \cite{freund1997decision}]
    \label{lem:hedge-lem}
    We assume that $G_{i, j}^{(t)} \in [0, 1]$ for all $t$ and $1\le i, j \le K$.
    For $(i, j) \in [K] \times [K]$, we define
    $\overline{S}_{i, j} = \sum_{t=1}^T \ex{G_{i, j}^{(t)}}$.
    For a policy $\bmP = (\mP_t)_{t=1}^{T}$, we define 
    $\overline{S}_{\bmP}:= \sum_{t=1}^T \ex{G_{i_t, j_t}^{(t)}}$.
    Then, we have the following inequality:
    \begin{equation*}
        - 2 \log K + \invtemp \max_{(i, j)\in [K] \times [K]} \overline{S}_{i, j} 
        \le (\exp(\invtemp) - 1) \overline{S}_{\bmP_{\selmix}}.
    \end{equation*}
\end{lemma}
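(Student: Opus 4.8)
The plan is to recognize Lemma~\ref{lem:hedge-lem} as the standard potential-function (multiplicative-weights) analysis of Hedge, adapted to \emph{random} gains by carrying out the argument pathwise first and taking expectations only at the very end. Concretely, I would set $w_{ij}^{(t)} = \exp(\invtemp \sum_{\tau=1}^{t-1} G_{ij}^{(\tau)})$ and $W^{(t)} = \sum_{i,j} w_{ij}^{(t)}$, so that $w_{ij}^{(1)} = 1$, $W^{(1)} = K^2$, and the SelMix distribution of the variant is exactly the normalized weight $p_{ij}^{(t)} = w_{ij}^{(t)}/W^{(t)}$. The update is then multiplicative, $w_{ij}^{(t+1)} = w_{ij}^{(t)}\exp(\invtemp G_{ij}^{(t)})$. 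I would fix an arbitrary realization of the random gains and treat $\invtemp$, the $G_{ij}^{(\tau)}$, and hence the $p_{ij}^{(t)}$ as deterministic throughout the telescoping, deferring all use of $\ex{\cdot}$ to the last step.

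For the per-round step, write $W^{(t+1)}/W^{(t)} = \sum_{i,j} p_{ij}^{(t)} \exp(\invtemp G_{ij}^{(t)})$. Since $G_{ij}^{(t)} \in [0,1]$ (the hypothesis of the lemma) and $x \mapsto \exp(\invtemp x)$ is convex, the chord bound $\exp(\invtemp x) \le 1 + (\exp(\invtemp)-1)x$ holds on $[0,1]$; substituting and using $\sum_{i,j} p_{ij}^{(t)} = 1$ gives $W^{(t+1)}/W^{(t)} \le 1 + (\exp(\invtemp)-1)\, g^{(t)}$, where $g^{(t)} := \sum_{i,j} p_{ij}^{(t)} G_{ij}^{(t)}$ is the policy-averaged realized gain. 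Applying $1+y \le e^y$ and multiplying over $t=1,\dots,T$ yields $W^{(T+1)}/W^{(1)} \le \exp\big((\exp(\invtemp)-1)\sum_{t=1}^T g^{(t)}\big)$. On the other hand $W^{(T+1)} \ge \max_{i,j} w_{ij}^{(T+1)} = \exp\big(\invtemp \max_{i,j}\sum_{\tau=1}^T G_{ij}^{(\tau)}\big)$. Taking logarithms and using $\log W^{(1)} = 2\log K$ produces the pathwise inequality $-2\log K + \invtemp \max_{i,j}\sum_{\tau=1}^T G_{ij}^{(\tau)} \le (\exp(\invtemp)-1)\sum_{t=1}^T g^{(t)}$.

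It then remains to take expectations of this pathwise bound. On the left the $\max$ is convex, so Jensen's inequality gives $\ex{\max_{i,j}\sum_{\tau=1}^T G_{ij}^{(\tau)}} \ge \max_{i,j}\ex{\sum_{\tau=1}^T G_{ij}^{(\tau)}} = \max_{i,j}\overline{S}_{ij}$, which is the direction needed to retain the desired lower bound. On the right, conditioning on the $\sigma$-algebra generated by all the gains determines $p^{(t)}$ (it depends only on $G^{(1)},\dots,G^{(t-1)}$), so by the tower property $\ex{G_{i_t j_t}^{(t)}} = \ex{\sum_{i,j} p_{ij}^{(t)} G_{ij}^{(t)}} = \ex{g^{(t)}}$, and summing gives $\sum_{t=1}^T \ex{g^{(t)}} = \overline{S}_{\bmP_{\selmix}}$. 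Chaining (desired LHS) $\le \ex{\text{pathwise LHS}} \le \ex{\text{pathwise RHS}} = (\exp(\invtemp)-1)\overline{S}_{\bmP_{\selmix}}$ yields exactly the claimed inequality.

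The main obstacle is not the algebra, which is the textbook exponential-weights telescoping, but handling the randomness cleanly: one must run the deterministic telescoping for each sample path and then invoke Jensen in the correct direction on the $\max$ term together with the conditioning identity $\ex{G_{i_t j_t}^{(t)}} = \ex{g^{(t)}}$ that reconciles the sampled-action gain with the policy-averaged gain. Getting those two expectation steps right—and verifying that the chord inequality genuinely requires $G_{ij}^{(t)} \in [0,1]$—is where the care lies.
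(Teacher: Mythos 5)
Your proof is correct and follows essentially the same route as the paper's: the standard Hedge potential argument with weights $w_{ij}^{(t)}$, the chord bound $\exp(\invtemp x)\le 1+(\exp(\invtemp)-1)x$ on $[0,1]$, telescoping, and expectations taken only at the end. The only cosmetic differences are that the paper fixes an arbitrary pair $a$ and takes the max over $a$ after taking expectations (avoiding your Jensen step on the pathwise max), and it normalizes the initial weights to $1/K^2$ rather than $1$; both choices yield the identical bound.
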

\begin{proof}
    This lemma can be proved by a standard argument, but for the sake of completeness, we provide a proof.
    We put $\mA = [K] \times [K]$, $a_t = (i^{(t)}, j^{(t)})$ 
    and in the proof we simply denote $\bmP_{\selmix}$ by $\bmP$.
    For $a \in \mA$ and $1 \le t \le T + 1$, we define $w_{a, t}$ as follows.
    We define $w_{a, 1} = 1/K^2$ for all $a \in \mA$
    and $w_{a, t + 1} = w_{a, t} \exp(\invtemp G_{a}^{(t)})$.
    We also define $W_t = \sum_{a \in \mA}w_{a, t}$.
    Then, the distribution $\mP_t$ is given as the probability $(w_{a, t}/W_t)_{a \in \mA}$ by definition.
    Noting that $\exp(\invtemp x) \le 1 + (\exp(\invtemp) - 1)x$ for $x \in [0, 1]$, 
    we have the following inequality:
    \begin{align*}
        W_{t+1} &= \sum_{a\in \mA}w_{a, t+1}
        = \sum_{a \in \mA}w_{a, t} \exp(\invtemp G_{a}^{(t)})\\
        & \le \sum_{a \in \mA} w_{a, t} (1 + \exp(\invtemp- 1) G_{a}^{(t)}).
    \end{align*}
    Thus, we have
    \begin{align*}
        W_{t+1}  &\le 
        \sum_{a \in \mA} w_{a, t} (1 + \exp(\invtemp- 1) G_{a}^{(t)})\\
        &= W_t \left(1 + (\exp(\invtemp) - 1)\ex[\mP_t]{G_{a_t}^{(t)}}\right),
    \end{align*}
    where $\ex[\mP_t]{\cdot}$ denotes the expectation with respect to $a_t$.
    By repeatedly apply the inequality above, we obtain:
    \begin{equation*}
        W_{T+1} \le \prod_{t=1}^{T}
        \left( 1 + (\exp(\invtemp) - 1)\ex[\mP_t]{G_{a_t}^{(t)}} \right).
    \end{equation*}
    Let $a \in \mA$ be any pair. 
    By this inequality and $W_{T + 1} \ge w_{a, T + 1} = \frac{1}{K^2}\exp(\invtemp \sum_{t=1}^{T} G_{a}^{(t)})$,  
    we have the following:
    \begin{align*}
        \frac{1}{K^2}\exp(\invtemp \sum_{t=1}^{T} G_{a}^{(t)})
        \le \prod_{t=1}^{T} \left( 1 + (\exp(\invtemp)-1) \ex[\mP_t] {G_{a_t}^{(t)}}\right).
    \end{align*}
    By taking $\log$ of both sides and $\log(1 + x) \le x$, we have
    \begin{align*}
        -2\log K + \invtemp \sum_{t=1}^{T} G_{a}^{(t)} &\le \sum_{t=1}^{T}\log\left( 1 + (\exp(\invtemp)-1) \ex[\mP_t] {G_{a_t}^{(t)}}\right)\\
        &\le 
        (\exp(\invtemp) - 1) \sum_{t=1}^T\ex[\mP_t]{G_{a_t}^{(t)}}.
    \end{align*}
    By taking the expectation with respect to the randomness of $G_{i, j}^{(t)}$, we obtain the following:
    \begin{equation*}
        -2\log K + \invtemp \overline{S}_a \le 
        (\exp(\invtemp) -1) \overline{S}_{\bmP}.
    \end{equation*}
    Since $a \in [K] \times [K]$ is arbitrary, we have the assertion of the lemma.
\end{proof}

We can prove Theorem \ref{thm:regbound} by Lemma \ref{lem:hedge-lem} as follows:
\begin{proof}[Proof of Theorem \ref{thm:regbound}]
   Let $(i^*, j^*)$ be the best fixed Mixup pair hindsight, i.e.,
    $(i^*, j^*) = \argmax_{(i, j) \in [K] \times [K]} \overline{S}_{i, j}$.
    Since any non-adaptive (or stationary) policy is no better than $\delta_{(i^*, j^*)}$, to prove the theorem, 
    it is enough to prove the following:
    \begin{equation}
        \label{eq:restatement-regbound}
        \overline{S}_{i^*, j^*} \le \overline{S}_{\bmP} + 2\sqrt{ T\log K}.
    \end{equation}
    Here in this proof, we simply denote $\bmP_{\selmix}$ by $\bmP$.
    To prove \eqref{eq:restatement-regbound}, we define the pseudo regret $R_T$ by $R_T = \overline{S}_{i^*, j^*} - \overline{S}_{\bmP}$.
    Then by Lemma \ref{lem:hedge-lem}, we have
    \begin{equation*}
        R_T \le 
        \frac{(\exp(\invtemp) - 1 - \invtemp)\overline{S}_{i^*, j^*} + 2\log K} {\exp(\invtemp) - 1}.
    \end{equation*}
    We put $\invtemp = \log(1 + \alpha)$ with $\alpha > 0$.
    Then we have
    \begin{equation*}
        R_T \le \frac{(\alpha - \log(1 + \alpha)) \overline{S}_{i^*, j^*} + 2 \log K}{\alpha}.
    \end{equation*}
    We note that the following inequality holds for $\alpha > 0$:
    \begin{equation*}
        \frac{\alpha - \log(1 + \alpha)}{\alpha} \le \frac{1}{2}\alpha
    \end{equation*}
    Then it follows that:
    \begin{equation*}
        R_T \le \frac{1}{2} \alpha \overline{S}_{i^*, j^*} + \frac{2\log K}{\alpha}
        \le \frac{1}{2}\alpha T + \frac{2\log K}{\alpha}.
    \end{equation*}
    Here the second inequality follows from $\overline{S}_{i^*, j^*} \le T$.
    We take $\alpha = 2\sqrt{\frac{\log K}{T}}$. Then we have $R_T \le 2\sqrt{T\log K}$.
    Thus, we have the assertion of the theorem.
\end{proof}

\section{Unconstrained Derivatives of metric}
\label{app:unconstrained-derivative}
For any general metric $\psi(C[h])$ the derivative w.r.t the unconstrained confusion matrix $\Tilde{C}[h]$ is expressible purely in terms of the entries of the confusion matrix. This is because $C[h] = \text{softmax}(\Tilde{C}[h]) $  The derivative using the chain rule is expressed as follows,
\begin{equation}
    \label{eq:general-metric-derivative}
     \frac{\partial \psi(C[h])}{\partial \Tilde{C}_{ij}[h]} = \sum_{k,l} \frac{\partial \psi(C[h])}{\partial C_{kl}[h]} \cdot
     \frac{\partial C_{kl}[h]}{\partial \Tilde{C}_{ij}[h]}
\end{equation}

We observe that in Eq. \ref{eq:general-metric-derivative} the partial derivative $\frac{\partial \psi(C[h])}{\partial C_{kl}[h]}$ is purely a function of entries of $C[h]$ since $\psi(C[h])$ itself is a function of entries of $C[h]$. The second term is the partial derivative of our confusion matrix w.r.t the unconstrained confusion matrix. Since $C$ and $\Tilde{C}$ are related by the following relation $C_{ij}[h] = \text{softmax}(\Tilde{C}_i[h])_j$. By virtue of the aforementioned map $\frac{\partial C_{kl}[h]}{\partial \Tilde{C}_{ij}[h]}$  also happens to be expressible in terms of $C[h]$:
\begin{equation}
    \label{eq:map-derivative}
     \frac{\partial C_{kl}[h]}{\partial \Tilde{C}_{ij}[h]} = 
    \begin{cases}
    \qquad  0, \qquad \quad  \quad  \qquad k \neq i \\
    
   -\frac{C_{il}[h] \cdot C_{ij}[h]}{ \pi^{\text{val}}_{i}}, \qquad \quad i= k, l \neq j \\
   
    C_{ij}[h] - \frac{C_{ij}^2[h]}{ (\pi^{\text{val}}_{i})^2}, \quad \quad \ i =  k, l = j \\
    \end{cases}
\end{equation}

Let us consider the metric mean recall $\psi^{\text{AM}}(C[h]) = \frac{1}{K}\sum_i \frac{C_{ii}[h]}{\sum_j C_{ij}[h]}$. The derivative of $\psi^{\text{AM}}(C[h])$ w.r.t the unconstrained confusion matrix $\Tilde{C}$ can be expressed in terms of the entries of the confusion matrix. This is a useful property of this partial derivative since we need not infer the inverse map from $C \rightarrow \Tilde{C}$ inorder to evaluate the partial derivative in terms of $\Tilde{C}$. It can be expressed follows:
\begin{equation}
    \label{eq:mean-recall-derivative}
     \frac{\partial \psi^{\text{AM}}(C[h])}{\partial \Tilde{C}_{ij}[h]} = 
    \begin{cases}
   -\frac{C_{ij}[h] \cdot C_{ii}[h]}{K (\pi^{\text{val}}_{i})^2}, \qquad \qquad m\neq n \\
    \frac{C_{ii}[h]}{K \cdot \pi^{\text{val}}_{i}}-\frac{C_{ii}^2[h]}{K \cdot (\pi^{\text{val}}_{i})^2}, \ \ \qquad m=n \\
    \end{cases}
\end{equation}
Hence we can conclude that for a metric defined as a function of the entries of the confusion matrix, the derivative w.r.t the unconstrained confusion matrix ($\Tilde{C}$) is easily expressible using the entries of the confusion matrix ($C$). 

\section{Comparison of SelMix with Other Frameworks}
\label{app:comparison}
In our work, we optimize the non-decomposable objective function by using Mixup~\cite{zhang2018mixup}. In recent works, Mixup training has been shown to be effective specifically for real-world scenarios where the long-tailed imbalance is present in the dataset~\cite{zhong2021mislas, fan2021cossl}. Further, mixup has been demonstrated to have a data-dependent regularization effect~\cite{zhang2021does}. Hence, this provides us the motivation to consider optimization of the non-decomposable objectives which are important for long-tailed imbalanced datasets, in terms of directions induced by Mixups.However, this mixup-induced data-dependent regularization is not present for works~\cite{narasimhan2021training, rangwani2022costsensitive}, which use consistent loss functions without mixup. Hence, this explains the superior generalization demonstrated by SelMix (mixup based) on non-decomposable objective optimization for long-tailed datasets.

\section{Updating the Lagrange multipliers}
\subsection{Min. Recall and Min of Head and Tail Recall}
Consider the objective of optimizing the worst-case(Min.) recall, $\psi^{\text{MR}}(C[h])$ = $ \min_{\bm{\lambda} \in\Delta_{K-1}} \sum_{i \in [K]}\lambda_i \text{Rec}_i[h]$ = $ \min_{\bm{\lambda} \in\Delta_{K-1}} \sum_{i \in [K]}\lambda_i \frac{C_{ii}[h]}{\sum_{j\in[K]}{C_{ij}[h]}}$, as in Table \ref{tab:metric_full}. the Lagrange multipliers are sampled from a $K-1$ dimensional simplex and $\lambda_i = 1$ if recall of $i^{\text{th}}$ class is the lowest and the remaining lagrange multiplers are zero. Hence, a good approximation of the lagrange multipliers at a given time step $t$ can be expressed as:

\begin{equation}
    \label{eq:MinRecall-lagrange}
    \lambda_i^{(t)} = \frac{e^{-\omega \text{Rec}_i^{(t)}[h]}}{\sum_{j \in [K]} e^{-\omega \text{Rec}_j^{(t)}[h]}}
\end{equation}

This has some nice properties such as the Lagrange multipliers being a soft and momentum-free approximation of their hard counterpart. This enables SelMix to compute a sampling distribution $\mathcal{P}_{\text{SelMix}}^{(t)}$ that neither over-corrects nor undercorrects based on the feedback from the validation set. For sufficiently high $\omega$ this approximates the objective to the min recall.

\subsection{Mean recall under Coverage constraints}
For the objective where we wish to optimize for the mean recall, subject to the constraint that all the classes have a coverage above $\frac{\alpha}{K}$, where if $\alpha =1$ 
is the ideal case for a balanced validation set. We shall relax this constraint and set $\alpha=0.95$, $\psi^{\text{AM}}_{\text{cons.}}(C[h]) \text{ = } \min_{\bm{\lambda} \in \mathbb{R}^K_+} \frac{1}{K} \sum_{i \in [K]} \text{Rec}_i[h] + \sum_{j \in [K]}\lambda_j \left(\text{Cov}_j[h] - \frac{\alpha}{K} \right) \text{ = } \min_{\bm{\lambda} \in \mathbb{R}^K_+}\frac{1}{K} \sum_{i \in [K]} \frac{C_{ii}[h]}{\sum_{j\in[K]}{C_{ij}[h]}}$ $+ \sum_{j \in [K]}\lambda_j \left(\sum_{i \in [K]} C_{ij}[h] - \frac{\alpha}{K} \right)$.
For practical purposes, we look at a related constrained optimization problem, 

$$\psi^{\text{AM}}_{\text{cons.}}(C[h]) =  \min_{\bm{\lambda} \in \mathbb{R}^K_+} \frac{1}{\lambda_{max}+1}\left(\frac{1}{K}\sum_{i \in [K]} \text{Rec}_i[h] + \sum_{j \in [K]}\lambda_j \left(\text{Cov}_j[h] - \frac{\alpha}{K} \right) \right)$$
Such that if $\left(\text{Cov}_i[h] - \frac{\alpha}{K} \right) < 0$, then $\lambda_i$ increases, and vice-versa for the converse case. Also, if $\exists i$ s.t. $\left(\text{Cov}_i[h] - \frac{\alpha}{K} \right) < 0$, then this implies that $\frac{1}{\lambda_{\text{max}}+1} \rightarrow 0^{+} \text{ and } \frac{\lambda_{\text{max}}}{\lambda_{\text{max}+1}} \rightarrow 1^-$, which forces $h$ to satisfy the constraint $\left(\text{Cov}_i[h] - \frac{\alpha}{K} \right) > 0$. 
Based on this, a momentum free formulation for updating the lagrangian multipliers is as follows:
$$\lambda_i = \text{max} \left( 0, \Lambda_{\text{max}}\left(1-e^{ \frac{\text{Cov}_i[h] - \frac{\alpha}{K}}{\tau}}\right)\right)$$

Here, $\lambda_{max}$ is the maximum value that the Lagrange multiplier can take. A large value of  $\lambda_{max}$ forces the model to focus more on the coverage constraints that to be biased towards mean recall optimization. $\tau$ is a hyperparameter that is usually kept small, say $0.01$ or so, which acts as sort of a tolerance factor to keep the constraint violation in check.

\section{Experimental details}

The baselines (Table~\ref{tab:matched-results}) are evaluated with the SotA base pre-training method of FixMatch + LA using DASO codebase~\cite{oh2022daso}, whereas CSST~\cite{rangwani2022costsensitive} is done through their official codebase .
\subsection{Hyperparameter Table}
The detailed values of all hyperparameters specific to each dataset has been mentioned in Table \ref{tab:hyperparams}.
\begin{table}[H]
\centering
\caption{Table of Hyperparameters for Semi-Supervised datasets.}
\begin{adjustbox}{max width=\textwidth}
\begin{tabular}{cccccc} 
\toprule
    Parameter & \begin{tabular}{c} CIFAR-10 \\ (All distributions) \end{tabular}    & \begin{tabular}{c} CIFAR-100 \\ ($\rho_l=10, \rho_u=10$) \end{tabular} & STL-10 &  Imagenet-100($\rho_l=\rho_u=10$)\\
     \hline
    Gain scaling ($s$) & 10.0 & 10.0 & 2.0 & 10.0\\
    $\omega_{\text{Min. Rec}}$ & 40 & 20 & 20 & 20\\
    $\lambda_{\max}$ & 100 & 100 & 100 & 100\\
    $\tau$ & 0.01 & 0.01 & 0.01 & 0.01\\
    $\alpha$ & 0.95 & 0.95 & 0.95 & 0.95  \\
    Batch Size & 64 & 64 & 64 & 64 \\
    Learning Rate($f$) & 3e-4 & 3e-4 & 3e-4 & 0.1\\
    Learning Rate($g$) & 3e-5 & 3e-5 & 3e-5 & 0.01 \\
    Optimizer & SGD & SGD & SGD & SGD \\
    Scheduler & Cosine & Cosine & Cosine & Cosine \\
    Total SGD Steps & 10k & 10k & 10k & 10k  \\
    Resolution & 32 X 32 & 32 X 32 & 32 X 32 & 224 X 224 \\
    Arch. & WRN-28-2 & WRN-28-2 & WRN-28-2 & WRN-28-2 \\ 
    \hline
\end{tabular}
\end{adjustbox}

\label{tab:hyperparams}
\end{table}

\begin{table}[H]
\centering
\caption{Table of Hyperparameters for Supervised Datasets.}
\begin{adjustbox}{max width=\textwidth}
\begin{tabular}{cccc} 
\toprule
    Parameter &  \begin{tabular}{c} CIFAR-10 \\ ($\rho=100$) \end{tabular} & \begin{tabular}{c} CIFAR-100 \\ ($\rho=100$) \end{tabular} & Imagenet-1k LT\\
     \hline
    Gain scaling ($s$) & 10.0 & 10.0 & 10.0\\
    $\omega_{\text{Min. Rec}}$ & 50 & 25 & 100 \\
    $\lambda_{\max}$ & 100 & 100  & 100\\
    $\tau$ & 0.01 & 0.01 & 0.001 \\
    $\alpha$ & 0.95 & 0.95 & 0.95 \\
    Batch Size & 128 & 128 & 256 \\
    Learning Rate($f$) & 3e-3 & 3e-3 & 0.1\\
    Learning Rate($g$) & 3e-4 & 3e-4 & 0.01 \\
    Optimizer & SGD & SGD & SGD \\
    Scheduler & Cosine & Cosine & Cosine \\
    Total SGD Steps & 2k & 2k & 2.5k \\
    Resolution & 32 X 32 & 32 X 32 & 224 X 224 \\
    Arch. & ResNet-32 & ResNet-32 & ResNet-50 \\
    \hline
\end{tabular}
\end{adjustbox}

\label{tab:hyperparams-sup}
\end{table}

\subsection{Experimental Details for Supervised Learning}
 We show our results on 3 datasets: CIFAR-10 LT ($\rho=100$), CIFAR-100 LT ($\rho=100$)~\cite{krizhevsky2009learning} and Imagenet-1k~\cite{russakovsky2015imagenet} LT. For our pre-trained model, we use the model trained by stage-1 of MiSLAS\cite{zhong2021mislas}, which uses a mixup-based pre-training as it improves calibration. For CIFAR-10,100 LT ($\rho=100$) we use ResNet-32 while for Imagenet-1k LT, we use ResNet-50. The detailed list of hyperparameters have been provided in Tab. \ref{tab:hyperparams-sup}. Unlike semi-supervised fine-tuning, we do not require to refresh the pseudo-labels for the unlabelled samples since we already have the true labels. The backbone is trained at a learning rate $\frac{1}{10}^{th}$ of the linear classifier learning rate. The batch norm is frozen across all the layers. The detailed algorithm can be found in Alg. \ref{alg:ours-sup} and is very similar to its semi-supervised variant. We report the performance obtained at the end of fine-tuning.

\begin{algorithm}[H]
    \caption{Training through SelMix .}
    \label{alg:ours-sup}
    \begin{algorithmic}
        \STATE {\bfseries Input:} Data $(D, D^{\mathrm{val}})$, iterations $T$, classifier $h^{(0)}$, metric function $\psi$
        \FOR{$t=1$ {\bfseries to} $T$}
        \STATE $h^{(t)}$ = $h^{(t-1)}$, $C^{(t)} = \mathbb{E}_{(x,y) \sim D^{\mathrm{val}}}[C[h^{(t)}]] \quad$
        \STATE $V_{ij}^{(t)} = - \eta \frac{\partial \mathcal{L}^{\text{mix}}_{ij}}{\partial W} \quad \forall \ i,j$  \qquad \eqref{eq:exact-gains} 
        \STATE $G^{(t)}_{ij} =  \sum_{k,l}\frac{\partial \psi(C^{(t)})}{\partial \tilde{C}_{kl}}(V_{ij}^{(t)})^\top_l \cdot z_k \quad \forall \ i,j $
        \STATE $\mP^{(t)}_{\text{SelMix}} = \text{softmax}(s\textbf{G}^{(t)})$
        \begin{scriptsize}\emph{// Compute Sampling distribution and update pseudo-label }\end{scriptsize}
        \FOR{$n$ SGD steps}
        \STATE $Y_1, Y_2 \sim \mP^{(t)}_{\text{SelMix}}$
        \STATE  $X_1 \sim \mathcal{U}(D_{Y_1}) \ , \ X_2 \sim \mathcal{U}(D_{Y_2})$ \  \begin{scriptsize}\emph{ // sample batches of data}\end{scriptsize}
        \STATE $h^{(t)} := \text{SGD-Update}(h^{(t)},\mathcal{L}_{\text{mixup}},(X_1, Y_1, X_2))$
        \ENDFOR
        \ENDFOR
        \STATE {\bfseries Output:} $h^{(T)}$
    \end{algorithmic}
\end{algorithm}

\begin{table*}[ht]
\caption{The Expression of Non-Decomposable Objectives we consider in our paper.}
\label{tab:metric_full}
\begin{adjustbox}{width=\columnwidth,center}
\begin{tabular}{cc}\toprule
    \textbf{Metric} & \textbf{Definition}
    \\\midrule \\
    Mean Recall $(\psi^{AM})$ &  $\frac{1}{K}\sum_{i \in [K]}\frac{C_{ii}[h]}{\sum_{j\in[K]}{C_{ij}[h]}}$ \Tstrut \Bstrut \\ \\ 
      
      G-mean $(\psi^{GM})$ &   $\left ( \prod_{i \in [K]} \frac{C_{ii}[h]}{\sum_{j\in [K]} C_{ij}[h]} \right )^{\frac{1}{k}}$ \Tstrut \Bstrut \\ \\

       H-mean $(\psi^{HM})$&  $K\left( \sum_{i \in [K]} \frac{\sum_{j\in [K]} C_{ij}[h]}{C_{ii}[h]}\right)^{-1}$ \Tstrut \Bstrut \\ \\

      Min. Recall $(\psi^{MR})$& $  \min_{\bm{\lambda} \in\Delta_{K-1}} \sum_{i \in [K]}\lambda_i \frac{C_{ii}[h]}{\sum_{j\in[K]}{C_{ij}[h]}}  $  \Tstrut \Bstrut \\ \\

\multirow{2}{*}{Min of Head and Tail class recall $(\psi^{MR}_{\text{HT}})$} &  $\min_{(\lambda_{\mathcal{H}}, \lambda_{\mathcal{T}}) \in \Delta_{1}} \frac{\lambda_{\mathcal{H}}}{|\mathcal{H}|} \sum_{i \in \mathcal{H}}\frac{C_{ii}[h]}{\sum_{j\in[K]}{C_{ij}[h]}} $  \\ &  $+ \frac{\lambda_{\mathcal{T}}}{|\mathcal{T}|} \sum_{i \in \mathcal{T}}\frac{C_{ii}[h]}{\sum_{j\in[K]}{C_{ij}[h]}}$ \Tstrut \Bstrut \\ \\

     Mean Recall s.t.  per class coverage $\geq \frac{\alpha}{K} $ $(\psi^{AM}_{\text{cons.}})$& 
    $\min_{\bm{\lambda} \in \mathbb{R}^K_+} \frac{1}{K}\sum_{i \in [K]}\frac{C_{ii}[h]}{\sum_{j\in[K]}{C_{ij}[h]}}+ \sum_{j \in [K]}\lambda_j \left(\sum_{i \in [K]} C_{ij}[h] - \frac{\alpha}{K} \right)$ \Tstrut \Bstrut \\ \\

   \multirow{2}{*}{Mean Recall s.t.  minimum of head  }& 
$\min_{(\lambda_{\mathcal{H}}, \lambda_{\mathcal{T}}) \in \RR^2_{\ge 0}} 
  \frac{1}{K}\sum_{i \in [K]}\frac{C_{ii}[h]}{\sum_{j\in[K]}{C_{ij}[h]}} + \lambda_{\mathcal{H}}\left(
      \sum_{i \in [K], j \in \mathcal{H}} \frac{C_{ij}[h]}{|\mathcal{H}|}  - \frac{0.95}{K}
   \right)  $ \\ and tail class coverage $\geq \frac{\alpha}{K}$ $(\psi^{AM}_{\text{cons.(HT)}})$& $ + \lambda_{\mathcal{T}}  \left(
      \sum_{i \in [K], j \in \mathcal{T}} \frac{C_{ij}[h]}{|\mathcal{T}|} - \frac{0.95}{K}
   \right) $ \Tstrut \Bstrut \\ \\
   
    H-mean s.t. per class coverage $\geq \frac{\alpha}{K} $ $(\psi^{HM}_{\text{cons.}})$& 
    $\min_{\bm{\lambda} \in \mathbb{R}^K_+} K\left( \sum_{i \in [K]} \frac{\sum_{j\in [K]} C_{ij}[h]}{C_{ii}[h]}\right)^{-1}+\sum_{j \in [K]}\lambda_j \left(\sum_{i \in [K]} C_{ij}[h] - \frac{\alpha}{K} \right)$ \Tstrut \Bstrut \\ \\

   \multirow{2}{*}{H-mean s.t. minimum of head} & $\min_{(\lambda_{\mathcal{H}}, \lambda_{\mathcal{T}}) \in \RR^2_{\ge 0}} K\left( \sum_{i \in [K]} \frac{\sum_{j\in [K]} C_{ij}[h]}{C_{ii}[h]}\right)^{-1} + \lambda_{\mathcal{H}}\left(
      \sum_{i \in [K], j \in \mathcal{H}} \frac{C_{ij}[h]}{|\mathcal{H}|}  - \frac{0.95}{K}
   \right)$ \\ and tail class coverage $\geq \frac{\alpha}{K}$ $(\psi^{HM}_{\text{cons.(HT)}})$& $ + \lambda_{\mathcal{T}}  \left(
      \sum_{i \in [K], j \in \mathcal{T}} \frac{C_{ij}[h]}{|\mathcal{T}|} - \frac{0.95}{K}
   \right)$ \\\bottomrule
\end{tabular}

\end{adjustbox}

\end{table*}

\section{Results for Case With Unknown Labeled Distribution}
In this section, we provide a full-scale comparison of all the methods of the case when the labeled distribution does not match the unlabeled data distribution, simulating the scenario when label distribution is unknown. The main paper presents the summary plots for these results in Fig.~\ref{fig:summary_mismatched_radar}. The Table~\ref{tab:cf10-mismatched-results}, we present results for the case when for CIFAR-10 once the unlabeled data follow an inverse label distribution i.e. ($\rho_l = 100, \rho_{u} = \frac{1}{100}$) and the case when the unlabeled data is distributed uniformly across all classes ($\rho_l = 100, \rho_u = 1$). In both cases, we find that SelMix can produce significant improvement across metrics. Further, we also compare our method in the practical setup where unlabeled data distribution is unknown. This situation is perfectly emulated by the STL-10 dataset, which also contains an unlabeled set of 100k images. Table~\ref{tab:stl10-mismatched} presents results for different approaches on the STL-10 case. We observe that SelMix produces superior results compared to baselines and is robust to the mismatch in distribution between labeled and unlabeled data.

\label{app:mismatched_comparison}
\begin{table*}[ht!]

\caption{Comparison on metric objectives for CIFAR-10 LT under $\rho_l \neq \rho_u$ assumption. Our experiments involve $\rho_u = 100, \rho_l = 1
\; (\text{uniform}) \; \text{and} \; \rho_u = 100, \rho_l = \frac{1}{100} (\text{inverted})$. SelMix achieves significant gains over other SSL-LT methods across all the metrics.}
\label{tab:cf10-mismatched-results}
\begin{adjustbox}{width=\columnwidth,center}
\begin{tabular}{lccccc|ccccc}\toprule

& \multicolumn{5}{c}{\textbf{CIFAR-10} \quad $(\rho_l=100, \rho_u=\frac{1}{100}, N_1=1500, M_1=30)$} & \multicolumn{5}{c}{\textbf{CIFAR-10} \quad $(\rho_l=100, \rho_u=1, N_1=1500, M_1=3000)$}
\\ \cmidrule(lr){2-6}\cmidrule(lr){7-11}
           & Mean Rec.  & Min Rec. & HM & GM  & Mean Rec./Min Cov. & Mean Rec.  & Min Rec. & HM & GM  & Mean Rec./Min Cov.\\\midrule
FixMatch    & \s{71.3}{1.1} & \s{28.5}{2.6} & \s{61.3}{2.7} & \s{67.1}{1.7} & \s{71.3}{1.1} / \s{0.030}{2e-3}  & \s{82.8}{1.3} & \s{59.1}{5.8} & \s{80.6}{2.1} & \s{82.3}{1.5} & \s{82.8}{1.3} / \s{0.059}{6e-3} \\
 \addedtext{DARP} & \s{79.7}{0.8} & \s{60.7}{2.4} & \s{78.1}{0.9} & \s{78.9}{0.9} & \s{79.7}{0.8} / \s{0.065}{2e-3}  & \s{84.8}{0.7} & \s{66.9}{3.1} & \s{83.5}{0.8} & \s{85.2}{0.7} & \s{84.8}{0.7} / \s{0.067}{3e-3} \\
 \addedtext{CReST} & \s{71.3}{0.9} & \s{40.3}{3.0} & \s{65.8}{1.5} & \s{68.6}{1.2} & \s{71.3}{0.9} / \s{0.040}{5e-3}  & \s{85.7}{0.3} & \s{68.7}{1.7} & \s{84.6}{0.14} & \s{85.1}{0.1} & \s{85.7}{0.3} / \s{0.075}{7e-4} \\
\addedtext{CReST+}   & \s{72.8}{0.8} & \s{45.2}{2.5} & \s{68.4}{1.3} & \s{70.6}{1.1} & \s{72.8}{0.8} / \s{0.047}{3e-3}  & \s{86.4}{0.2} & \s{71.7}{1.9} & \s{85.6}{0.2} & \s{86.1}{0.1} & \s{86.4}{0.2} / \s{0.078}{1e-3} \\
\addedtext{DASO} & \s{79.2}{0.2} & \addedtext{\s{64.6}{1.9}} & \s{78.1}{0.1} & \s{78.6}{0.8} & \s{79.2}{0.2} / \s{0.072}{3e-3}  & \s{88.6}{0.4} & \s{78.2}{1.6} & \s{88.4}{0.5} & \s{88.5}{0.4} & \s{88.6}{0.4} / \s{0.089}{1e-3} \\
\addedtext{ABC} & \s{80.8}{0.4} & \s{65.1}{0.8} & \s{79.6}{0.3} & \s{80.7}{0.6} & \s{80.8}{0.4} / \s{0.073}{5e-3}  & \s{88.6}{0.4} & \s{74.8}{2.9} & \s{88.2}{0.7} & \s{88.6}{0.3} & \s{88.6}{0.4} / \s{0.086}{4e-3} \\
\addedtext{CoSSL} & \s{78.6}{1.0} & \s{66.3}{2.9} & \s{77.2}{1.2} & \s{77.8}{1.1} & \s{78.6}{1.0} / \s{0.070}{1e-3}  & \s{88.7}{0.9} & \s{76.1}{2.9} & \s{88.2}{1.1} & \s{88.5}{1.0} & \s{88.7}{0.9} / \s{0.084}{8e-3} \\
\addedtext{CSST} & \s{77.5}{1.5} & \s{72.1}{0.2} & \s{76.5}{4.9} & \s{76.8}{5.2}  & \s{77.5}{1.5} / \s{0.091}{3e-3} & \s{87.6}{0.7} & \s{78.1}{0.3} & \s{86.1}{0.7} & \s{87.1}{0.2} & \s{87.6}{0.7} / \s{0.091}{1e-3}   \\
FixMatch(LA) & \s{75.5}{1.5} & \s{45.1}{4.4} & \s{71.1}{2.5} & \s{73.3}{1.9} & \s{75.5}{1.5} / \s{0.046}{4e-3} & \s{90.1}{0.4} & \s{75.8}{2.1} & \s{89.5}{0.7} & \s{89.7}{0.5} & \s{90.1}{0.4} / \s{0.083}{1e-3} \\
\qquad \textbf{w/SelMix (Ours)} & \s{\textbf{81.3}}{0.5} & \s{\textbf{74.3}}{1.2} & \s{\textbf{81.0}}{0.8} & \s{\textbf{80.9}}{0.5}  & \s{\textbf{81.7}}{0.8} / \s{\textbf{0.091}}{3e-3} & \s{\textbf{91.4}}{1.2} & \s{\textbf{84.7}}{0.7} & \s{\textbf{91.3}}{1.1} & \s{\textbf{91.3}}{1.2}  & \s{\textbf{91.4}}{1.2} / \s{\textbf{0.096}}{1e-3}\\\bottomrule
\end{tabular}
\end{adjustbox}
\end{table*}

\begin{table}[ht!]
\caption{Comparison across methods when label distribution $\rho_u$ is
unknown. We use the STL-10 dataset for comparison in such a case.}
\label{tab:stl10-mismatched}
\begin{adjustbox}{width=\columnwidth,center}
\begin{tabular}{lcccccc}\toprule
& \multicolumn{6}{c}{\textbf{STL-10} \quad $(\rho_l=10, \rho_u=\text{NA}, N_1=450, \sum_i M_i = 100\text{k})$}
\\\cmidrule(lr){2-7}
           & Mean Rec.  & Min Rec. & HM & GM & HM/Min Cov. & Mean Rec./Min Cov.\\\midrule
FixMatch    & \s{72.7}{0.7} & \s{43.2}{7.1} & \s{67.7}{1.5} & \s{71.6}{1.3} & \s{67.7}{1.5} / \s{0.048}{1e-2} & \s{72.7}{0.7} / \s{0.048}{1e-2}\\
 \addedtext{DARP} & \s{76.5}{0.3} & \s{54.7}{1.9} & \s{74.0}{0.5} & \s{75.3}{0.4} & \s{74.0}{0.5} / \s{0.058}{2e-3} & \s{76.5}{0.3} / \s{0.058}{2e-3} \\
 \addedtext{CReST} & \s{70.1}{0.3} & \s{48.2}{2.2} & \s{67.1}{1.1} & \s{67.8}{1.1} & \s{67.1}{1.1} / \s{0.066}{2e-3} & \s{70.1}{0.3} / \s{0.066}{2e-3}\\
\addedtext{DASO} & \s{78.1}{0.5} & \s{55.8}{3.7} & \s{76.6}{1.1} & \s{77.2}{0.2} & \s{76.6}{1.1} / \s{0.083}{3e-3} & \s{78.1}{0.5} / \s{0.083}{3e-3}\\
\addedtext{ABC} & \s{77.5}{0.4} & \s{55.4}{6.7} & \s{74.7}{1.5} & \s{76.3}{0.9} & \s{74.7}{1.5} / \s{0.079}{7e-3} & \s{77.5}{0.4} / \s{0.079}{7e-3}\\
\addedtext{CSST} & \s{79.2}{1.5} & \s{50.8}{2.9} & \s{78.3}{2.6} & \s{78.9}{2.1} & \s{78.3}{2.6} / \s{0.081}{6e-3} & \s{79.2}{1.5} / \s{0.081}{6e-3}\\
FixMatch(LA) & \s{78.9}{0.4} & \s{56.4}{1.9} & \s{76.5}{1.1} & \s{77.8}{0.8} & \s{76.5}{1.1} / \s{0.066}{5e-3} & \s{78.9}{0.4} /  \s{0.066}{5e-3} \\
\qquad \textbf{w/SelMix (Ours)} & \s{\textbf{80.9}}{0.5} & \s{\textbf{68.5}}{1.8} & \s{\textbf{79.1}}{1.2} & \s{\textbf{80.1}}{0.4} & \s{\textbf{79.1}}{1.2} /  \s{\textbf{0.088}}{1e-3} & \s{\textbf{80.9}}{0.1} / \s{\textbf{0.088}}{1e-3}\\\bottomrule
\end{tabular}
\end{adjustbox}
\end{table}

\section{Optimization of H-mean with Coverage Constraints}
\label{app:hmean-coverage}
We consider the objective of optimizing H-mean subject to the constraint that all classes must have a coverage $\geq \frac{\alpha}{K}$. For CIFAR-10, when the unlabeled data distribution matches the labeled data distribution, uniform or inverted, SelMix is able to satisfy the coverage constraints. A similar observation could be made for CIFAR-100, where the constraint is to have the minimum head and tail class coverage above $\frac{0.95}{K}$. For STL-10, SelMix fails to satisfy the constraint because the validation dataset is minimal (500 samples compared to 5000 in CIFAR). We want to convey here that as CSST is only able to optimize for linear metrics like min. recall its performance is inferior on complex objectives like optimizing H-mean with constraints. This shows the superiority of the proposed SelMix framework.

\begin{table*}[ht!]
\caption{Comparison of methods for optimization of H-mean with coverage constraints.}
\begin{adjustbox}{width=\columnwidth,center}
\begin{tabular}{lcc|cc|cc|cc|cc}\toprule

& \multicolumn{2}{c}{\textbf{CIFAR-10}} & \multicolumn{2}{c}{\textbf{CIFAR-10}} & \multicolumn{2}{c}{\textbf{CIFAR-10}} & \multicolumn{2}{c}{\textbf{CIFAR-100}} & \multicolumn{2}{c}{\textbf{STL-10}} \\
& \multicolumn{2}{c}{$\rho_l=100, \rho_u=\frac{1}{100}$} & \multicolumn{2}{c}{$\rho_l=\rho_u=100$} & \multicolumn{2}{c}{$\rho_l=100, \rho_u=1$} & \multicolumn{2}{c}{$\rho_l=\rho_u=10$} & \multicolumn{2}{c}{$\rho_l=10, \rho_u=\text{NA}$}\\
& \multicolumn{2}{c}{$N_1=1500, M_1=30$} & \multicolumn{2}{c}{$N_1=1500, M_1=3000$} & \multicolumn{2}{c}{$N_1=1500, M_1=3000$} & \multicolumn{2}{c}{$N_1=150, M_1=300$} & \multicolumn{2}{c}{$N_1=450, \sum_i M_i = 100\text{k}$} \\
\cmidrule(lr){2-3}\cmidrule(lr){4-5}\cmidrule(lr){6-7}\cmidrule(lr){8-9}\cmidrule(lr){10-11}
           & HM  & Min Cov. & HM  & Min Cov. & HM  & Min Cov. & HM & Min H-T Cov.  & HM & Min Cov.\\\midrule
DARP & \s{78.1}{0.9} & \s{0.065}{3e-3} & \s{81.9}{0.5} & \s{0.070}{3e-3} & \s{83.5}{0.8} & \s{0.067}{3e-3} & \s{48.7}{1.3} & \s{0.0040}{2e-3} & \s{74.0}{0.5} & \s{0.058}{2e-3}\\
CReST & \s{65.8}{1.5} & \s{0.040}{5e-3} & \s{81.0}{0.7} & \s{0.073}{5e-3} & \s{84.6}{0.2} & \s{0.075}{7e-4} & \s{48.3}{0.2} & \s{0.0083}{2e-4} & \s{67.1}{1.1} & \s{0.066}{2e-3}\\
DASO & \s{78.1}{0.1} & \s{0.072}{3e-3} & \s{83.5}{0.3} & \s{0.083}{1e-3} & \s{88.4}{0.5} & \s{0.089}{1e-3} & \s{49.1}{0.7} & \s{0.0063}{3e-4} & \s{76.6}{1.1} & \s{\underline{0.083}}{3e-3}\\
ABC & \s{\underline{79.6}}{0.3} & \s{0.073}{5e-3}  &\s{\underline{84.6}}{0.5} & \s{0.086}{3e-3} & \s{88.2}{0.7} & \s{0.086}{1e-3} & \s{\underline{50.1}}{1.2} & \s{0.0089}{2e-4} & \s{74.7}{1.5} & \s{0.079}{7e-3}\\\midrule
CSST & \s{76.5}{4.9} & \s{\underline{0.081}}{6e-3} & \s{76.9}{0.2} & \s{\underline{0.093}}{3e-4} & \s{86.7}{0.7} & \s{\underline{0.092}}{1e-3} & \s{47.7}{0.8} & \s{\underline{0.0098}}{2e-4} & \s{\underline{78.3}}{2.6} & \s{0.081}{6e-3}\\
FixMatch (LA)    & \s{78.3}{0.8} &  \s{0.064}{1e-3} & \s{76.7}{0.1} & \s{0.056}{3e-3} & \s{\underline{89.3}}{0.2} & \s{0.086}{1e-3} & \s{45.5}{2.1} & \s{0.0053}{1e-4} & \s{74.6}{1.7} & \s{0.066}{5e-3}\\
\quad \textbf{w/SelMix (Ours)} & \s{\textbf{81.0}}{0.8} & \s{\textbf{0.095}}{1e-3} &\s{\textbf{85.1}}{0.1} & \s{\textbf{0.095}}{1e-3} & \s{\textbf{91.3}}{0.7} & \s{\textbf{0.096}}{1e-3} & \s{\textbf{53.8}}{0.5} & \s{\textbf{0.0098}}{1e-4} & \s{\textbf{79.1}}{1.2} & \s{\textbf{0.088}}{1e-3}\\\bottomrule
\end{tabular}
\end{adjustbox}

\end{table*}

\section{Evolution of Gain Matrix with Training }
\label{app:gain-matrix-train}

From the above collection of gain matrices, which are taken from different time steps of the training phase, we observe that  ($|\text{max}(\bG^{(t)})|)$ of the gain matrix decreases with increase in SGD steps $t$, and settles on a negligible value by the time training is finished. This could be attributed to the fact that as the training progresses, the marginal improvement of the gain matrix decreases. 

Another phenomenon we observe is that initially, during training, only a few mixups (particularly tail class ones) have a disproportionate amount of gain associated with them. A downstream consequence of this is that the sampling function $\mathcal{P}_{\text{SelMix}}$ prefers only a few $(i, j)$ mixups. Whereas, as the training continues, it becomes more exploratory rather than greedily exploiting the mixups that give the maximum gain at a particular timestep. 
\begin{figure}[!t]
    \centering
    \subfloat[\centering Initial Stage ($t=0$ SGD steps)]{{\includegraphics[width=0.3\textwidth]{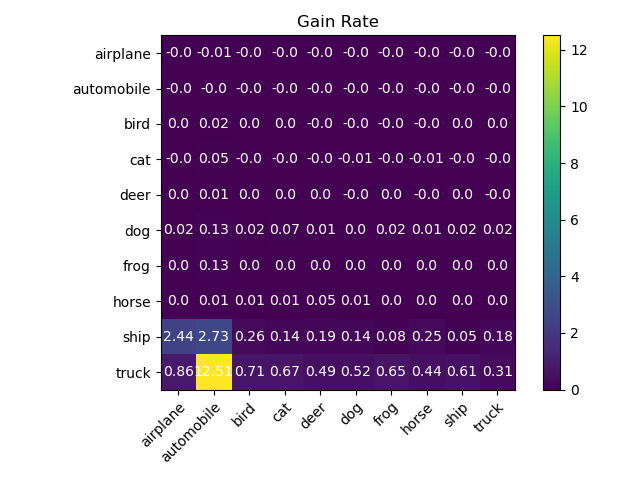} }}%
    \quad
    \subfloat[\centering Intermediate Stage ($t=5k$ SGD steps)]{{\includegraphics[width=0.3\textwidth]{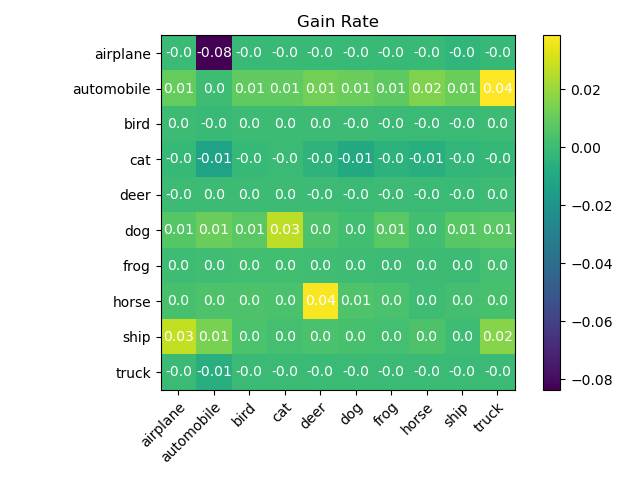} }}%
    \quad
    \subfloat[\centering Final Stage ($t=10k$ SGD steps)]{{\includegraphics[width=0.3\textwidth]{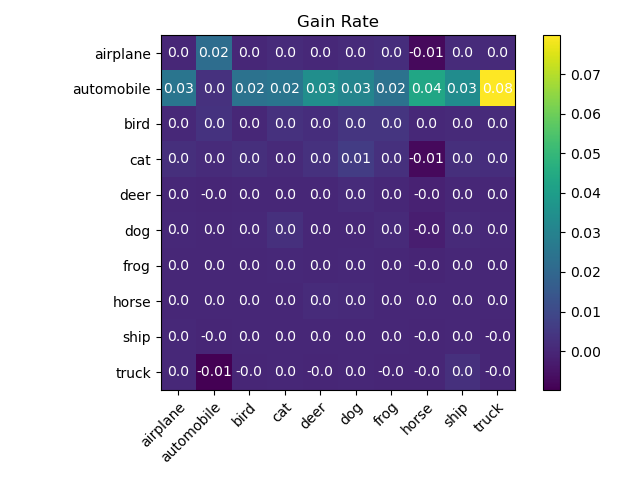} }}%
    \caption{Evolution of gain matrix for mean recall optimized run for CIFAR-10 LT ($\rho_l = \rho_u$).}%
    \label{fig:mean_gain}%
\end{figure}

\begin{figure}[!t]
    \centering
    \subfloat[\centering Initial Stage ($t=0$ SGD steps)]{{\includegraphics[width=0.3\textwidth]{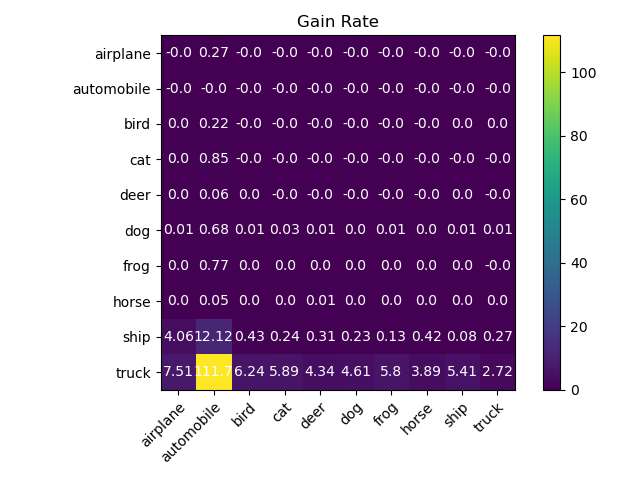} }}%
    \quad
    \subfloat[\centering Intermediate Stage ($t=3k$ SGD steps)]{{\includegraphics[width=0.3\textwidth]{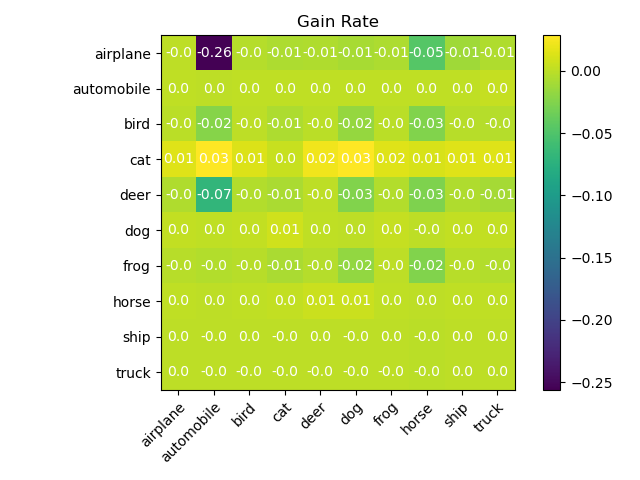} }}%
    \quad
    \subfloat[\centering Final Stage ($t=10k$ SGD steps)]{{\includegraphics[width=0.3\textwidth]{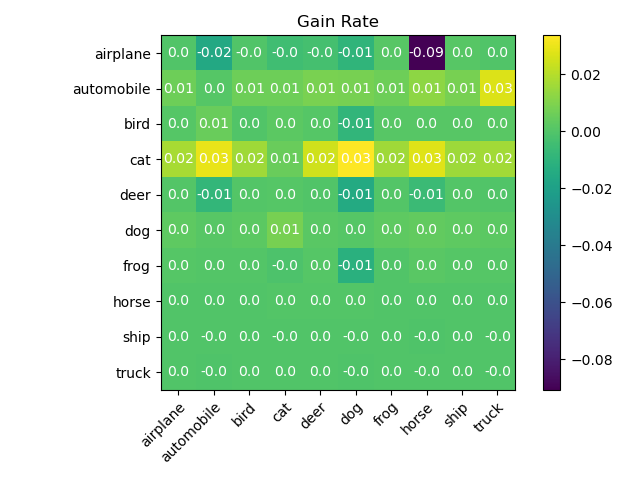} }}%
    \caption{Evolution of gain matrix for min. recall optimized run for CIFAR-10 LT ($\rho_u = \rho_l$).}%
    \label{fig:min_gain}%
\end{figure}

\section{Analysis}
\label{app:analysis}

We use a subset of Min. Recall, Mean Recall and Mean Recall with constraints for analysis.

\begin{table}[!t]
    \begin{minipage}{0.5\textwidth}
        \caption{Results for sampling policies for $\mP_{\text{Mix}}$ (CIFAR-10 LT, semi-supervised) $\rho=100$. }
    
    \centering
    \begin{adjustbox}{max width=0.9\columnwidth}
    \begin{tabular}{lcccc}
    \hline
    \multirow{2}{*}{Method} & Mean & Min &  Min  &  Mean  \Tstrut\\ 
    & \textit{ Recall } & \textit{Coverage} & \textit{Recall } &  \textit{ Recall }\\
    \hline 
        Uniform Policy  & 83.3 & 0.072 & 70.5 &  83.3 \\ 

        Greedy Policy & 83.6 & 0.093 & 78.2 &  81.8 \\
                SelMix Policy & \textbf{84.9} &\textbf{0.094} & \textbf{79.1} & \textbf{84.1} \\ 
    \hline
    \end{tabular}
    \end{adjustbox}
    \label{tab:policy-comparison}
    \end{minipage}
    \begin{minipage}{0.5\textwidth}
         \caption{Comparison of finetuning the feature extractor vs only the linear classification layer (CIFAR-10 LT, semi-supervised) $\rho=100$}
    
    \centering
    \begin{adjustbox}{max width=0.8\columnwidth}
    \begin{tabular}{lcccc}
    \hline
    \multirow{2}{*}{Method} & Mean & Min &  Min  &  Mean  \Tstrut\\ & \textit{ Recall } & \textit{Coverage} & \textit{Recall } &  \textit{ Recall }\\
    \hline 
        Frozen $g$  & 83.5 & 0.089 & 77.3 &  84.1 \\ 
        Finetuning $g$ & \textbf{85.4} &\textbf{0.095} & \textbf{79.1} &  \textbf{84.5} \\
    \hline
    \end{tabular}
    \end{adjustbox}
    \label{tab:finetune-vs-frozen}
   
    \end{minipage}
    
\end{table}

\noindent \textbf{a) Sclability and Computation.} To demonstrate scalability of our method we show results on ImageNet100-LT for semi and ImageNet-LT for fully supervised settings. Similar to other datasets, we find in Table~\ref{tab:large-data} SelMix is able to improve over SotA methods across objectives. Further, SelMix has the same time complexity as CSST~\cite{rangwani2022costsensitive} baseline. (Refer Sec.~\ref{app:computation} \& Sec.~\ref{app:complexity}). 

\noindent \textbf{b) Comparison of Sampling Policies.}
We compare the sampling policies ($\mP_{\text{Mix}}$): the uniform mixup policy, the greedy policy of selecting the max gain mixup and the SelMix policy (Table \ref{tab:policy-comparison}). We find that other policies in comparison to SelMix are unstable and lead to inferior results. We compare furtherthe performance of a range of sampling distribution by varying the inverse distribution temperature $s$ in $\mathcal{P}_{\mathrm{SelMix}}$. We find that intermediate values of $s = 10$ work better in practice (Fig.~\ref{fig:mixup_comparison}). 

\noindent \textbf{c) Fine-Tuning.} In Table \ref{tab:finetune-vs-frozen} we observe that fine-tuning $g$ leads to improved results in comparison to keeping it frozen. We further show in App. \ref{app:computation} that fine-tuning with SelMix is computationally cheap compared to CSST for optimizing a particular non-decomposable objective.

\begin{figure}[!ht]
    \centering
    \includegraphics[width= 0.8\textwidth]{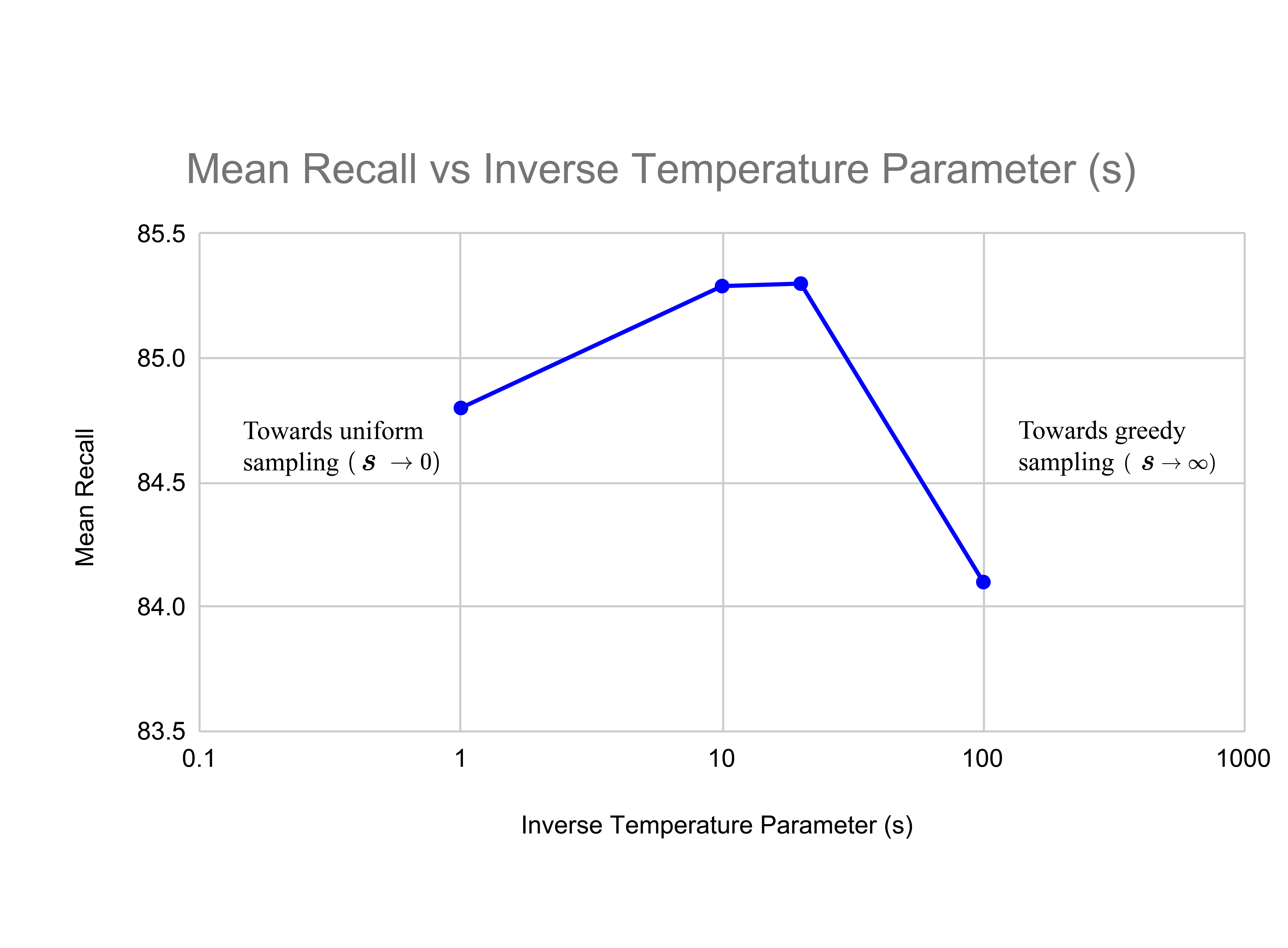}
    \caption{We show ablation on the inverse temperature parameter ($s$) v/s the performance on the mean recall. For a mean recall optimized run a very small $s$ yields a sampling function close to a uniform sampling, whereas a very large $s$ ends up close to a greedy sampling strategy. }
    \label{fig:mixup_comparison}
\end{figure}

\subsection{Detailed Performance Analysis of SelMix Models}
\label{app:detailed-analysis}
As in SelMix, we have provided results for fine-tuned models for optimizing specific metrics on CIFAR-10 ($\rho_l = \rho_u$) in Table~\ref{tab:matched-results}. In this section, we analyze all these specific models on all other sets of metrics. We tabulate our results in Table \ref{tab:full_metric_runs}. It can be observed that when the model is trained for the particular metric for the diagonal entries, it performs the best on it. Also, we generally find that all models trained through SelMix reasonably perform on other metrics. This demonstrates that the models produced are balanced and fair in general. As a rule of thumb, we would like the users to utilize models trained for constrained objectives as they perform better than others cumulatively.

\begin{table}[!ht]

\caption{Values of all metric values for individually optimized runs for CIFAR-10 LT ($\rho_l = \rho_u$)}
\label{tab:full_metric_runs}
\begin{adjustbox}{width=0.8\columnwidth,center}
\begin{tabular}{c|cccccc}\toprule
     \diagbox{Optimized On}{Observed Metric}      & Mean Rec.  & Min. Rec. & HM & GM &  Mean Rec./Min Cov. & HM/Min Cov.\\\midrule
Mean Rec.    & 85.4 & 77.6 & 85.0 & 85.1 & 85.4/0.089 & 85.0/0.089\\
Min. Rec. & 84.2 & 79.1 & 84.1 & 84.2 &  84.2/0.091 & 84.1/0.091 \\
HM & 85.3 & 77.7 & 85.1 & 85.2 &  85.3/0.091 & 85.1/0.091 \\
GM & 85.3 & 77.5 & 85.1 & 85.3 & 85.3/0.091 & 85.1/0.091\\
Mean Rec./Min. Cov. & 85.7 & 75.9 & 84.7 & 84.8 &  85.7/0.095 & 84.7/0.095\\
HM/Min Cov. & 85.1 & 76.2 & 84.8 & 84.9 &  85.1/0.095 & 84.8/0.095\\\bottomrule
\end{tabular}
\end{adjustbox}

\end{table}

\subsection{Comparison between FixMatch and FixMatch (LA)}
\label{app:fix-vs-fixla}
We find that using logit-adjusted loss helps in training feature extractors, which perform much superior in comparison to the vanilla FixMatch Algorithm (Table \ref{tab:backbone-scaling}). However, our method SelMix is able to improve both the FixMatch and the FixMatch (LA) variant. We advise users to use the FixMatch (LA) algorithm for better results.

 \begin{table}[!ht]
    
    \caption{Comparison of the FixMatch and FixMatch (LA) methods with SelMix.}
    
    \centering
    \begin{adjustbox}{max width=0.8\columnwidth}
    \begin{tabular}{lcccc}
    \hline
    \multirow{2}{*}{Method} & Mean & Min &  Min  &  Mean  \Tstrut\\ & \textit{ Recall } & \textit{Coverage} & \textit{Recall } &  \textit{ Recall }\\
    \hline 
        FixMatch & 76.8 & 0.037 & 36.7 &  76.8 \\
        w/ SelMix  & 84.7 & 0.094 & 78.8 &  82.7 \\
        FixMatch (LA) & 82.6 & 0.065 &  63.6 &  82.6 \\ 
        w/ SelMix  & 85.4 & 0.095 & 79.1 &  84.1 \\
    \hline
    \end{tabular}
    \end{adjustbox}
    \label{tab:backbone-scaling}
    
 \end{table}

\subsection{Variants of Mixup}
\label{app:variants-mixup}
As SelMix is a distribution on which class samples $({i,j})$ to be mixed up, it can be easily be combined with different variants of mixup~\cite{yun2019cutmix, kim2020puzzle}. To demonstrate this, we replace the feature mixup that we perform in SelMix, with CutMix and PuzzleMix. Table~\ref{tab:mixup_variants} contains results for various combinations for optimizing the Mean Recall and Min Recall across cases. We observe that SelMix can optimize the desired metric, even with CutMix and PuzzleMix. However, the feature mixup we performed originally in SelMix works best in comparison to other variants. This establishes the complementarity of SelMix with the different variants of Mixup like CutMix, PuzzleMix, etc., which re-design the procedure of mixing up images.

 \begin{table}[!ht]
    \vspace{-0.5em}
    \caption{Comparison of SelMix when applied to various Mixup variants.}
    \label{tab:mixup_variants}
    \centering
    \begin{adjustbox}{max width=0.8\columnwidth}
    \begin{tabular}{lcc}
    \hline
    \multirow{2}{*}{Method} & Mean & Min  \Tstrut\\ & \textit{ Recall } &  \textit{Recall } \\
    \hline 
        FixMatch & 79.7$_{\pm 0.6}$ & 55.9$_{\pm 1.9}$   \\
        w/ SelMix (CutMix)  & 84.8$_{\pm 0.2}$ & 75.3$_{\pm 0.1}$  \\
        w/ SelMix (PuzzleMix) & 85.1$_{\pm 0.3}$ & 75.2$_{\pm 0.1}$  \\ 
        w/ SelMix (Features-Ours)  & 85.4$_{\pm 0.1}$ & 79.1$_{\pm 0.1}$  \\
    \hline
    \end{tabular}
    \end{adjustbox}
    
 \end{table}

\section{Computational Requirements}
\label{app:comp-req}
\begin{table*}[ht!]
\centering
\caption{Comparison of time taken across datasets, for the calculation of Gain using SelMix (Alg.~\ref{alg:ours}) was done on GPU (NVIDIA RTX A5000).}
\label{tab:time-required}
\begin{tabular}{l*{3}{c}}\toprule
{Dataset} & CIFAR-10 LT ($\rho=100$) & (CIFAR-100 LT $\rho=100$) & Imagenet-1k LT  \\ \midrule
 & 0.02 sec. & 1.3 sec. & 124 sec. \\ \bottomrule
\end{tabular}
\end{table*}
\label{app:computation}
The experiments were done on Nvidia A5000 GPU (24 GB). While the fine-tuning was done on a single A5000, the pre-training was done using PyTorch data parallel on 4XA5000. The pre-training was done until no significant change in metrics was observed and the fine-tuning was done for 10k steps of SGD with a validation step every 50 steps. A major advantage of SelMix over  CSST is that the process of training a model optimized for a specific objective requires end to end training which is computationally expensive($\sim$10 hrs on CIFAR datasets). Our finetuning method takes a fraction ($\sim$1hr on CIFAR datasets) of  what it requires in computing time compared to CSST. An analysis for computing the Gain through Alg.~\ref{alg:ours}, is provided in Table \ref{tab:time-required}. We observe that even for the ImageNet-1k dataset, the gain calculation doesn't require large amount of GPU time. Further, an efficient parallel implementation across classes can further reduce time significantly.

\section{Limitation of Our Work}
\label{app:limitations}
In our current work, we mostly focus on our algorithm's correctness and empirical validity of SelMix across datasets.
Another direction that could be further pursued is improving the algorithm's performance by efficiently parallelizing the operations across GPU cores, as the operations for each class are independent of each other. The other direction for future work could be characterization of the classifier obtained through SelMix, using a generalization bound.
Existing work \cite{zhang2021does} on the mixup method for accuracy optimization 
showed that 
learning with the vicinal risk minimization using mixup leads to
a better generalization error bound
than the empirical risk minimization.
It would be interesting future work to show a similar result for SelMix.

\addedtext{

\section{Additional Related Works}
\label{app:rel_work_extra}
\vspace{1mm} \noindent \textbf{Selective Mixup for Robustness.} The paper SelecMix~\cite{hwang2022selecmix} creates samples for robust learning in the presence of bias by mixing up samples with conflicting features, with samples with bias-aligned features. The paper demonstrates that learning on these samples leads to improved out-of-distribution accuracy, even in the presence of label noise. This paper selects the samples to mixup based on the similarity of the labels, to improve mixup performance on regression tasks. Existing works \cite{hwang2022selecmix, yao2022improving,palakkadavath2022improving} show that mixups help train classifiers with better domain generalization. \citet{teney2023selective} show that resampling-based techniques often come close in performance to mixup-based methods when utilizing the implicit sampling technique used in these methods. It has been shown that mixup improves the feature extractor and can also be used to train more robust classifiers \cite{hwang2022selecmix}. \citet{palakkadavath2022improving} show that it is possible to generalize better to unknown domains by making the model's feature extractor invariant to both variation in the domain and any interpolation of a sample with similar semantics but different domains.

\vspace{1mm} \noindent \textbf{Black-Box Optimization.} We further note that there are some recent works~\cite{li2023earning, wierstra2014natural} which aim to fine-tune a model based on local target data for an specific objective, however these methods operate in a black-box setup whereas SelMix works in a white-box setup with full access to model and its gradients.

}

\end{document}